\documentclass[twoside,11pt]{article}

\usepackage{blindtext}

\usepackage[preprint]{jmlr2e}

\usepackage{lastpage}

\ShortHeadings{Highway Reinforcement Learning}{Wang, Strupl, Faccio, Wu, Liu,  Grudzien, Tan, Schmidhuber}
\firstpageno{1}

\usepackage{arydshln}

\usepackage{amsmath,amsfonts,bm}

\def\eqref#1{equation~\ref{#1}}

\def\1{\bm{1}}

\DeclareMathAlphabet{\mathsfit}{\encodingdefault}{\sfdefault}{m}{sl}
\SetMathAlphabet{\mathsfit}{bold}{\encodingdefault}{\sfdefault}{bx}{n}

\def\sZ{{\mathbb{Z}}}

\newcommand{\E}{\mathbb{E}}

\newcommand{\R}{\mathbb{R}}

\newcommand{\softmax}{\mathrm{softmax}}

\DeclareMathOperator*{\argmax}{arg\,max}

\NewDocumentCommand{\rangeInt}{ O{a} O{b} }{ [ #1..#2 ] }

\usepackage{url}

\usepackage[dvipsnames]{xcolor}

\definecolor{usccardinal}{rgb}{0.6, 0.0, 0.0}

\newif\ifshowtmp
\showtmptrue

\ifshowtmp
    \newcommand{\todom}[1]{{\textcolor{usccardinal}{(todo minor: #1)}}}
	\newcommand{\todo}[1]{{\textcolor{red}{(TODO: #1)}}}
    \newcommand{\tocheck}[1]{{\textcolor{red}{(TO CHECK: #1)}}}

	\newcommand{\T}[1]{\textcolor{usccardinal}{#1}}

    \makeatletter
    \def\w#1 {\T{#1}~}
    \makeatother

\else
    \newcommand{\todom}[1]{}
    \newcommand{\tocheck}[1]{}
	
	\newcommand{\T}{}
	\newcommand{\todo}[1]{}

    \newcommand{\w}{}
\fi
\newcommand{\todoLessImportant}[1]{}
\newcommand{\removeForSavingSpace}[1]{}

\newif\ifPPT
\PPTtrue

\newif\ifshowstruct
\showstructtrue
\showstructfalse %

\ifshowstruct
	\newcommand{\tstructz}[1]{[#1] }
\else
	\newcommand{\tstructz}[1]{}
\fi

\label{packages}
\usepackage{mfirstuc}
\usepackage{graphicx}%
\usepackage{subcaption}
\captionsetup[subfigure]{subrefformat=simple,labelformat=simple}

\usepackage{caption}

\captionsetup[table]{position=bottom}
\usepackage{multirow}
\usepackage{multicol}
\usepackage{array}
\usepackage{booktabs}%

\usepackage{amsmath}
\usepackage{amssymb, graphicx, url, mathtools}%
\usepackage[overload]{empheq}

\usepackage{algorithmic,algorithm}   %

\usepackage{footnote}
\makesavenoteenv{tabular}
\makesavenoteenv{table}
\usepackage{pdfpages}
\usepackage{textcomp}
\usepackage{chngcntr}
\usepackage{verbatim}

\usepackage{xparse,calc}
\usepackage{afterpage}
\usepackage{comment}

\usepackage{hyperref}
\usepackage{cleveref}
\hypersetup{
    colorlinks,
    linkcolor={black},
    citecolor={black},
    urlcolor={black}
}

\label{Command}

\counterwithin*{question}{section}

\makeatletter
\def\renewtheorem#1{%
  \expandafter\let\csname#1\endcsname\relax
  \expandafter\let\csname c@#1\endcsname\relax
  \gdef\renewtheorem@envname{#1}
  \renewtheorem@secpar
}
\def\renewtheorem@secpar{\@ifnextchar[{\renewtheorem@numberedlike}{\renewtheorem@nonumberedlike}}
\def\renewtheorem@numberedlike[#1]#2{\newtheorem{\renewtheorem@envname}[#1]{#2}}
\def\renewtheorem@nonumberedlike#1{  
\def\renewtheorem@caption{#1}
\edef\renewtheorem@nowithin{\noexpand\newtheorem{\renewtheorem@envname}{\renewtheorem@caption}}
\renewtheorem@thirdpar
}
\def\renewtheorem@thirdpar{\@ifnextchar[{\renewtheorem@within}{\renewtheorem@nowithin}}
\def\renewtheorem@within[#1]{\renewtheorem@nowithin[#1]}
\makeatother

\renewtheorem{theorem}{Theorem}
\renewtheorem{remark}{Remark}
\renewtheorem{definition}{Definition}
\renewtheorem{example}{Example}
\newtheorem{mdpexample}{MDP Example}
\crefname{mdpexample}{MDP Example}{MDP Examples}

\usepackage{thmtools} 
\usepackage{thm-restate}

\newcommand{\EE}{\mathbb{E}}

\usepackage{xstring}

\newcolumntype{+}{>{\global\let\currentrowstyle\relax}}
\newcolumntype{Y}{>{\currentrowstyle}}\label{key}
\newcolumntype{-}{>{\currentrowstyle}}

\newcommand{\breakingcomma}{
  \begingroup\lccode`~=`,
  \lowercase{\endgroup\expandafter\def\expandafter~\expandafter{~\penalty0 }}
}

\DeclareFontFamily{U}{mathx}{\hyphenchar\font45}
\DeclareFontShape{U}{mathx}{m}{n}{<-> mathx10}{}
\DeclareSymbolFont{mathx}{U}{mathx}{m}{n}
\DeclareMathAccent{\widebar}{0}{mathx}{"73}

\label{Settings}

\usepackage{makecell}

\Crefname{figure}{Fig.}{Fig.}
\Crefname{equation}{Eq.}{Eq.}
\Crefname{assumption}{Assumption}{assumption}

\newcommand{\crefnop}[1]{\namecref{#1} \ref{#1}}

\newcommand{\Crefnop}[1]{\nameCref{#1} \ref{#1}}

\newcommand{\reftodo}[1]{\todo{cref}}

\newcommand{\Creftodo}[1]{\todo{cref}}
\newcommand{\creftodo}[1]{\todo{cref}}
\newcommand{\citetodo}[1]{\todo{cite}}

\newcommand{\Creftd}[1]{\todo{cref}}
\newcommand{\creftd}[1]{\todo{cref}}
\newcommand{\citetd}[1]{\todo{cite}}

\usepackage[export]{adjustbox}

\usepackage{xcolor}
\definecolor{bluemy}{HTML}{1f77b4}

\usepackage{enumitem}

    \usepackage{xr}
    \makeatletter
    \newcommand*{\addFileDependency}[1]{
      \typeout{(#1)}
      \@addtofilelist{#1}
      \IfFileExists{#1}{}{\typeout{No file #1.}}
    }
    \makeatother

\makeatletter
\def\widebreve{\mathpalette\wide@breve}
\def\wide@breve#1#2{\sbox\z@{$#1#2$}%
     \mathop{\vbox{\m@th\ialign{##\crcr
\kern0.08em\brevefill#1{0.8\wd\z@}\crcr\noalign{\nointerlineskip}%
                    $\hss#1#2\hss$\crcr}}}\limits}
\def\brevefill#1#2{$\m@th\sbox\tw@{$#1($}%
  \hss\resizebox{#2}{\wd\tw@}{\rotatebox[origin=c]{90}{\upshape(}}\hss$}
\makeatletter

\newcommand\scaleMathInside[2]{\scalebox{#1}{\mbox{\ensuremath{\displaystyle #2}}}}

\usepackage{accents}

\usepackage[makeroom]{cancel}

\definecolor{blueMy}{HTML}{1f77b4}
\definecolor{orangeMy}{HTML}{ff7f0e}
\definecolor{greenMy}{HTML}{2ca02c}
\definecolor{redMy}{HTML}{d62728}
\definecolor{purpleMy}{HTML}{9467bd}
\definecolor{brownMy}{HTML}{8c564b}

\NewDocumentCommand{\characteristic}{ O{} }{\emph{(#1)}}

\def\softmax{\mathop{smax}\nolimits^{\alpha}}

\NewDocumentCommand{\nstepReturn}{O{Q} O{n}   }{ {G}_{#1}^{#2} }
\NewDocumentCommand{\nstepReturnAvg}{O{Q} O{n}   }{ {\bar G}_{#1}^{#2} }

\def\lanpaper/{[Lan, 2020]}
\def\youngpaper/{[Young, 2019]}

\NewDocumentCommand{\K}{ O{k} }{ _{#1} }

\newcommand{\sSpace}{  \mathcal{S}} 
\newcommand{\aSpace}{  \mathcal{A}}

\definecolor{expectationPolicyStep}{named}{black}
\definecolor{maxStepOneN}{named}{red}
\definecolor{maxStep}{named}{maxStepOneN}
\definecolor{maxPolicyStep}{named}{blue}
\definecolor{softmaxPolicyStep}{HTML}{4682B4}
\definecolor{stepFirst}{named}{pink}
\NewDocumentCommand{\EPolicyStep}{ O{expectationPolicyStep}  }
{{
    \color{#1}
    \E_{ \pi \sim \policyDist,  n  \sim \stepDist }
}}

\NewDocumentCommand{\MaxPolicyMaxStep}{  }
{{
    { \color{maxPolicyStep} 
       \max\limits_{\pi \in \policySet } 
    }\,
   {
    \color{maxPolicyStep} 
    \max\limits_{ n \in \stepSet }
   }
    {
        \color{maxStepOneN}
        \max\limits_{n' \in \{1, n\} }
    }
}}

\NewDocumentCommand{\SoftmaxPolicySoftmaxStep}{ }
{{
    { \color{softmaxPolicyStep} 
    \underset{\pi\in \policySet } {\mathop {smax}\nolimits^{\alpha} }
    }
   { 
   \color{softmaxPolicyStep}
    \underset{n\in  \stepSet } {\mathop {smax}\nolimits^{\alpha} }
   }
    {
        \color{maxStepOneN}
        \max\limits_{n' \in \{1, n\} }
    }
}}

\NewDocumentCommand{\dataset}{ O{m} O{s,a} }{ {\cal D}_{#2}^{\ifthenelse{\equal{#1}{}}{}{(#1)}} }
\NewDocumentCommand{\traj}{ O{\pi} O{s_t,a_t} }{ {\tau}_{#2}^{#1} }
\NewDocumentCommand{\trajnew}{ O{\pi} O{s_t,a_t} O{n} }{ {\tau}_{#2}^{#3} \sim {#1} }

\def\step/{lookahead depth}
\def\steps/{lookahead depths}
\def\stepText/{lookahead depth}
\def\stepsText/{lookahead depths}

\def\stepSet{{\cal N}}
\def\stepSetMath/{ $\stepSet$ } 

\def\stepSetText/{set of \steps/}
\def\stepSetTextMath/{set of \steps/ $\stepSet$}
\def\StepSetText/{Set of \steps/}

\def\policyText/{behavioral policy}
\def\policiesText/{behavioral policies}
\def\policySetText/{set of behavioral policies}
\def\PolicySetText/{Set of behavioral policies}
\def\policyDistText/{selection distribution of behavioral policies }

\NewDocumentCommand{\policyDist}{ O{} O{\policySet} }{ \mathcal{P}^{#1}_{#2} }

\NewDocumentCommand{\stepDist}{ O{} O{\stepSet} }{ \mathcal{P}^{#1}_{#2} }

\NewDocumentCommand{\policySet}{ }{ \widehat{\Pi} }

\NewDocumentCommand{\policySetModelFree}{ }{ {\mathcal{M}} }

\def\policySetMath/ { $\policySet$ }

\def\policySetTextMath/{\policySetText/ $\policySet$}
\def\policyDistTextMath/{\policyDistText/ $\policyDist$}

\NewDocumentCommand{\dist}{ O{} O{} }{ \mathcal{P}^{#1}_{#2} }

\NewDocumentCommand{\bellmanOperator}{ O{} O{} }{ \mathcal{B}^{#1}_{#2} }

\NewDocumentCommand{\bellmanOptOperator}{ O{} O{} }{ \mathcal{B}^{#1}_{#2} }
\NewDocumentCommand{\BOOperator}{ O{} O{} }{ \mathcal{B}^{#1}_{#2} }
\NewDocumentCommand{\multistepBOOperator}{ O{\policyDist} O{\stepDist} }{\overline{\mathcal{B}}^{#1}_{#2} }
\def\multistepBOOperatorText/{Multi-step BO Operator}
\def\multistepBOOperatorTextMath/{Multi-step BO Operator $\multistepBOOperator[][]$}

\def\multistepBEOperatorText/{Multi-step IS-based BE Operator}
\NewDocumentCommand{\multistepBEOperator}{ O{\policyDist} O{\stepDist} O{} }{ \overset{ {\tiny #3} }{ \widebreve{ \mathcal{B} } }^{#1}_{#2} }
\def\multistepBEOperatorTextMath/{\multistepBEOperatorText/ $\multistepBEOperator[][]$}

\def\stepIndex{{\check{n}}}
\def\stepIndexTwo{{\check{n}}^\prime}

\def\stStepSet/{\text{s.t. }}

\def\highwayPrefix/{Highway}
\def\highwayPrefixFull/{Highway}

\def\highwayQLearning/{\highwayPrefix/ Q-Learning}
\def\highwayDQN/{\highwayPrefix/ DQN}

\def\softHighwayDQN/{Soft \highwayPrefix/ DQN}
\def\nstepHighwayDQN/{$n$-step \highwayPrefix/ DQN}

\def\highwayQLearningFull/{\highwayPrefixFull/ Q-Learning}
\def\highwayDQNFull/{\highwayPrefixFull/ DQN}

\def\highwayValueIteration/{\highwayPrefixFull/ Value Iteration}

\def\highwayOperatorText/{\highwayPrefix/ Operator}

\def\highwayGenOperatorText/{\highwayPrefix/ Generalized Operator}

\NewDocumentCommand{\policy}{ O{} }{   \pi^{(#1)}  }
\NewDocumentCommand{\highwayOperator}{ O{\policyDist} O{\stepDist} }{   {\overline{{\mathcal{G}}}}^{ {#1} }_{ {#2} }   }

\NewDocumentCommand{\highwayOperatorQ}{ O{\policyDist} O{\stepDist} }{   \overline{{\mathcal{G}}}^{ {#1} }_{ {#2} }   }
\NewDocumentCommand{\wronghighwayOperator}{ O{\policyDist} O{\stepDist} }{   {\bcancel{\overline{\mathcal{G}}} }^{ {#1} }_{ {#2} }   }

\NewDocumentCommand{\highwayOptOperator}{ O{\policySet} O{\stepSet} }{   {\accentset{\mbox{\large\bfseries .}}{\mathcal{G}}}^{ {#1} }_{ {#2} }   }
\def\highwayOptOperatorText/{Highway Optimality Operator}
\def\highwayOptOperatorTextMath/{Highway Optimality Operator $\highwayOptOperator[][]$}
\def\highwayOperatorTextMath/{\highwayOperatorText/ $\highwayOperator[][]$}
\def\highwayGenOperatorTextMath/{\highwayGenOperatorText/ $\highwayOperator[][]$}

\def\highwaySoftmaxOperatorText/{\highwayPrefix/ Softmax Operator}
\NewDocumentCommand{\highwaySoftmaxOperatorMath}{ O{\stepSet} O{\policySet,\alpha} }{   {\widetilde{\mathcal{G}}}_{ {#1} }^{ {#2} }   }
\def\highwaySoftmaxOperatorTextMath/{\highwaySoftmaxOperatorText/ $\highwaySoftmaxOperatorMath$}

\def\highwayEquation/{\highwayPrefix/ Equation}
\def\highwayEquationFull/{\highwayPrefixFull/ Bellman Optimality Equation}

\def\QQQQ{}

\def\highwayQQPrefix/{\highwayPrefix/ $Q$}
\def\highwayQQPrefixFull/{\highwayPrefixFull/ $Q$}
\def\highwayQQOperatorText/{\highwayQQPrefix/ Operator}
\def\highwayQQOperatorTextFull/{\highwayQQPrefixFull/ Bellman Optimality Operator}
\def\highwayQQOperatorTextMath/{\highwayQQOperatorText/ $\highwayQQOperator$}

\def\highwayQQEquation/{\highwayQQPrefix/ Equation}
\def\highwayQQEquationFull/{\highwayQQPrefixFull/ Bellman Optimality Equation}

\def\greedyReturn/{\highwayPrefix/ return}
\def\GreedyReturn/{\HighwayPrefix Return}

\NewDocumentCommand{\distance}{ O{} }{ \mathbf{d}_{#1}^Q(s,a) } 
\NewDocumentCommand{\distanceAll}{ O{} }{ d_{#1}^{Q} } 

\NewDocumentCommand{\assumptionAn}{ O{n} O{\widehat\Pi} O{\pi^*} }{Assumption A$(#2,#1)$}

\NewDocumentCommand{\stateSetNStep}{ O{s} O{\pi} O{n} }{ \mathcal{S}_{#1, #3}^{#2} }

\NewDocumentCommand{\ourExpOpt}{ O{\widehat{\Pi}} O{\mathcal{N}} }{ \overline{\mathcal{G}}_{#1}^{#2} }
\def\ourExpOptText/{Expectation Highway Operator}

\def\oneStepOperatorText/{BO operator}
\NewDocumentCommand{\oneStepOperatorMath}{ O{}  }{ \multiStepOperatorMath[#1][] }
\NewDocumentCommand{\oneStepOperator}{ O{}  }{ \multiStepOperatorMath[#1][] }
\def\oneStepOperatorTextMath/{\oneStepOperatorText/ $\oneStepOperatorMath$}

\def\bellmanOptText/{Bellman Optimality Operator}
\NewDocumentCommand{\bellmanOpt}{ O{}  }{ \mathcal{B}^{#1} }
\def\bellmanExpText/{Bellman Expectation Operator}
\NewDocumentCommand{\bellmanExp}{ O{#1}  }{ \bellmanOpt[#1] }
\NewDocumentCommand{\bellmanIS}{ O{} O{}  }{{\breve{\mathcal{B}}}^{#1}_{#2} }

\def\oneStepQQOperatorText/{Bellman Optimality Operator}
\NewDocumentCommand{\oneStepQQOperator}{ O{}  }{ \multiStepQQOperator[#1][] }
\NewDocumentCommand{\oneStepQQOperatorMath}{ O{}  }{ \multiStepQQOperator[#1][] }
\def\oneStepQQOperatorTextMath/{\oneStepQQOperatorText/ $\oneStepQQOperatorMath$}

\NewDocumentCommand{\multiStepOnPolicyOperator}{ O{N} O{\pi} }{ {\mathcal{B}}^{{#2}}_{#1} }
\NewDocumentCommand{\multiStepOnPolicyOperatorMath}{ O{N} O{\pi} }{ \multiStepOnPolicyOperator[#1][#2] }

\def\multiStepOnPolicyOperatorText/{Multi-Step Bellman Expectation Operator}
\def\multiStepOnPolicyOperatorTextMath/{ \multiStepOnPolicyOperatorText/ $\multiStepOnPolicyOperatorMath$ }

\NewDocumentCommand{\multiStepOnPolicyQQOperatorMath}{ O{N} O{\pi} }{ \QQQQ{\mathcal{B}}^{{#2}}_{#1} }

\NewDocumentCommand{\multiStepOffPolicyOperatorMath}{ O{N} O{\pi} O{\policyDist} }{ {\mathcal{B}}^{{#2}}_{#1} }
\NewDocumentCommand{\multiStepOffPolicyQQOperatorMath}{ O{N} O{\pi} O{\policySet} }{ \QQQQ{\mathcal{B}}^{{#2,#3}}_{#1} }

\NewDocumentCommand{\nreturnWeight}{ O{n} O{\pi}  }{ w_{#1}^{#2} }

\def\multiStepOperatorText/{Multi-step Bellman Optimality operator}
\NewDocumentCommand{\multiStepOperator}{ O{N} O{\policySet} }{ \mathcal{B}^{{#1}}_{#2} }
\NewDocumentCommand{\multiStepOperatorMath}{ O{N} O{\policySet} }{ \multiStepOperator[#1][#2] }
\def\multiStepOperatorTextMath/{\multiStepOperatorText/ $\multiStepOperatorMath$}

\def\multiStepQQOperatorText/{\multiStepOperatorText/}
\NewDocumentCommand{\multiStepQQOperator}{ O{N} O{\policyDist} }{ \QQQQ{\mathcal{B}}^{{#1}}_{#2} }
\NewDocumentCommand{\multiStepQQOperatorMath}{ O{N} O{\pi} }{ \multiStepQQOperator[#2][#1] }
\def\multiStepQQOperatorTextMath/{\multiStepOperatorText/ $\multiStepQQOperatorMath$}

\def\highwayPolicyIteration/{\highwayPrefixFull/ Policy Iteration}

\def\HighwayEvaluatePrefix/{Highway}
\def\highwayEvaluatePrefix/{Highway}
\def\highwayEvaluatePrefixFull/{Highway}
\def\highwayEvaluateValueIteration/{\highwayPrefixFull/ Value Iteration}
\def\highwayEvaluateEquationFull/{\highwayPrefixFull/ Recombined Equation}

\def\highwayEvaluateEquation/{\highwayPrefix/ Recombined Equation}
\def\highwayEvaluateOperatorText/{\highwayPrefix/ Recombined Operator}
\def\highwayEvaluateOperatorTextFull/{\highwayPrefixFull/ Recombined Operator}
\NewDocumentCommand{\highwayEvaluateOperatorMath}{ O{N} O{\policySet} }{ \mathcal{G}^{ {#1} }_{ {#2} } }

\def\recombinedPolicySetText/{set of recombined behavioral policies}

\NewDocumentCommand{\highwayRecombinationOperatorMath}{ O{N} }{ \mathcal{G}^{{#1}} }

\definecolor{QPiFixedPoint}{named}{usccardinal}

\def\nrandomseed/{5}

\usepackage{pgf,tikz}

\usepackage{url}
\usepackage{usebib}
\newbibfield{author} %
\bibinput{lib_my/bib}

\title{Highway Reinforcement Learning}

\author{\name Yuhui Wang$^1$ 
\email yuhui.wang@kaust.edu.sa 
\vspace{0.07cm}
\\
\name  Miroslav Strupl$^2$   
\email  miroslav.strupl@idsia.ch
\vspace{0.07cm}
\\
\name  Francesco Faccio$^{1,2}$ 
\email  francesco@idsia.ch
\vspace{0.07cm}
\\
\name  Qingyuan Wu$^3$ 
\email  qingwu2@liverpool.ac.uk
\vspace{0.07cm}
\\
\name  Haozhe Liu$^1$  
\email  haozhe.liu@kaust.edu.sa
\vspace{0.07cm}
\\ 
\name  Michał Grudzień$^4$
\email  michal.grudzien@worc.ox.ac.uk
\vspace{0.07cm}
\\
\name  Xiaoyang Tan$^5$
\email  x.tan@nuaa.edu.cn
\vspace{0.07cm}
\\
\name  J\"urgen Schmidhuber$^{1,2, 6}$
\email  juergen.schmidhuber@kaust.edu.sa\vspace{0.2cm}
\\
$^1$ {\small AI Initiative, King Abdullah University of Science and Technology, Saudi Arabia} \\
$^2$ {\small The Swiss AI Lab IDSIA/USI/SUPSI, Switzerland }\\
$^3$ {\small The University of Liverpool, England} \\
$^4$ {\small University of Oxford, England} \\
$^5$ {\small Nanjing University of Aeronautics and Astronautics, China}\\
$^6$ {\small NNAISENSE}
}

\DeclareMathOperator{\supp}{\mathrm{supp}}

\begin{document}

\maketitle

\begin{abstract}

Learning from multi-step off-policy data collected by a set of policies is a core problem of reinforcement learning (RL). 
Approaches based on importance sampling (IS) often suffer from large variances due to products of IS ratios. 
Typical IS-free methods, such as $n$-step Q-learning, look ahead for $n$ time steps along the trajectory of actions (where $n$ is called the lookahead depth) and utilize off-policy data directly without any additional adjustment. 
They work well for proper choices of $n$. 
We show, however, that such IS-free methods underestimate the optimal value function (VF), especially for large $n$, restricting their capacity to efficiently utilize information from distant future time steps. 
To overcome this problem, we introduce a novel, IS-free, multi-step off-policy method that avoids the underestimation issue and converges to the optimal VF.
At its core lies a simple but non-trivial \emph{highway gate}, 
which controls the information flow from the distant future by comparing it to a threshold.
The highway gate guarantees convergence to the optimal VF  for arbitrary $n$ and arbitrary behavioral policies. It gives rise to a novel family of off-policy RL algorithms that safely learn even when $n$ is very large, facilitating rapid credit assignment from the far future to the past.
On tasks with greatly delayed rewards, including video games where the reward is given only at the end of the game, our new methods outperform many existing multi-step off-policy algorithms.

\end{abstract}

\section{Introduction}
Recent advances in reinforcement learning (RL) have achieved remarkable empirical success \citep{horgan2018distributed, barth2018distributed,degrave2022magnetic,wurman2022outracing}. 
However, addressing environments where rewards are significantly delayed remains a challenge. In such settings, there is often a time lag between pivotal actions and their associated rewards \citep{kaelbling1996reinforcement,rlblogpost,han2022off,NEURIPS2019_16105fb9,zhang2023multi}.
This delay poses a significant challenge in effectively attributing rewards to the actions that actually contributed to them. In RL, this attribution is critical for learning optimal policies. Herein lies the importance of Value Functions (VFs)---a fundamental concept in RL that aids in this attribution process.
Value Functions estimate the expected reward for states (and actions in certain algorithms) under a specific policy, offering a way to quantify the long-term impact of actions. They enable an agent to predict future rewards and thus, assign credit to actions even when rewards are not immediate.
Classical one-step bootstrapping methods, such as Q-Learning, suffer from slow backward credit assignment: one time step per trial, 
which is typically inefficient in environments where rewards are not immediate \citep{rudder2019}. 
Multi-step bootstrapping methods, which assign credit to past actions by spanning multiple time steps, are typical cheap solutions for efficient long-term credit assignments in these complex environments \citep{singh1996reinforcement,sutton2018reinforcement}. 
{However, multi-step bootstrapping methods encounter difficulties when applied in off-policy scenarios, where the data are obtained by various behavioral policies other than the policy we aim to learn (also known as the target policy).}
Previous methods mostly rely on importance sampling (IS)-based techniques to correct the discrepancy between the behavioral policy and the target policy \citep{precup2000eligibility,harutyunyan2016q,munos2016safe, sutton2018reinforcement, Schulman2016HighDimensional}. 
However, they often suffer from high variance due to the product of IS ratios over multiple time steps \citep{cortes2010learning,metelli2018}.
Recently, several variance reduction methods have been proposed and demonstrated to be effective in practice. These methods include Retrace ($\lambda$) \citep{munos2016safe}, C-trace \citep{rowland2020adaptive}, as well as others \citep{espeholt2018impala,horgan2018distributed,asis2017multi}. 
Some works utilize multi-step off-policy data without any adjustment (such as IS technique) and have shown promising results in practice \citep{horgan2018distributed,barth2018distributed,hessel2018rainbow}.
However, the lack of adjustments on the off-policy data in these methods raises significant concerns. Specifically, it brings into question whether the estimated value function might actually converge to the optimal value function \citep{hernandez2019understanding}. This represents a potential limitation of the effectiveness of these approaches for certain RL applications.

{To address these challenges, we study the convergence of existing IS-free multi-step off-policy methods and provide new tools for IS-free multi-step off-policy learning with guaranteed convergence to the optimal VF.} Our contributions are as follows:
\begin{enumerate}[wide=0em,itemindent=0em,leftmargin=0em]
    \item 
        We study a family of existing IS-free multi-step off-policy methods, which look ahead for $n$ time steps along the trajectory and utilize the multi-step off-policy data without any adjustment mechanism such as importance sampling (herein, $n$ is referred to as the \emph{lookahead depth}).
        We theoretically show that these methods suffer from an underestimation issue concerning the optimal VF when incorporating the off-policy data collected by non-optimal behavioral policies---a common scenario in practice.
        This underestimation issue could potentially lead to a sub-optimal policy. 
        Moreover, this issue is aggravated when increasing $n$, restricting the capacity of these methods to efficiently utilize information from distant future time steps.
        Refer to \Cref{sec_multi_step_BO} for details.
    \item 
We introduce a simple yet non-trivial mechanism, named \emph{highway gate}, which controls the information flow by comparing the information from a lookahead depth of $n$ with that from a lookahead of $1$.
We demonstrate that our new \emph{Highway Operator}, featuring the integration with the \emph{highway gate}, converges
to the optimal VF when learning with arbitrary lookahead depth $n$ and data collected by arbitrary policies.
This characteristic is substantiated through both theoretical analysis and empirical investigations.
Therefore, our operator can ``safely'' and ``rapidly'' transfer credit information from arbitrarily distant future time steps to the past (hence the name ``highway'') without suffering from the underestimation issue. %
Refer to \Cref{sec_highwayOperator} and \Cref{sec_experiment} for details.
We also propose an additional mechanism to increase the learning speed by aggregating the information from various lookahead depths and behavioral policies.
We theoretically show that our new operator, named \emph{Highway Optimality Operator}, provides the optimal choice for determining which lookahead depth and behavioral policy to use in practical cases.
Refer to \Cref{sec_highwayOptOperator} for details.
Furthermore, we propose a \emph{Highway Softmax Operator} to balance the trade-off between rapid learning and alleviating the overestimation issues of the Highway Optimality Operator in the function approximation case.
Similar ideas for comparing information from various lookahead depths have been explored in existing works \citep{wright2013exploiting}. However, we study more general mechanisms for aggregating information from various lookahead depths and policies, including the key components of our proposed highway gate.\footnote{
This work extends a previous shorter report on Greedy Multi-Step Off-Policy
RL \citep{wang2020GreedyMultiStep}. This paper aims to provide a comprehensive analysis, identify the key components, and present a more general form,  accompanied by extensive experiments. 
}
    \item 
Based on the new operators, we propose a family of novel multi-step off-policy algorithms that enable the efficient transfer of reward information across long time lags without underestimation risk.
These algorithms, named Highway Value Iteration, Highway Q-Learning, and Highway DQN, 
effectively utilize information from a very distant future, making them more efficient in delayed reward environments than traditional RL algorithms.
Refer to \Cref{sec_method} for details.
\item 
 We evaluate our algorithms in toy tasks and MinAtar environments where the reward delays range from hundreds to thousands of timesteps, including the most challenging scenarios where the reward is given only at the end of the game.
Our algorithms demonstrate remarkable performance, outperforming many existing multi-step off-policy algorithms that fail when the reward is delayed.
Refer to \Cref{sec_experiment} for  details.
\end{enumerate}

\section{Preliminaries}

A \emph{Markov Decision Processes (MDP)} is described by the tuple $(\mathcal{S},\mathcal{A},\gamma,\cal T, \mathcal{R} )$ \citep{puterman2014markov}, 
where $\mathcal{S}$ is the state space, $\mathcal{A}$ is the action space, and $\gamma \in [0,1)$ is the discount factor.
We assume MDPs with countable $\mathcal{S}$ (discrete topology) and finite $\mathcal{A}$. The transition probability is denoted as ${\cal T}: \sSpace \times \aSpace \rightarrow \Delta(\sSpace) $,
and the reward probability function is denoted as $\mathcal{R}: \sSpace \times \aSpace \rightarrow \Delta(\R) $.
We use the following symbols to denote conditional probabilities: ${\cal T}(s'|s,a)$, $\mathcal{R}(\cdot|s,a)$, where $s,s' \in \sSpace $ and $a \in \aSpace$.
In order to ensure the completeness of the value function space (assumption of Banach fixed point theorem), we assume bounded rewards that, when discounted, produce bounded value functions.
We denote $l_{\infty}( \mathcal{X} )$ as the space of bounded sequences with supremum norm $\|\cdot\|_{\infty}$ (we will use  $\|\cdot\|$ for simplicity), where the support $\mathcal{X}$ is assumed to be a countable set with a discrete topology.
The completeness of our value spaces then follows from the completeness of $l_{\infty}(\mathbb{N})$\footnote{The space of all bounded sequences with supremum norm, known to be a complete metric space.}.

The goal of RL is to find a policy $\pi: \sSpace \rightarrow \Delta(\aSpace)$ that yields the maximal \emph{expected return}, where the {return} is defined as the accumulated discounted reward from time step $t$, given by $G_t = \sum_{n=0}^{\infty} \gamma^n r_{t+n}$, and the expected return is defined as the expectation of return.
The \emph{state-action value function (VF)} of a policy $\pi$ is defined as the expected return when starting in state-action $(s,a)$ and following policy $\pi$, denoted as $Q^\pi (s,a) \triangleq \E \left[ G_t  | s_t=s, a_t=a; \pi \right] $. 
Let $\Pi$ denote the space of all policies.
The \emph{optimal VF} is defined as $Q^*(s,a)= \max_{\pi \in \Pi} Q^\pi(s,a)$, which is called the \emph{optimal expected return} from the state-action $(s,a)$.
It is also convenient to define the \emph{state VF} of a policy $\pi$, 
and the optimal state $V^\pi (s) \triangleq \E \left[ G_t  | s_t=s; \pi \right] $. 
For convenience, we use VF to refer to the state-action VF throughout the paper when there is no ambiguity.

To obtain the VF $Q^*$ or $Q^\pi$, researchers develop various operators.
We usually care about the operators that can \emph{converge} to the VF $Q^*$ or $Q^\pi$, formally defined below \citep{szepesvari2010algorithms}.

\def\operator{\mathcal{K}}
\def\VF{ {\accentset{\mbox{\large\bfseries .}}{{Q}}} }
\begin{definition}
An operator $\operator$ can converge to a specific VF $\VF$ iff 
for all $Q \in  l_{\infty}(\sSpace \times \aSpace)$, it holds that $ \lim_{n \rightarrow \infty} (\operator)^{n} Q = \VF $.
\end{definition}

\begin{remark}
An operator $\operator$ can converge to a specific VF $\VF$ if the following two conditions hold:
(1) The operator $\operator$ is a contraction on the complete metric space $l_{\infty}(\sSpace \times \aSpace)$, i.e., there exists $\xi < 1$ such that for any $Q, Q' \in l_{\infty} (\sSpace \times \aSpace) $ we have $ \| \operator Q - \operator Q' \| \leq \xi \| Q-Q'\| $;
{or there exists $\xi < 1$ such that for any $Q \in l_{\infty} (\sSpace \times \aSpace) $ we have $ \| \operator Q - \VF \| \leq \xi \| Q- \VF \| $;}
(2)
$\VF$ is a unique fixed point of $\operator$, i.e., 
$\operator \VF = \VF$.
\end{remark}

\paragraph{Bellman Optimality/Expectation Operator}
The Bellman Optimality (BO) Operator (eq. \ref{eq_oneStepOperator})
and 
Bellman Expectation (BE) Operator (eq. \ref{eq_oneStepExpectationOperator}) 
, are defined as follows: 
\begin{align}
(\oneStepOperatorMath[] Q )(s,a) \triangleq &
\mathbb{E}_{ r \sim \mathcal{R}(\cdot|s,a), s' \sim \mathcal{T}(\cdot|s,a) }
\left[
  r+\gamma
  \max_{a'\in \aSpace}
 Q\left( s',a'\right) 
\right].
 \label{eq_oneStepOperator}
\\
({\mathcal{B}}^{\pi} Q )(s,a) \triangleq &  \mathbb{E}
_{ r \sim \mathcal{R}(\cdot|s,a), s' \sim \mathcal{T}(\cdot|s,a),a' \sim {\pi}(\cdot | s') } 
\left[
  r+
\gamma Q\left( s',a' \right) 
 \right].
 \label{eq_oneStepExpectationOperator}
\end{align}
BO Operator and BE Operator can converge to the $Q^*$ and $Q^\pi$, respectively.
We have the following BO Equation and BE Equation:
\begin{equation*}
    \oneStepOperatorMath[] Q^* = Q^*,
    \quad
    {\mathcal{B}}^\pi Q^\pi = Q^\pi.
\end{equation*}

\section{The Underestimation Issue of Multi-step Bellman Optimality Operator}\label{sec_multi_step_BO}
In this section, we study a family of widely-used multi-step off-policy methods, which utilize the data collected by various policies and perform multi-step bootstrapping directly without any adjustment mechanism such as the importance sampling technique \citep{hessel2018rainbow,horgan2018distributed,daley2019reconciling}. 
The underlying operator of these methods can be formulated in the following generalized form:

\begin{equation}\label{eq_multistepBOOperator}
\begin{aligned}
 \multistepBOOperator
 Q(s,a)  
\triangleq 
& 
\EPolicyStep
            \left( \left( \mathcal{B} ^{\pi} \right) ^{n-1}\mathcal{B}  Q \right)  
   (s,a)
\\= &
\EPolicyStep
    \mathbb{E} _{
    	\tau _{s,a}^{n}\sim \pi
    }
    \left[ \sum_{\stepIndex=0}^{n-1}{\gamma ^{\stepIndex}}r_{\stepIndex}+\gamma ^{n}\max_{a_{n}' \in \mathcal{A}} 
    Q\left(
    s_{n},a_{n}'\right)
    \right]
\end{aligned}
\end{equation}
We give the meaning of the notations below:
\begin{itemize}[wide=0em,itemindent=0em,leftmargin=0em, itemsep=-0.2em]
    \item 
    $\pi$ denotes the \emph{behavioral policy} that collects the data. 
    $\emptyset \neq \policySet:=\{ \pi_1, \cdots, \pi_M \} \subset \Pi $ is a (finite) \emph{set of behavioral policies}.
    \item
$n \in \sZ^+$ is the \emph{lookahead depth},
and {$\stepSet \neq \emptyset$} is a (finite) \emph{\stepSetText/}.
\item
$\policyDist$ and $\stepDist$ are the \emph{selection distributions} over the sets $\policySet$ and $\stepSet$, respectively.

\item Note that ($\policySet, \stepSet, \policyDist, \stepDist)$ can vary for different $(s,a)$. 
For simplicity, we do not explicitly emphasize this variation in the notation.
\item 
$\traj[n][s, a] \triangleq (s_0, a_0, r_0, s_1,\cdots, s_{\stepIndex}, a_{\stepIndex}, r_{\stepIndex}, \cdots, s_{n}, a_{n} | s_0=s, a_0=a) $ is a {trajectory} data starting from $s, a$.
\item 
For simplicity, we will frequently omit the superscript and the subscript in $\multistepBOOperator$ and instead use the notation $\multistepBOOperator[][]$. This convention is also applied to other notations throughout the paper.
\end{itemize}

We illustrate some instances of \multistepBOOperatorText/ $\multistepBOOperator[\policyDist][\stepDist]$ which uses different settings of $\stepSet$ and $\stepDist$: 
$n$-step DQN uses $\stepSet=\{n\}$ and $\stepDist(n)=1$ \citep{hessel2018rainbow,horgan2018distributed,daley2019reconciling} with $n$ as a critical hyperparamter for the algorithm;
$\lambda$-return DQN uses $\stepSet=\{ 1,2,\cdots, N \}$, and $
\stepDist(n)=
\begin{cases}
(1-\lambda)^n \lambda ^{n-1} &,  n<N
\\
\lambda ^{n-1} &, n=N
\end{cases}
$ 
\citep{daley2019reconciling}.
These multi-step methods utilize more distant future information than the one-step method, leading to more efficient credit assignments, particularly in delayed reward environments.
They perform well in some practical tasks with proper tuning of the lookahead depths. 
However, since these methods do not adjust for the off-policy data, potential convergence issues may arise. We illustrate these issues below.

\paragraph{The underestimation issue.} 
As indicated by \cref{eq_multistepBOOperator}, the Multi-step BO Operator directly employs the $n$-step expected return, $\left( \mathcal{B} ^{\pi} \right) ^{n-1}\mathcal{B}  Q$, which is highly dependent on the behavioral policy $\pi$.
When applied to the optimal VF $Q^*$, we observe that $\left( \mathcal{B} ^{\pi} \right) ^{n-1}\mathcal{B}  Q^* \leq Q^* $ for $\pi\neq \pi^*$ and $n>1$, indicating the occurrence of the underestimation issue.
Formally, the following theorem is presented.
\begin{restatable}[]{theorem}{theoremMultiStepBOOperatorBiased} \label{theoremMultiStepBOOperatorBiased}
\def\QFixedPoint{  \widehat{Q}^* }
\def\nExist{ \grave{n} }
Given a \multistepBOOperatorText/ $\multistepBOOperator$, where there exists at least one $\nExist \in \stepSet$ such that $\nExist > 1$ and $\stepDist(\nExist) > 0$,
we have \\
1) $\multistepBOOperator$ has a unique fixed point $ \QFixedPoint $, i.e., $\multistepBOOperator \QFixedPoint = \QFixedPoint $, and 
$ \QFixedPoint \leq Q^* $;
\\
2) $ \QFixedPoint = Q^*$
if and only if any $\pi \in \policySet$, $\policyDist(\pi) >0$ satisfies 
$\pi(s) = \pi^*(s)$ for any $s\in \{ s_1 \in \sSpace | \exists s_0,a_0, \mathcal{T}( s_1 |s_0,a_0 ) >0 \}$, where $\pi^*$ is an optimal policy.
\end{restatable}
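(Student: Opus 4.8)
The plan is to prove the two claims in sequence, starting from the contraction structure of $\multistepBOOperator$ and the monotonicity of the one-step operators $\mathcal{B}$ and $\mathcal{B}^{\pi}$.

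\textbf{Step 1: Existence and uniqueness of the fixed point.} First I would verify that $\multistepBOOperator$ is a $\gamma$-contraction (in fact a $\gamma^{2}$-or-better contraction, but $\gamma < 1$ suffices) on $l_{\infty}(\sSpace\times\aSpace)$. This follows because each composed map $(\mathcal{B}^{\pi})^{n-1}\mathcal{B}$ is a $\gamma^{n}$-contraction (composition of one-step $\gamma$-contractions), and $\multistepBOOperator$ is a convex combination of such maps under $\policyDist$ and $\stepDist$; a convex combination of $c_i$-contractions into the same space, with weights summing to one, is a $(\max_i c_i)$-contraction, and since $\gamma^{n}\le\gamma<1$ we get a contraction with modulus at most $\gamma$. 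Completeness of $l_{\infty}(\sSpace\times\aSpace)$ (guaranteed by the boundedness assumptions in the Preliminaries) and the Banach fixed point theorem then give a unique fixed point $\widehat{Q}^{*}$.

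\textbf{Step 2: The inequality $\widehat{Q}^{*}\le Q^{*}$.} The key observation is that for \emph{any} policy $\pi$ and any $n\ge 1$ we have $(\mathcal{B}^{\pi})^{n-1}\mathcal{B}\,Q^{*}\le (\mathcal{B}^{*})^{n-1}\mathcal{B}\,Q^{*}$, where $\mathcal{B}^{*}=\mathcal{B}^{\pi^{*}}$ agrees with the BO operator at $Q^{*}$, and iterating the fixed-point equation $\mathcal{B}Q^{*}=Q^{*}$ gives $(\mathcal{B}^{\pi})^{n-1}\mathcal{B}\,Q^{*}\le Q^{*}$ pointwise (here I use that $\mathcal{B}^{\pi}$ is monotone and $\mathcal{B}^{\pi}Q^{*}\le\mathcal{B}Q^{*}=Q^{*}$). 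Averaging over $\policyDist$ and $\stepDist$ preserves the inequality, so $\multistepBOOperator Q^{*}\le Q^{*}$. Now I would use monotonicity of $\multistepBOOperator$ (it is built from monotone operators and nonnegative convex weights) together with the contraction property: applying $\multistepBOOperator$ repeatedly to $Q^{*}$ yields a nonincreasing sequence bounded below, which by Step 1 converges to $\widehat{Q}^{*}$; hence $\widehat{Q}^{*}\le Q^{*}$.

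\textbf{Step 3: The equality characterization.} For the "if" direction, suppose every $\pi$ in the support of $\policyDist$ coincides with $\pi^{*}$ on all reachable successor states. Then for a trajectory $\traj[n][s,a]$, the intermediate states $s_1,\dots,s_{n}$ all lie in the reachable set, so $\mathcal{B}^{\pi}$ acts exactly as $\mathcal{B}^{\pi^{*}}$ along the rollout starting from $s_1$; combined with $\mathcal{B}Q^{*}=Q^{*}$ at the first step this gives $(\mathcal{B}^{\pi})^{n-1}\mathcal{B}\,Q^{*}=Q^{*}$ for each term, so $\multistepBOOperator Q^{*}=Q^{*}$ and by uniqueness $\widehat{Q}^{*}=Q^{*}$. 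For the "only if" direction — which I expect to be the main obstacle — I would argue by contraposition: if some $\pi$ with $\policyDist(\pi)>0$ has $\pi(s_1)\ne\pi^{*}(s_1)$ for some reachable $s_1$ (with positive probability of being visited as the first state on a relevant trajectory of depth $\grave{n}>1$), then there is a strict suboptimality gap $V^{*}(s_1)-Q^{*}(s_1,\pi(s_1))>0$ — or more carefully, a gap that propagates: I need to show that at least one $n$-step term in the average is \emph{strictly} below $Q^{*}(s,a)$ at some $(s,a)$. The delicate point is that strict inequality at an intermediate step need not immediately yield strict inequality at $(s,a)$ unless that intermediate state is actually reached with positive probability under the rollout; so I would carefully track that the chosen $\grave{n}\in\stepSet$ with $\stepDist(\grave{n})>0$ and $\grave{n}>1$ ensures the rollout is long enough to pass through a state where $\pi$ is suboptimal, and that $\mathcal{T}(s_1\mid s_0,a_0)>0$ for some $(s_0,a_0)$ means we can choose $(s,a)=(s_0,a_0)$ so that the defect appears at $(s,a)$ after discounting. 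Then $\multistepBOOperator Q^{*}<Q^{*}$ at that point, so $Q^{*}$ is not the fixed point, and by Step 1 $\widehat{Q}^{*}\ne Q^{*}$, i.e., $\widehat{Q}^{*}<Q^{*}$ somewhere. Assembling these cases, and being careful that "for any $s$ in the reachable set" in the theorem statement quantifies over all such states (so a single bad state suffices to break equality), completes the argument.
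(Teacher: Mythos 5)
Your proposal is correct and follows essentially the same route as the paper's proof: a Banach fixed-point argument for existence and uniqueness, the inequality $\mathcal{B}^{\pi}Q^{*}\le\mathcal{B}Q^{*}=Q^{*}$ plus monotonicity to obtain the decreasing sequence $Q^{*}\ge\multistepBOOperator Q^{*}\ge\cdots\searrow\widehat{Q}^{*}$, and the same reachable-state characterization for part 2, including the contradiction argument at a pair $(s_0,a_0)$ adjacent to the bad state $s_1$. The ``delicate point'' you flag is resolved in the paper exactly as you suggest, by collapsing the $\grave{n}$-step term via $(\mathcal{B}^{\grave{\pi}})^{\grave{n}-1}\mathcal{B}Q^{*}\le\mathcal{B}^{\grave{\pi}}Q^{*}<Q^{*}$ at $(s_0,a_0)$ using monotonicity and $\grave{n}>1$.
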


All the proofs are provided in \Cref{ap_sec_theoy}.
As the theorem implies, the Multi-step BO Operator converges to the optimal VF only when nearly all the \policiesText/ are optimal. 
In other words, this operator cannot guarantee convergence to the optimal VF as long as it incorporates off-policy data collected by non-optimal behavioral policies, which is a scenario commonly encountered in practice.
To provide an intuitive understanding, we present an illustrative example below. This example will be used throughout the paper.

\begin{mdpexample}\label{example_MDP}
\Cref{fig_example} illustrates a deterministic MDP problem.
The {\color{blue}blue}, {\color{orange}orange} and {\color{red}red} policies execute the upper right action $\nearrow$ at the initial state $S_A$, and obtain a reward of $-9$, $3$ and $9$ respectively.
For convenience, we use $\gamma=1$ for this problem.
\end{mdpexample}
We evaluate the $n$-step BO Operator until it converges to a fixed point $\widehat Q^*$, where $\stepSet=\{n\}$ and $\policyDist$ is a uniform distribution over all behavioral policies, i.e., $\policyDist(\pi)=1/3$ for $\pi \in \{ {\color{blue}\pi_b} , {\color{orange}\pi_o}, {\color{red}\pi_r}\}$. 
As shown in \Cref{fig_Final_Q_wrt_nstep_BO}, even a $2$-step BO Operator, can lead to underestimation where the fixed point satisfies $\widehat{Q}^*(S_A, \nearrow)=3$, which is smaller than the optimal Q value $Q^*(S_A, \nearrow)=9$.
This is because the {\color{black}red policy}, which obtains a low reward of $-9$ within a lookahead depth of $2$, causes the $n$-step BO operator to underestimate the maximal expected return of state-action $(S_A, \nearrow)$ when the $n$-step BO Operator takes an expectation over all the $2$-step expected returns induced by various behavioral policies.

\paragraph{Impact of larger lookahead depth on underestimation}
We also find that the underestimation issue can worsen with an increase in the \step/.
Formally, we have the following theorem.
\begin{restatable}[]{theorem}{theoremMultiStepBOOperatorDiverge} \label{theoremMultiStepBOOperatorDiverge}
\def\QFixedPoint{  \widehat{Q}^* }
Given the same $\policySet, \policyDist$, and two sets $\stepSet =\{ n \}$,  $\stepSet^\prime =\{ n' \}$.
Let $\QFixedPoint_{n}$ and $\QFixedPoint_{n^\prime}$ denote the fixed point of the corresponding \multistepBOOperatorText/s.
For simplicity, assume that $|\policySet|=1$.
1) If $ n \geq n' $ and {$n \mod n' = 0$}, 
then 
$
    \QFixedPoint_{n}  \leq \QFixedPoint_{n'} \leq Q^*,
$
2) $\lim\limits_{n \rightarrow \infty } \QFixedPoint_{n} = \E_{\pi \sim  \policyDist } \left[ Q^\pi  \right]  $.
\end{restatable}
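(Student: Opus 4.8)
The plan is to reduce to the single-policy operator, which under the hypotheses ($\stepSet=\{n\}$, $|\policySet|=1$) is exactly $T_n := (\mathcal{B}^{\pi})^{n-1}\mathcal{B}$, and to lean on four elementary facts: (i) $\mathcal{B}^{\pi}Q \le \mathcal{B}Q$ pointwise for every $Q$, since $\mathbb{E}_{a'\sim\pi}Q(s',a') \le \max_{a'}Q(s',a')$ and $\gamma\ge 0$; (ii) $\mathcal{B}$, $\mathcal{B}^{\pi}$, and all their compositions are monotone; (iii) $\mathcal{B}^{\pi}$ is a $\gamma$-contraction with unique fixed point $Q^{\pi}$, hence $(\mathcal{B}^{\pi})^m$ is a $\gamma^m$-contraction, so $T_n$ is a $\gamma^n$-contraction whose unique fixed point $\widehat{Q}^*_n$ satisfies $\widehat{Q}^*_n \le Q^*$ by Theorem~\ref{theoremMultiStepBOOperatorBiased} (with equality when the lookahead equals one).

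\emph{Part 1.} I would first prove the key estimate $(\mathcal{B}^{\pi})^{n'}\widehat{Q}^*_{n'} \le \widehat{Q}^*_{n'}$: apply the monotone operator $(\mathcal{B}^{\pi})^{n'-1}$ to the inequality $\mathcal{B}^{\pi}\widehat{Q}^*_{n'} \le \mathcal{B}\widehat{Q}^*_{n'}$ from (i), and use $(\mathcal{B}^{\pi})^{n'-1}\mathcal{B}\widehat{Q}^*_{n'} = T_{n'}\widehat{Q}^*_{n'} = \widehat{Q}^*_{n'}$. Iterating with monotonicity gives $(\mathcal{B}^{\pi})^{jn'}\widehat{Q}^*_{n'}\le\widehat{Q}^*_{n'}$ for all $j\ge 0$. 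Writing $n=kn'$ and splitting the exponent as $n-1=(k-1)n'+(n'-1)$,
\[
T_n\widehat{Q}^*_{n'} = (\mathcal{B}^{\pi})^{(k-1)n'}\bigl((\mathcal{B}^{\pi})^{n'-1}\mathcal{B}\widehat{Q}^*_{n'}\bigr) = (\mathcal{B}^{\pi})^{(k-1)n'}\widehat{Q}^*_{n'} \le \widehat{Q}^*_{n'}.
\]
Since $T_n$ is monotone, the iterates $T_n^m\widehat{Q}^*_{n'}$ form a pointwise non-increasing sequence, and since $T_n$ is a contraction they converge in sup norm (hence pointwise) to $\widehat{Q}^*_n$; the limit of a pointwise non-increasing sequence lies below its first term, so $\widehat{Q}^*_n \le \widehat{Q}^*_{n'} \le Q^*$.

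\emph{Part 2.} Here the point is that the terminal $\max$-step in $T_n$ is discounted by $\gamma^n$, so its influence on the fixed point vanishes, leaving the policy-evaluation operator whose fixed point is $Q^{\pi}$. Using $\widehat{Q}^*_n = T_n\widehat{Q}^*_n$ and the $\gamma^n$-contraction of $T_n$,
\[
\|\widehat{Q}^*_n - Q^{\pi}\| \le \|T_n\widehat{Q}^*_n - T_nQ^{\pi}\| + \|T_nQ^{\pi} - Q^{\pi}\| \le \gamma^n\|\widehat{Q}^*_n - Q^{\pi}\| + \|T_nQ^{\pi} - Q^{\pi}\|,
\]
so $\|\widehat{Q}^*_n - Q^{\pi}\| \le \|T_nQ^{\pi}-Q^{\pi}\|/(1-\gamma)$. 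Then, since $Q^{\pi}=(\mathcal{B}^{\pi})^{n-1}Q^{\pi}$ and $T_nQ^{\pi}=(\mathcal{B}^{\pi})^{n-1}(\mathcal{B}Q^{\pi})$, the $\gamma^{n-1}$-contraction of $(\mathcal{B}^{\pi})^{n-1}$ gives $\|T_nQ^{\pi}-Q^{\pi}\| \le \gamma^{n-1}\|\mathcal{B}Q^{\pi}-Q^{\pi}\|$. The constant $C_\pi := \|\mathcal{B}Q^{\pi}-Q^{\pi}\|$ is finite (both terms lie in $l_\infty$) and independent of $n$, whence $\|\widehat{Q}^*_n - Q^{\pi}\| \le \gamma^{n-1}C_\pi/(1-\gamma)\to 0$. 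Finally $|\policySet|=1$ forces $\mathbb{E}_{\pi\sim\policyDist}[Q^{\pi}]=Q^{\pi}$, which is the asserted limit.

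The step I expect to be the main obstacle is the exponent bookkeeping in Part 1: one must avoid the tempting but false identity $T_n=(T_{n'})^{n/n'}$ — the middle $\mathcal{B}$ in $(T_{n'})^2$ would need to be $\mathcal{B}^{\pi}$ to match $T_{2n'}$ — and instead route everything through $(\mathcal{B}^{\pi})^{n'}\widehat{Q}^*_{n'}\le\widehat{Q}^*_{n'}$ and the split $n-1=(k-1)n'+(n'-1)$. The rest is routine contraction/monotonicity manipulation, and strictness of the inequalities is never needed.
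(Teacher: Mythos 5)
Your proof is correct and Part 1 follows essentially the same route as the paper: both hinge on the key inequality $(\mathcal{B}^{\pi})^{n-1}\mathcal{B}\,\widehat{Q}^*_{n'}\le\widehat{Q}^*_{n'}$ (the paper derives it by induction over multiples of $n'$, you by the equivalent exponent split $n-1=(k-1)n'+(n'-1)$) and then close with the same monotone-iteration argument $\widehat{Q}^*_{n'}\ge T_n\widehat{Q}^*_{n'}\ge T_n^2\widehat{Q}^*_{n'}\ge\cdots\searrow\widehat{Q}^*_n$. For Part 2 the paper merely asserts the claim is ``obvious,'' so your quantitative bound $\|\widehat{Q}^*_n-Q^{\pi}\|\le\gamma^{n-1}\|\mathcal{B}Q^{\pi}-Q^{\pi}\|/(1-\gamma)$ is a correct and genuinely more complete argument than what the paper provides.
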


\def\height{0.24}
\begin{figure*}[!b]
    \centering
    \graphicspath{{figsv2/illustrate_example/}}
    \subfloat[An Illustrative Example]{
        \includegraphics[height=\height\linewidth]{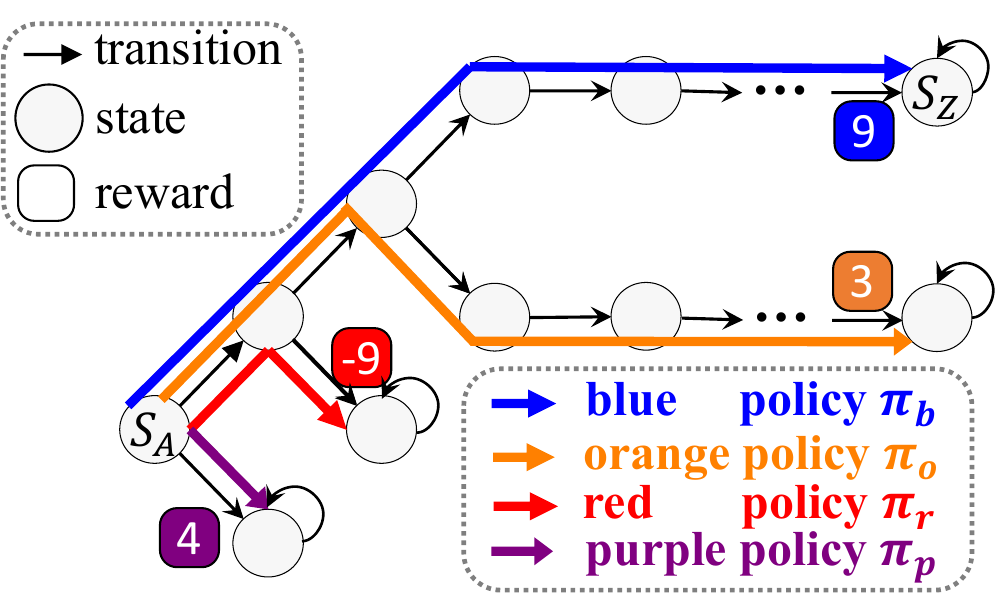}
         \label{fig_example}
    }
    \subfloat[The Fixed Point]{
        \includegraphics[height=\height\linewidth]{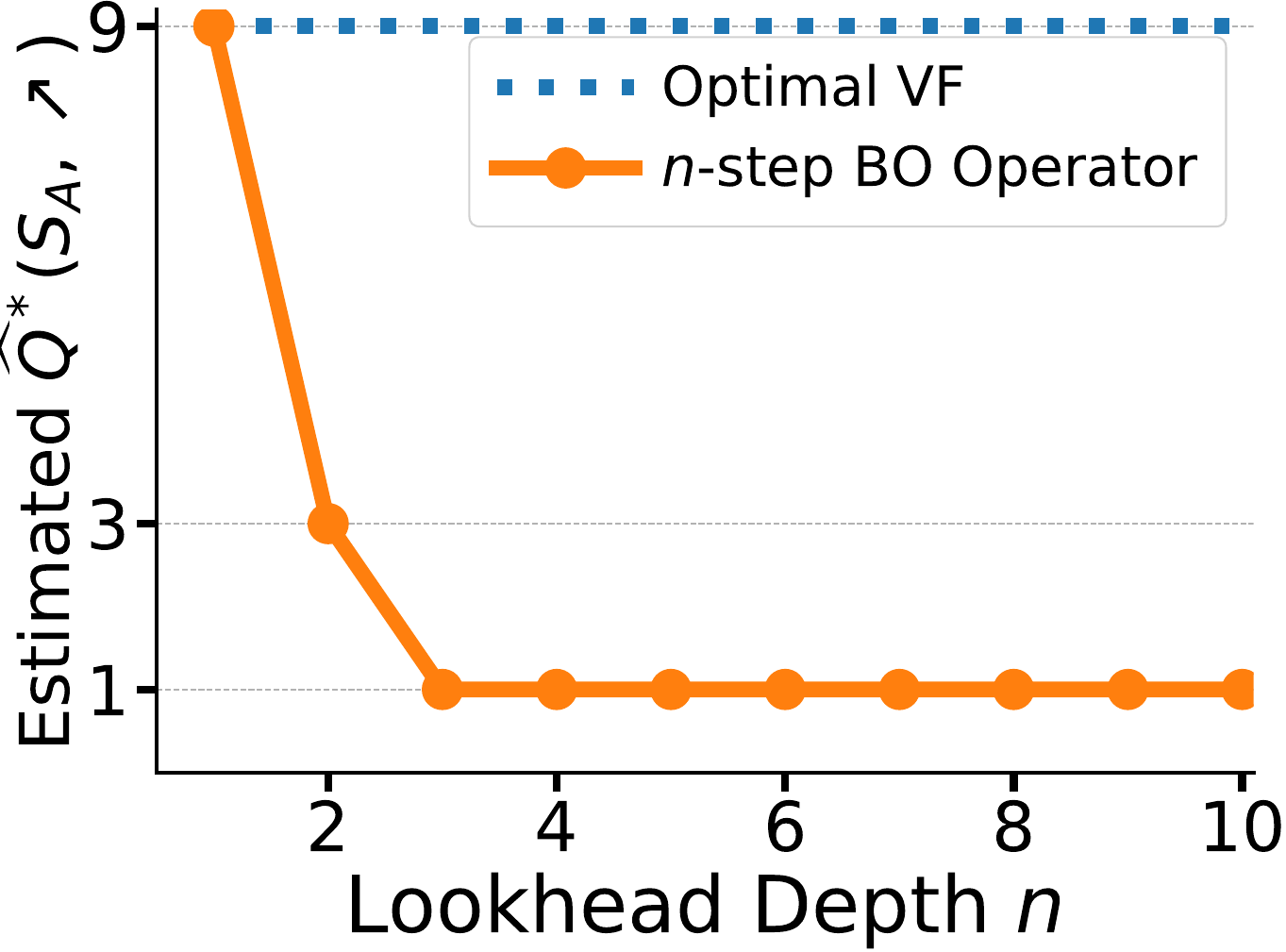}
         \label{fig_Final_Q_wrt_nstep_BO}
    }
   \caption{
   \subref{fig_example} shows a deterministic MDP. 
   The number of transitions between $S_A$ and $S_Z$ is 10.
   There are three behavioral policies, including the {\color{blue}blue}/{\color{orange}orange}/{\color{red}red} policies $ {\color{blue}\pi_b} / {\color{orange}\pi_o}/{\color{red}\pi_r}$, executing the upper right action $\nearrow$ starting from the initial state $S_A$.
   \subref{fig_Final_Q_wrt_nstep_BO} shows the fixed point of $n$-step BO Operator with various lookahead depths $n$.
   }
\end{figure*}

As implied by the theorem, the Q values at the fixed point will decrease as the lookahead depth increases.
In the extreme case where the \step/ $N\rightarrow \infty$, the VF will converge to the expectation of the VFs of the behavioral policies rather than the optimal VF.
Consider the \Cref{example_MDP}: as shown in \Cref{fig_Final_Q_wrt_nstep_BO}, the Q value $\widehat{Q}^*(S_A, \nearrow)$ obtained by the $10$-step BO Operator is smaller than that obtained by the $2$-step BO Operator.
This problem arises mainly due to the {\color{black}orange policy}, which attains a low reward of $3$ within a lookahead depth of $10$, leading to a more severe underestimation for state-action $(S_A, \nearrow)$.

\paragraph{{Underestimation can lead to a sub-optimal policy.}}
The underestimation problem can result in a sub-optimal policy.
The final policy is usually constructed by greedily taking actions according to the final value function $\widehat Q^*$, i.e., $ \argmax_a \widehat Q^*(s,a)$. 
For the \Cref{example_MDP}, for both the $2$-step and $10$-step BO operators, we have $\argmax_a {\widehat{Q}^*}(S_A,a)=\searrow$, because ${\widehat{Q}^*(S_A, \searrow)>\widehat{Q}^*(S_A, \nearrow)}$, while the optimal action on state $S_A$ should be $\nearrow$.

\paragraph{Summary.}
In summary, non-optimal behavioral policies can lead to the underestimation issue of the Multi-step BO Operator, and a larger lookahead depth could aggravate this issue.
Note that here, we aim to assess the ability of the Multi-step BO Operator to utilize multi-step off-policy data.
These conclusions suggest that one needs to deliberately tune the set of behavioral policies and the lookahead depth to balance the trade-off between addressing the underestimation issues and achieving efficient multi-step learning. 
In practice, discarding the oldest data from the replay buffer as the learning process progresses and appropriately tuning the lookahead depth could yield good performance \citep{hessel2018rainbow,horgan2018distributed,daley2019reconciling}.

\section{Highway Reinforcement Learning} 
\label{sec_foundation}

In this section, we first focus on the safe use of off-policy data with arbitrary lookahead depth, aiming to converge to the optimal value function without encountering underestimation issues. Next, we explore how to aggregate information from various lookahead depths to further improve learning efficiency.

\subsection{Safe Learning with Arbitrary Lookahead Depth}\label{sec_highwayOperator}

\begin{figure*}[!t]
    \centering
    \includegraphics[width=0.98\linewidth]{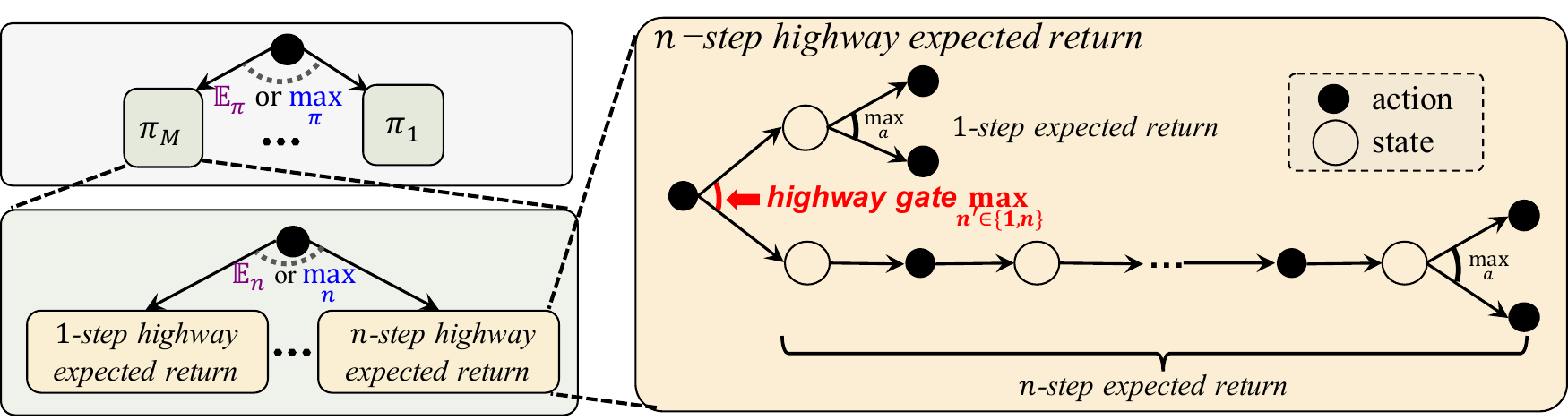}
   \caption{
   {Backup diagram} of \highwayGenOperatorText/ $
   \highwayOperator$ (eq. \ref{eq_highwayOperator})
and \highwayOptOperatorText/ $\highwayOptOperator$ (eq. \ref{eq_highwayOptOperator}.)
   }
   \label{fig_backupdiagram}
\end{figure*}

The previous section analyzes the convergence properties of a family of IS-free multi-step BO operators, demonstrating that they suffer from an underestimation issue due to the lack of adjustments on off-policy data, which are typically collected by non-optimal behavioral policies. Furthermore, it is shown that the risk increases as the step size increases, limiting their capacity to effectively leverage information from distant future time steps.
To address this issue, we introduce a mechanism to safely control the utilization of multi-step information. Formally, we propose the following \emph{\highwayOperatorText/} (also referred to as \emph{\highwayGenOperatorText/} to distinguish it from other newly proposed operators), defined by:
\begin{equation}\label{eq_highwayOperator}
\begin{aligned}
 & 
 \highwayOperator
 Q
 (s,a)  
\triangleq 
\EPolicyStep
\overset{ \emph{\footnotesize \text{{$n$-step highway expected return}}} }{\overbrace{
    {
        \underset{ \emph{ \footnotesize  \color{maxStepOneN} \text{highway gate}}  }{{
            \color{maxStepOneN}
            \underbrace{
                \max_{n' \in \{1, n\} }
            }
        }}
    }
    \underset{   \emph{ \footnotesize \text{$n'$-step expected return} } }{\underbrace{
            \left( \left( \mathcal{B} ^{\pi} \right) ^{n'-1}\mathcal{B}  Q \right) 
            (s,a) 
    }}
}}
\end{aligned}
\end{equation}
Compared to the existing \multistepBOOperatorText/ (see eq. \ref{eq_multistepBOOperator}), the only difference is the use of a \emph{highway gate}, which compares the $n$-step expected return with the $1$-step expected return and selects the maximum value.
We refer to the maximum among these two returns as the \emph{$n$-step highway expected return} (as shown in eq. 
\ref{eq_highwayOperator}).
\Cref{fig_backupdiagram} illustrates the backup diagram of this operator. 
Below we detail the properties of this mechanism.

\paragraph{Significance of the highway gate.}
In the following, we demonstrate that the highway gate is able to mitigate the underestimation issue and ensure convergence to the optimal VF. 
As implied by \cref{eq_highwayOperator}, the highway gate selects between the $n$-step and $1$-step expected return.
Recall that the $n$-step expected return, $\left( \mathcal{B} ^{\pi} \right) ^{n-1}\mathcal{B}  Q$, incorporates more distant future information but is highly dependent on the behavioral policy $\pi$, posing the risk of underestimation.
The $1$-step expected return, $\mathcal{B}  Q$, is less efficient but more reliable than the $n$-step expected return because it is less dependent on the behavioral policy.
The highway gate combines the best of these two returns depending on the behavioral policy $\pi$ and the VF $Q$, as detailed below.
Under the case $Q=Q^*$, 
the highway gate will always ignore the $n$-step expected return induced by the non-optimal behavioral policies and take the $1$-step expected return instead.
This is because $\left( \mathcal{B} ^{\pi} \right) ^{n-1}\mathcal{B}  Q^* \leq \mathcal{B}  Q^* $ holds for any $\pi\neq \pi^*$. 
Then we have
$
\max\limits_{n' \in \{1, n\} }
\left( \left( \mathcal{B} ^{\pi} \right) ^{n'-1}\mathcal{B}  Q^* \right) 
= \mathcal{B} Q^* =  Q^*
$, indicating that the underestimation issue vanishes.
Formally, we have the following theorem.

\begin{restatable}[]{theorem}{theoremHighwayEquation} \label{theoremHighwayEquation}
    For any $  ( \policySet, \stepSet, \policyDist[], \stepDist[] )$ , we have \\
    \noindent{}1)~
         $\highwayOperator$ is a contraction on the complete metric space $l_{\infty}( \sSpace \times \aSpace )$, i.e.
        for any $Q, Q' \in  l_{\infty}( \sSpace \times \aSpace )$, we have $
    \|  \highwayOperator Q - \highwayOperator Q' \|
    \leq \| \oneStepOperatorMath Q - \oneStepOperatorMath Q' \|
    	 \leq \gamma 	\|   Q -  Q' \|;
    $
         
    \noindent{}2)~ (\highwayEquationFull/) $Q^*$ is the only fixed point of $\highwayOperator$, which means that for all $Q \in l_{\infty}(\sSpace \times \aSpace)$ it holds that $Q = \highwayOperator Q$ if and only if $Q=Q^*$. Formally, we have
    $
    Q^* = 
\EPolicyStep[black]
    \max_{n' \in \{1, n\} }
    \left( \left( \mathcal{B} ^{\pi} \right) ^{n'-1}\mathcal{B}  Q^* \right).
    $
\end{restatable}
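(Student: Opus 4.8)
The plan is to treat the two claims largely independently: part~1 by a direct pointwise estimate together with the standard contraction properties of the one-step operators $\mathcal{B}$ and $\mathcal{B}^{\pi}$, and part~2 by first checking that $Q^*$ solves the fixed-point equation and then invoking the Banach fixed-point theorem on the contraction established in part~1. Throughout I will work pointwise in $(s,a)$, so the fact that $(\policySet,\stepSet,\policyDist,\stepDist)$ may depend on $(s,a)$ causes no difficulty.

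For part~1, I would start from $\highwayOperator Q(s,a) = \E_{\pi\sim\policyDist,\, n\sim\stepDist}\bigl[\max_{n'\in\{1,n\}}\bigl((\mathcal{B}^{\pi})^{n'-1}\mathcal{B} Q\bigr)(s,a)\bigr]$ and use the elementary inequalities $|\max(x_1,x_2)-\max(y_1,y_2)|\le\max(|x_1-y_1|,|x_2-y_2|)$ and $|\E[X]-\E[Y]|\le\E|X-Y|$. This reduces the estimate to bounding, for each fixed $\pi$ and $n$, the two branches: the $n'=1$ branch $|\mathcal{B} Q(s,a)-\mathcal{B} Q'(s,a)|\le\|\mathcal{B} Q-\mathcal{B} Q'\|$ by definition of the sup norm, and the $n'=n$ branch $|((\mathcal{B}^{\pi})^{n-1}\mathcal{B} Q)(s,a)-((\mathcal{B}^{\pi})^{n-1}\mathcal{B} Q')(s,a)| \le \gamma^{\,n-1}\|\mathcal{B} Q-\mathcal{B} Q'\|\le\|\mathcal{B} Q-\mathcal{B} Q'\|$, using that $\mathcal{B}^{\pi}$ is a $\gamma$-contraction (hence so is its $(n-1)$-fold composition) and $\gamma<1$, $n\ge1$. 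Taking the supremum over $(s,a)$ and then chaining with the $\gamma$-contraction of $\mathcal{B}$ gives exactly the stated chain $\|\highwayOperator Q-\highwayOperator Q'\|\le\|\oneStepOperatorMath Q-\oneStepOperatorMath Q'\|\le\gamma\|Q-Q'\|$; in particular $\highwayOperator$ maps $l_{\infty}(\sSpace\times\aSpace)$ into itself and is a $\gamma$-contraction on this complete metric space.

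For part~2, I would first record two monotonicity facts about the one-step operators: (i) $\mathcal{B}^{\pi}Q\le\mathcal{B} Q$ pointwise for every $Q$ and every $\pi$, since $\E_{a'\sim\pi(\cdot|s')}[Q(s',a')]\le\max_{a'}Q(s',a')$; and (ii) $\mathcal{B}^{\pi}$ is monotone. Combining these with the BO equation $\mathcal{B} Q^*=Q^*$ gives, by induction on $k$, $(\mathcal{B}^{\pi})^{k}Q^*\le Q^*$ for all $k\ge0$ and all $\pi$; hence $(\mathcal{B}^{\pi})^{n'-1}\mathcal{B} Q^*=(\mathcal{B}^{\pi})^{n'-1}Q^*\le Q^*$ for every $n'\ge1$, with equality when $n'=1$. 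Therefore, for every $\pi$ and $n$, $\max_{n'\in\{1,n\}}\bigl((\mathcal{B}^{\pi})^{n'-1}\mathcal{B} Q^*\bigr)(s,a)=Q^*(s,a)$ (the $n'=1$ branch attains $Q^*(s,a)=\mathcal{B} Q^*(s,a)$, the $n'=n$ branch cannot exceed it), and taking the expectation over $\pi\sim\policyDist$, $n\sim\stepDist$ yields $\highwayOperator Q^*=Q^*$. Finally, since part~1 shows $\highwayOperator$ is a contraction on a complete metric space, Banach guarantees it has exactly one fixed point; as $Q^*$ is one, it is the unique one, which gives the equivalence $Q=\highwayOperator Q\iff Q=Q^*$ and the displayed \highwayEquationFull/.

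The argument is mostly bookkeeping, and I do not anticipate a genuine obstacle; the step most worth stating carefully is the monotone induction $(\mathcal{B}^{\pi})^{k}Q^*\le Q^*$, i.e.\ that at $Q^*$ the highway gate's $1$-step branch dominates every non-optimal multi-step branch, since this is precisely the mechanism by which the underestimation of \Cref{theoremMultiStepBOOperatorBiased} is removed. The other point to be slightly careful about is that every inequality in part~1 is derived for each individual $\pi$ and $n$ before any averaging, so that the $\max$ and expectation steps only ever preserve (indeed contract) these bounds, making the estimate robust to the $(s,a)$-dependence of the selection data.
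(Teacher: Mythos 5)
Your proposal is correct and follows essentially the same route as the paper's proof: part 1 reduces the bound to the individual branches via non-expansiveness of $\max$ and expectation, then uses that $(\mathcal{B}^{\pi})^{n-1}$ is a $\gamma^{\,n-1}$-contraction composed with the $\gamma$-contraction $\mathcal{B}$; part 2 establishes $(\mathcal{B}^{\pi})^{n'-1}\mathcal{B}Q^* \le Q^* = \mathcal{B}Q^*$ by monotonicity so that the $1$-step branch of the gate attains the max, and concludes uniqueness via Banach. The only difference is presentational: you spell out the elementary inequality $|\max(x_1,x_2)-\max(y_1,y_2)|\le\max(|x_1-y_1|,|x_2-y_2|)$ where the paper passes directly to a $\max$ over $\pi$ and $n\in\{1\}\cup\stepSet$ of sup-norm differences.
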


As the theorem implies, our \highwayOperatorText/ converges to the optimal VF with arbitrary lookahead depths and off-policy data collected by arbitrary behavioral policies. 
We illustrate this convergence property intuitively by the aforementioned \Cref{example_MDP}. 
Similar to the $n$-step BO Operator, we consider $n$-step Highway Operator with $\stepSet=\{n\}$ and a uniform distribution of $\policyDist$.
As shown in \Cref{fig_Final_Q_wrt_nstep_BO}, the $n$-step Highway Operators can always converge to the correct optimal VF with any lookahead depth $n$, with the presence of non-optimal {\color{black}orange} and {\color{black}red} behavioral policies.
Besides, as shown in \Cref{fig_chosen_nstep}, as the updated VFs of $10$-step \highwayOperatorText/ converge after iteration $3$, the $1$-step expected return is always chosen for the non-optimal {\color{black}orange} and {\color{black}red} policies.

Moreover, under the case $Q\neq Q^*$, the $n$-step expected return will be selected as long as it provides a larger value than the $1$-step expected return, i.e., $\left( \mathcal{B} ^{\pi} \right) ^{n-1}\mathcal{B}  Q \geq \mathcal{B}  Q $. Therefore, the selection depends on both the behavioral policy $\pi$ and the VF $Q$.
For example, as shown in \Cref{fig_chosen_nstep}, for the {\color{black}orange policy}, which ranks the medium in performance among the three policies, the $n$-step expected return is chosen at the beginning of the learning process, and the $1$-step expected return is chosen at the end.
For the {\color{black}red policy}, which performs the worst, the $1$-step expected return is always chosen.
In contrast, the {\color{black}blue policy}, which performs the best, the $n$-step expected return is always chosen.

Based on the above discussion, it is noteworthy that adaptively adjusting the lookahead depth in a proper manner can be an alternative to the importance sampling technique. This approach guarantees convergence to the optimal VF under multi-step off-policy learning.
Additionally, the highway gate is the essential mechanism contributing to the convergence property. 
The absence of the highway gate of \multistepBOOperatorText/ leads to the underestimation issue (as formally stated in \Cref{theoremMultiStepBOOperatorBiased}).

\paragraph{Critical components within the highway gate.}
Below, we highlight several critical components within the highway gate that contribute to the convergence property.

    {First}, the maximization operation in the highway gate, i.e., ${\color{maxStepOneN}\max_{n' \in \{1, n\} }}( \cdot )$, plays an important role in ensuring the convergence property. 
    If we replace the maximization with expectation, i.e., ${\color{maxStepOneN}\E_{n' \sim \dist[][\{1, n\}] }}(\cdot)$, which takes the {expectation} over both the $1$-step and $n$-step expected returns according to a specified distribution $\dist[][\{1, n\}]$, then the convergence property cannot be ensured. This is because $ {\color{expectationPolicyStep} \E_{n \sim \dist[][\stepSet]} } {\color{maxStepOneN}\E_{n' \sim \dist[][\{1, n\}] }}(\cdot)$ reduces to ${\color{expectationPolicyStep}\E_{n \sim \dist[][\{1\} \cup \stepSet]}} $, which is exactly a special case of \multistepBOOperatorTextMath/.

    Second, the $1$-step expected return, which can be regarded as a threshold within the highway gate ${\color{maxStepOneN}\max_{n' \in \{ 1, n\} }}( \cdot )$, is also critical. 
    If we replace the $1$-step expected return with the $n_1$-step expected return, i.e., ${\color{maxStepOneN}\max_{n' \in \{n_1, n\} }}$ where $n_1>1$, then the convergence property cannot be guaranteed.
    \citet{she2017learning} proposed using $n_1=0$\footnote{
    For notation simplicity, we define $\max\limits_{n' \in \{n_1, n\} }  \left( \left( \mathcal{B} ^{\pi} \right) ^{n'-1}\mathcal{B}  Q \right) \triangleq \max \left\{ Q, \left( \mathcal{B} ^{\pi} \right) ^{n-1}\mathcal{B}  Q  \right\} $ when $n_1=0$.}, which refer to using the $Q$ VF itself as the threshold. However, this variant cannot guarantee convergence either.
    We provide a formal statement in the theorems below.

\begin{restatable}[]{remark}{theoremNoonestepBiased} \label{theoremNoonestepBiased}
\def\QFixedPoint{  \widehat{Q}^* }
\def\nExist{ \grave{n} }
For the operator $\wronghighwayOperator Q \triangleq 
\EPolicyStep
{\color{maxStepOneN}
    \max_{n' \in \{n_1, n\} } 
}
\left( \left( \mathcal{B} ^{\pi} \right) ^{n'-1}\mathcal{B}  Q \right) 
$ (where $n_1 > 1$), which replace ${\color{maxStepOneN} \max_{n' \in \{1,n\}}}$ with ${\color{maxStepOneN}\max_{n' \in \{n_1, n\} }}$ in \cref{eq_highwayOperator}.
Assume that there exists at least one $\nExist \in \stepSet$ such that $\nExist > 1$ and $\stepDist( \nExist ) > 0$.
1) $\wronghighwayOperator$ has a fixed point $ \QFixedPoint $, i.e., $\wronghighwayOperator \QFixedPoint = \QFixedPoint $, and we have
$ \QFixedPoint \leq Q^* $.
2) $\QFixedPoint = Q^*$
if and only if any $\pi \in \policySet$, $\policyDist(\pi) >0$ satisfies 
$\pi(s) = \pi^*(s)$ for any $s\in \{ s_1 \in \sSpace | \exists s_0,a_0, \mathcal{T}( s_1 |s_0,a_0 ) >0 \}$ and $\pi^*$ an optimal policy.
\end{restatable}

\begin{restatable}[]{remark}{theoremNoonestepBiasedZero} \label{theoremNoonestepBiasedZero}
\def\QFixedPoint{  \widehat{Q}^* }
For the operator $\wronghighwayOperator  $ with $n_1=0$, for any $\QFixedPoint \geq Q^*$, we have $\wronghighwayOperator \QFixedPoint = \QFixedPoint$.
\end{restatable}

The above theorems highlight the importance of employing the $1$-step expected return as a threshold in the highway gate.
Note that this conclusion does not imply that our operator collapses to always using the $1$-step expected return (which is what exactly the classical (One-Step) BO Operator does).
As illustrated in \Cref{fig_chosen_nstep}, our Highway Operator alternates between $n$-step and $1$-step expected returns in practice.

\begin{figure*}[!b]
    \centering
    \graphicspath{{figsv2/illustrate_example/}}
    
    \def\height{0.27}
    \subfloat[The Fixed Point]{
        \includegraphics[height=\height\linewidth]{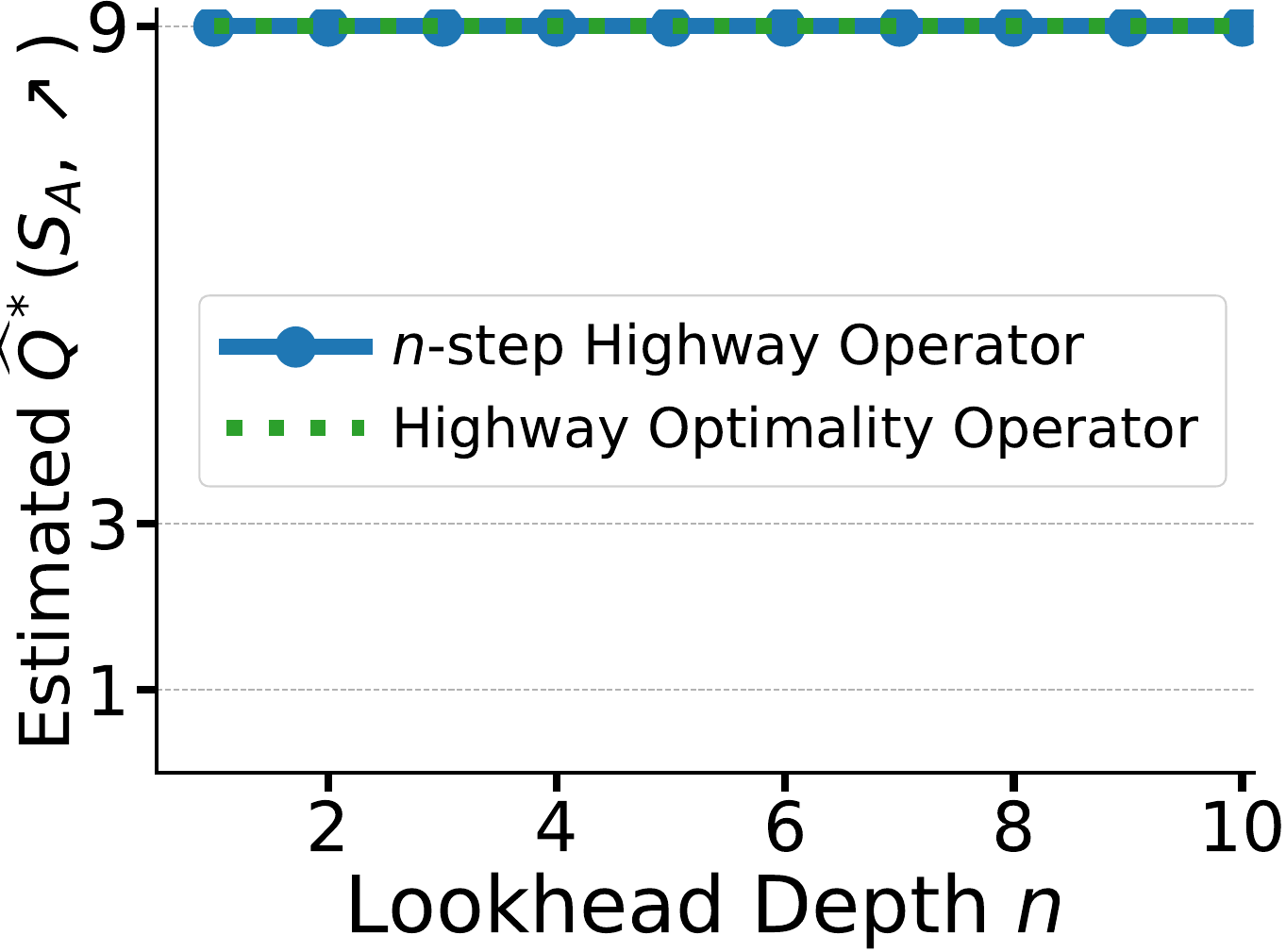}
         \label{fig_Final_Q_wrt_nstep_Highway}
    }
    \subfloat[Iterations of Convergence]{
        \includegraphics[height=\height\linewidth]{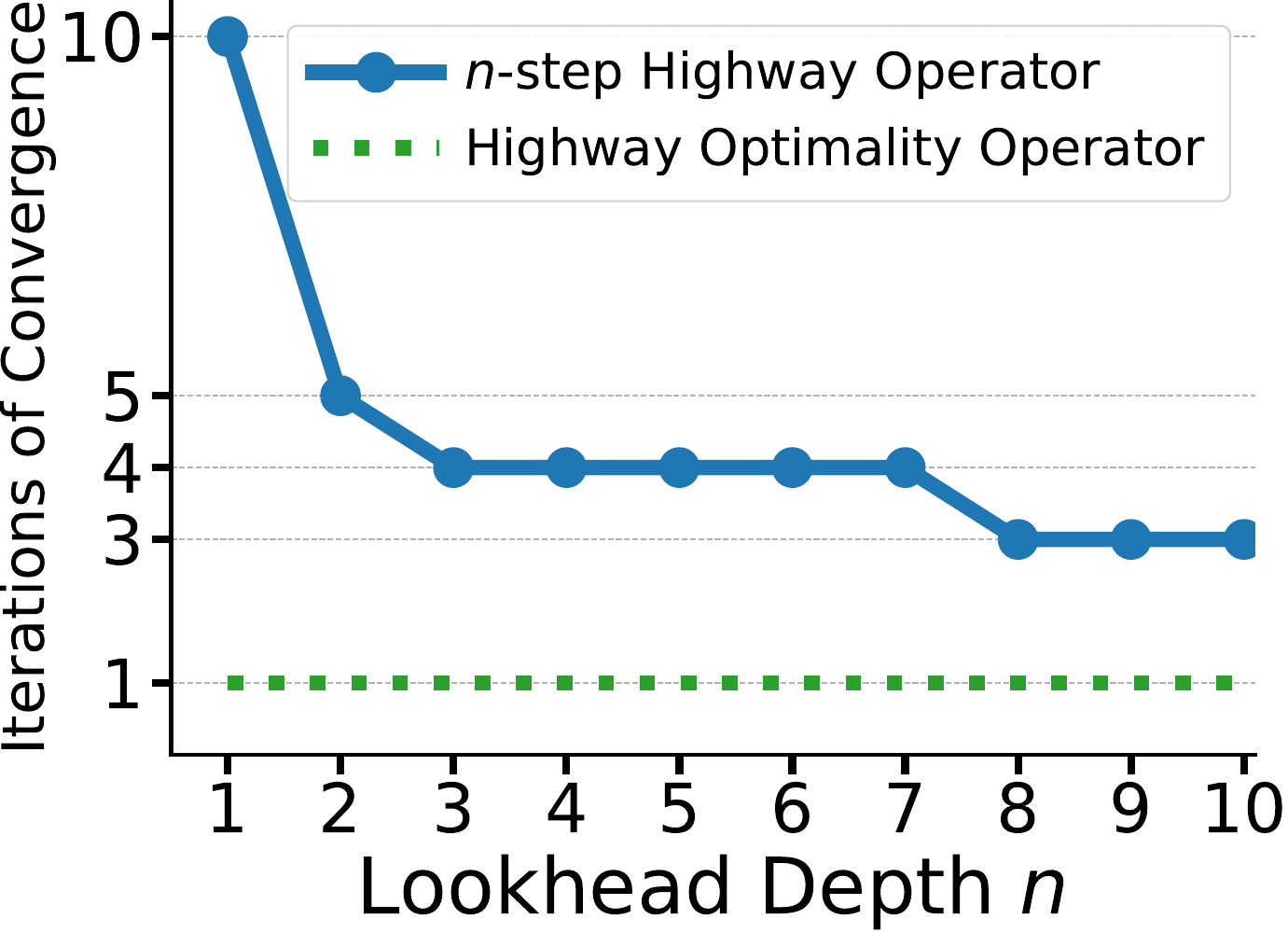}
         \label{fig_iterations_wrt_nstep_Highway}
    }
    \subfloat[{The chosen $n'$-step during the update}]{
        \includegraphics[height=\height\linewidth]{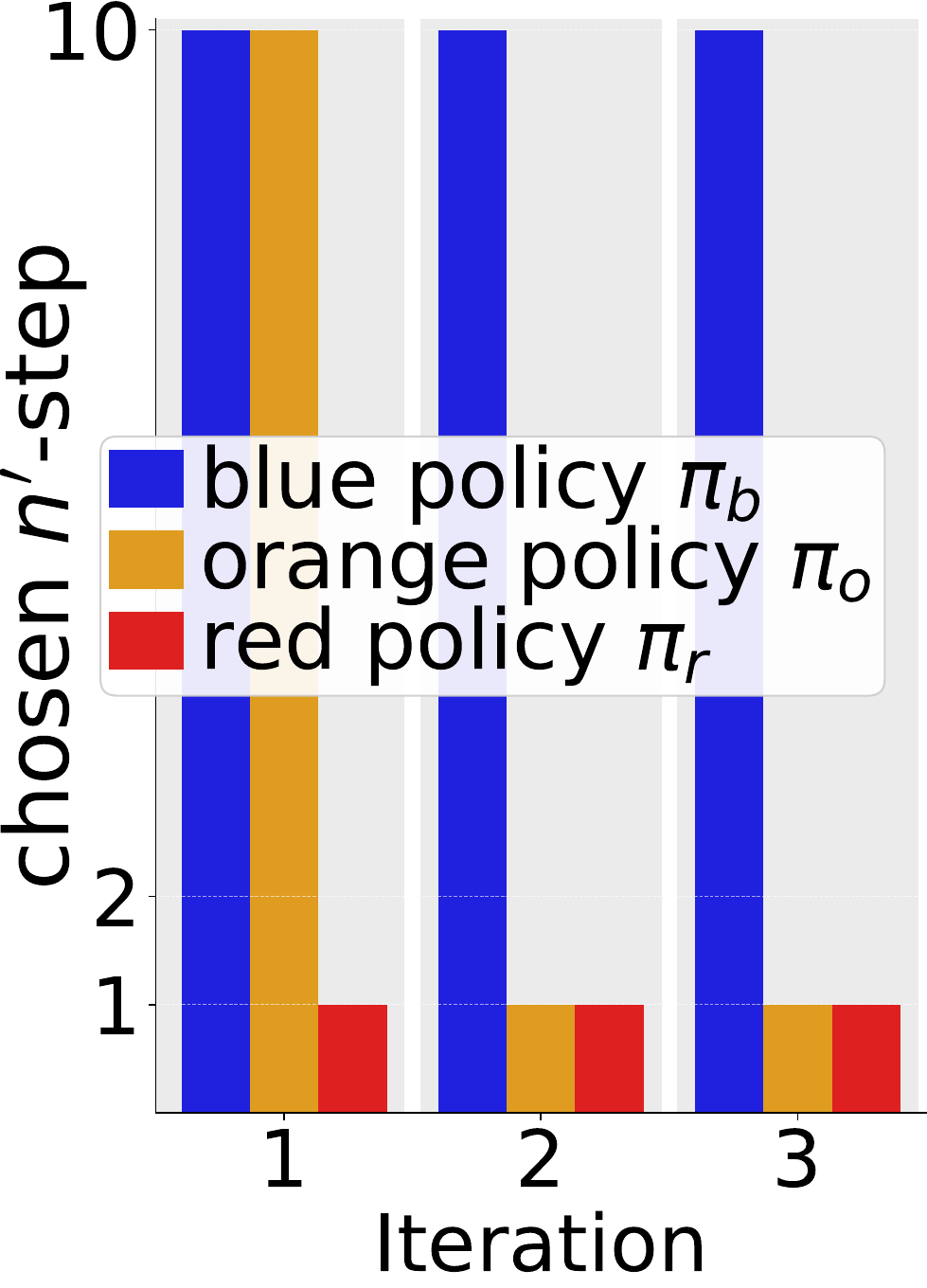}
         \label{fig_chosen_nstep}
    }
   \caption{
   \subref{fig_Final_Q_wrt_nstep_Highway} shows the fixed point of $n$-step \highwayGenOperatorText/ $\highwayOperator[][]$ with various lookahead depths $n$ and \highwayOptOperatorTextMath/ in the \Cref{example_MDP}, respectively.
   \subref{fig_iterations_wrt_nstep_Highway} shows the required iterations of $n$-step \highwayGenOperatorText/ $\highwayOperator[][]$  with various lookahead depths $n$ and \highwayOptOperatorTextMath/, respectively. The initial value function $Q(s,a)=0$ for all $(s,a)$.
   \subref{fig_chosen_nstep} shows the actual chosen $n'$-step for $10$-step \highwayOperatorText/, i.e., $\argmax_{n'\in\{1,n\}}\left((\mathcal{B}^\pi)^{n'-1}\mathcal{B}Q\right)$ for $\pi \in \{ {\color{blue}\pi_b} , {\color{orange}\pi_o}, {\color{red}\pi_r} \}$.
   }
\end{figure*}

\subsection{Accelerated Learning by Aggregating Information from Various Lookahead Depths and Behavioral Policies}\label{sec_highwayOptOperator}
In this section, we study how to further facilitate rapid learning by aggregating information from various behavioral policies and lookahead depths.
Specifically, we tackle this problem by determining the selection distribution over the set of behavioral policies and lookahead depths, i.e., $\policyDist$ and $\stepDist$, respectively.
To achieve this, we introduce a new \emph{\highwayOptOperatorText/}, defined by:

\begin{equation}\label{eq_highwayOptOperator}
   \highwayOptOperator Q  \triangleq 
   \MaxPolicyMaxStep
    \left( \left( \mathcal{B} ^{\pi} \right) ^{n'-1}\mathcal{B}  Q \right).
\end{equation}
As implied by the equation, this operator maximizes the $n$-step highway expected returns induced by various behavioral policies $\pi$ and lookahead depths $n$.
This operator can be viewed as a special case of the \highwayGenOperatorTextMath/ (eq. \ref{eq_highwayOperator}), as it assigns a {probability of one} to the behavioral policy $\pi$ and \stepText/ $n$ that yield the highest value. 
Interestingly, it can be shown that this operator provides the optimal choice for the selection distributions $\policyDist$ and $\stepDist$ when $Q\leq Q^*$.
To verify this, we first define a measure of the superiority of a given operator $\mathcal{K}$. We propose using the distance between the updated VF $(\mathcal{K} Q)$ and the optimal VF $Q^*$, as the goal of value-based RL is to minimize this distance. 
Formally, the distance is defined by:

\begin{equation*}
\begin{aligned}
\distance[ \mathcal{K} ] \triangleq
\left| (\mathcal{K} Q) (s,a) -Q^*(s,a) \right|,
\quad\quad
\distanceAll[ \mathcal{K} ] \triangleq \max_{s\in \sSpace,a \in \aSpace }| d_{\mathcal{K}}(s,a) |.
\end{aligned}
\end{equation*}
Note that a smaller distance indicates a faster convergence speed. 
We have the following theorem.

\begin{restatable}[]{theorem}{theoremCompareHighwayOperatorHighwayOptOperator} \label{theoremCompareHighwayOperatorHighwayOptOperator}
(Comparison between \highwayOptOperatorText/ $\highwayOptOperator$ and \highwayGenOperatorText/ $\highwayOperator$)
Given the same $\policySet$ and $\stepSet$,
if $Q\leq Q^*$,
then we have: 
\begin{equation*}
    d_{\highwayOptOperator}(s,a) = \min_{ \policyDist, \stepDist  } d_{\highwayOperator}(s,a), \text{ for any } s \in \sSpace, a \in \aSpace.
\end{equation*}
\end{restatable}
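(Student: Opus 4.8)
The goal is to show that the Highway Optimality Operator attains, pointwise in $(s,a)$, the minimum over all selection distributions $\policyDist,\stepDist$ of the distance $d_{\highwayOperator}(s,a) = |(\highwayOperator Q)(s,a) - Q^*(s,a)|$, whenever $Q \le Q^*$. The key structural fact I would exploit is that both operators produce, at $(s,a)$, a convex combination (via $\policyDist$ and $\stepDist$) of the quantities $g_{\pi,n}(s,a) := \max_{n'\in\{1,n\}} \big((\mathcal B^\pi)^{n'-1}\mathcal B Q\big)(s,a)$, whereas $\highwayOptOperator$ takes the \emph{maximum} of these same quantities over $\pi\in\policySet$, $n\in\stepSet$. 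So the plan is: (i) establish a uniform upper bound $g_{\pi,n}(s,a) \le Q^*(s,a)$ for every $\pi,n$ under the hypothesis $Q\le Q^*$; (ii) deduce that both $(\highwayOperator Q)(s,a)$ and $(\highwayOptOperator Q)(s,a)$ lie at or below $Q^*(s,a)$, so that the absolute values become $d_{\highwayOperator}(s,a) = Q^*(s,a) - (\highwayOperator Q)(s,a)$ and likewise for $\highwayOptOperator$; (iii) observe that minimizing the distance is then equivalent to \emph{maximizing} the operator value, and that the maximum of a convex combination of the $g_{\pi,n}$ is achieved by placing all mass on the maximizing index — which is exactly what $\highwayOptOperator$ does.

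For step (i), I would argue: since $Q\le Q^*$, monotonicity of $\mathcal B$ and $\mathcal B^\pi$ gives $\mathcal B Q \le \mathcal B Q^* = Q^*$ and $(\mathcal B^\pi)^{n'-1}\mathcal B Q \le (\mathcal B^\pi)^{n'-1}\mathcal B Q^* = (\mathcal B^\pi)^{n'-1} Q^*$. Then I need $(\mathcal B^\pi)^{k} Q^* \le Q^*$ for all $k\ge 0$, which follows because $\mathcal B^\pi Q^* \le \mathcal B Q^* = Q^*$ (the Bellman expectation backup never exceeds the Bellman optimality backup) together with monotonicity iterated. Taking the max over $n'\in\{1,n\}$ preserves the bound, so $g_{\pi,n}(s,a)\le Q^*(s,a)$ for every $\pi,n$. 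For step (ii), $(\highwayOperator Q)(s,a) = \E_{\pi\sim\policyDist, n\sim\stepDist}[g_{\pi,n}(s,a)]$ is an average of numbers $\le Q^*(s,a)$, hence $\le Q^*(s,a)$; and $(\highwayOptOperator Q)(s,a) = \max_{\pi\in\policySet, n\in\stepSet} g_{\pi,n}(s,a) \le Q^*(s,a)$ as well. So both distances are $Q^*(s,a)$ minus the respective operator value, and $\min_{\policyDist,\stepDist} d_{\highwayOperator}(s,a) = Q^*(s,a) - \max_{\policyDist,\stepDist}(\highwayOperator Q)(s,a)$.

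For step (iii), the inner maximization: $\max_{\policyDist,\stepDist} \E_{\pi\sim\policyDist, n\sim\stepDist}[g_{\pi,n}(s,a)]$. Since the product distribution $\policyDist\otimes\stepDist$ over $\policySet\times\stepSet$ ranges over all product distributions, and a linear functional over a (product) simplex is maximized at a vertex, the supremum equals $\max_{\pi,n} g_{\pi,n}(s,a)$, attained by the Dirac choice on an optimal $(\pi^\star, n^\star)$ — which is exactly the definition of $\highwayOptOperator$. (One subtlety: the feasible set for $(\policyDist,\stepDist)$ is product distributions, not arbitrary joint distributions on $\policySet\times\stepSet$; but the max of $\E[g_{\pi,n}]$ over product distributions already equals $\max_{\pi,n} g_{\pi,n}$, since Diracs are product distributions, so this causes no loss.) Hence $(\highwayOptOperator Q)(s,a) = \max_{\policyDist,\stepDist}(\highwayOperator Q)(s,a)$, and subtracting from $Q^*(s,a)$ gives $d_{\highwayOptOperator}(s,a) = \min_{\policyDist,\stepDist} d_{\highwayOperator}(s,a)$, as claimed.

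The main obstacle I anticipate is the careful handling of the $(s,a)$-dependence of the sets and distributions $(\policySet,\stepSet,\policyDist,\stepDist)$ — the paper allows these to vary with $(s,a)$ — and making sure the pointwise minimization statement is not conflated with a uniform (sup-norm) one; the theorem as stated is genuinely pointwise, so the vertex argument must be applied at each fixed $(s,a)$ independently, which is fine. A second, more technical point is confirming the inequality $(\mathcal B^\pi)^k Q^* \le Q^*$ cleanly, i.e., that $\mathcal B^\pi$ is monotone and that $\mathcal B^\pi Q^* \le Q^*$; both are standard but worth stating explicitly since they are the linchpin of step (i). Everything else is routine manipulation of maxima, expectations, and absolute values.
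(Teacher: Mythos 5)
Your proposal is correct and follows essentially the same route as the paper's proof: establish $(\mathcal{B}^{\pi})^{n'-1}\mathcal{B}Q \le Q^*$ via monotonicity so that both operator outputs sit below $Q^*$ and the absolute values become plain differences, then note that maximizing the expectation over the selection distributions is attained at a Dirac choice, which is precisely $\highwayOptOperator$. The extra care you take with the product-distribution-versus-joint-distribution point and the pointwise $(s,a)$ dependence is sound but not needed beyond what the paper already does.
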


This theorem suggests that the maximization operation in \highwayOptOperatorText/ provides the optimal choice for aggregating bootstrapped values induced by various \policiesText/ and \steps/.
Consider the \Cref{example_MDP} again: we compare the Highway Generalized and Optimality Operators $\highwayOperator$/$\highwayOptOperator$, using the same $\policySet=\{{\color{blue}\pi_b} , {\color{orange}\pi_o}, {\color{red}\pi_r} \}$ and $\stepSet=\{1, 2, \cdots, 10 \}$. The $\highwayOperator$ uses a uniform distribution for both $\policyDist$ and $\stepDist$.
As shown in \Cref{fig_iterations_wrt_nstep_Highway}, \highwayOptOperatorText/ requires one iteration to converge, which is fewer than the three iterations needed for the \highwayGenOperatorText/.

Note that the optimality of \highwayOptOperatorText/ mainly applies to the tabular case.
In the function approximation scenario, this maximization operation can potentially aggravate the overestimation issue due to function approximation errors \citep{van2016deep}. 
Therefore, we propose using the softmax operation instead of the maximization operation, which has been widely employed in recent RL literature \citep{fox2015taming, haarnoja2017reinforcement, schulman2017equivalence, song2019revisiting}. 
The new variant, named \emph{\highwaySoftmaxOperatorText/}, is defined as follows:
\begin{equation}\label{eq_softmaxHighwayQOperator}
\highwaySoftmaxOperatorMath Q 
\triangleq
\SoftmaxPolicySoftmaxStep
\left(
(\oneStepOperator[\pi] )^{n'-1} \oneStepOperator Q 
\right)
\end{equation}
\begin{equation*}
\text{ where }
    \underset{x\in \mathcal{X}}{\softmax } 
f\left( x \right) 
\triangleq 
\sum_{x\in \mathcal{X}}
\underset{\text{\emph{softmax weight}}}{
\underbrace{
\frac
{{\exp \left( \alpha f\left( x \right) \right)}  }
{\sum_{x'\in \mathcal{X}}{\exp \left( \alpha f\left( x' \right) \right)}}
}
}
f\left( x \right). 
\end{equation*}
This operator can be regarded as a generalization of the \highwayOptOperatorTextMath/ (eq. \ref{eq_highwayOptOperator}), where $\softmax$ reduces to $\max$ when $\alpha\rightarrow \infty$.
It can also be regarded as a special case of the \highwayGenOperatorTextMath/, where the probability $\policyDist(\pi)$ and $\stepDist(n)$ are equal to the corresponding softmax weights\footnote{Note that the softmax temperature $\alpha$ can be different for $\softmax_{\pi \in \policySet}$ and $\softmax_{n \in \stepSet}$. However, we use the same symbol for convenience throughout the paper.}.
In \Cref{sec_method}, we will derive algorithms based on this operator where the Q VF is approximated using a deep neural network.

In summary, the highway gate, which performs maximization over the $1$-step and the $n$-step expected returns, ${\color{maxStepOneN} \max_{ n' \in \{1,n\} } }(\cdot)$, ensures the convergence to the optimal VF; 
the maximization operation over various behavioral policies and lookahead depths, ${\color{maxPolicyStep} \max\limits_{\pi \in \policySet} \max\limits_{ n \in \stepSet } }(\cdot)$, can increase learning efficiency; 
the softmax operation, ${\color{softmaxPolicyStep}{ \underset{\pi\in \policySet } {\mathop {smax}\nolimits^{\alpha} } \underset{n\in \mathcal{N}} {\mathop {smax}\nolimits^{\alpha} } }}(\cdot)$, offers a mechanism to balance a trade-off between efficiency and overestimation.

Note that both the \highwayOptOperatorTextMath/ and the \highwaySoftmaxOperatorTextMath/ can be regarded as a special case of the \highwayGenOperatorTextMath/. Therefore, they also converge to the optimal VF (please refer to \Cref{theorem_sofmaxHighwayOperator} for a formal statement).
\Cref{table_summary} provides a summary of the proposed \highwayOperatorText/s, 
their variants with certain limitations, classical Bellman operators, and other advanced operators.

\subsection{Comparison of Convergence Speed with One/Multi-step BO Operators}

We now compare the convergence speed of the \highwayOperatorText/s to that of the One/Multi-Step BO Operators.
We first analyze the \highwayGenOperatorText/ (\highwayOperatorText/ in short) and then discuss the \highwayOptOperatorText/.
Formally, we present the following theorems.

\begin{restatable}[]{theorem}{theoremCompareOnestepBOOperator} \label{theoremCompareOnestepBOOperator}
(Comparison between the \highwayGenOperatorText/ $\highwayOperator$ and the One/Multi-Step BO Operator\footnote{Recall that the Multi-step BO Operator generally cannot converge to the optimal VF $Q^*$ in the off-policy setting. Nevertheless, we analyze its convergence speed toward the optimal VF $Q^*$.} $\BOOperator / \multistepBOOperator$)
For all $Q \in l_{\infty}(\sSpace \times \aSpace )$, the following holds:1) Assume $Q\le Q^*$.
For any $s\in \sSpace, a \in \aSpace$, we have 
$\distance[\highwayOperator[][]] \leq \min \left\{ \distance[\BOOperator], \distance[\multistepBOOperator[][]] \right\}$;
2) 
$
\distanceAll[\highwayOperator[][]] \leq \distanceAll[\BOOperator]
$.
\end{restatable}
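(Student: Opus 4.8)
The plan is to handle the two claims separately: claim 2) drops out in one line from the non-expansiveness bound of \Cref{theoremHighwayEquation}, while claim 1) is a monotonicity argument that exploits the explicit structure of the highway gate $\max_{n'\in\{1,n\}}$.

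For claim 1), the first step is to observe that under the hypothesis $Q \le Q^*$ all three operators land weakly below $Q^*$. Using monotonicity of $\mathcal{B}$ and of each $\mathcal{B}^{\pi}$, together with $\mathcal{B} Q^* = Q^*$ and the bound $(\mathcal{B}^{\pi})^{k}Q^* \le Q^*$ for all $k\ge 0$ (itself obtained from $\mathcal{B}^{\pi}Q^* \le \mathcal{B} Q^* = Q^*$ by iterating, as already used in the proof of \Cref{theoremHighwayEquation}), I get $(\mathcal{B}^{\pi})^{n'-1}\mathcal{B} Q \le (\mathcal{B}^{\pi})^{n'-1}\mathcal{B} Q^* \le Q^*$ for every $\pi$ and $n'$, hence $\BOOperator Q \le Q^*$, $\multistepBOOperator[][] Q \le Q^*$, and $\highwayOperator[][] Q \le Q^*$ pointwise. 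Consequently each distance $d_{\mathcal{K}}(s,a)$ equals $Q^*(s,a) - (\mathcal{K}Q)(s,a)$, and it remains to establish the reverse inequality $(\highwayOperator[][] Q)(s,a) \ge \max\{(\BOOperator Q)(s,a),\,(\multistepBOOperator[][] Q)(s,a)\}$.

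This reverse inequality is exactly where the highway gate does the work: since $\max_{n'\in\{1,n\}}$ always contains the index $n'=1$, whose summand is $\mathcal{B} Q$, and the index $n'=n$, whose summand is $(\mathcal{B}^{\pi})^{n-1}\mathcal{B} Q$, I can lower-bound the inner maximum by either of these and then take the outer expectation over the selection distributions:
\begin{align*}
(\highwayOperator[][] Q)(s,a) &\ge \E_{\pi\sim\policyDist,\,n\sim\stepDist}\big[(\mathcal{B} Q)(s,a)\big] = (\mathcal{B} Q)(s,a) = (\BOOperator Q)(s,a),\\
(\highwayOperator[][] Q)(s,a) &\ge \E_{\pi\sim\policyDist,\,n\sim\stepDist}\big[((\mathcal{B}^{\pi})^{n-1}\mathcal{B} Q)(s,a)\big] = (\multistepBOOperator[][] Q)(s,a),
\end{align*}
where the first line uses that $\mathcal{B} Q$ does not depend on $\pi,n$ and the second uses that $\highwayOperator[][]$ and $\multistepBOOperator[][]$ average against the very same selection distributions. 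Combining with the upper bounds from the previous step yields $\distance[\highwayOperator[][]] \le \min\{\distance[\BOOperator],\distance[\multistepBOOperator[][]]\}$, which is claim 1).

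For claim 2), I would set $Q'=Q^*$ in the non-expansiveness statement of \Cref{theoremHighwayEquation}, namely $\|\highwayOperator[][] Q - \highwayOperator[][] Q'\| \le \|\mathcal{B} Q - \mathcal{B} Q'\|$; since $Q^*$ is a fixed point both of $\highwayOperator[][]$ (\Cref{theoremHighwayEquation}) and of $\mathcal{B}$ (the BO equation), this becomes $\|\highwayOperator[][] Q - Q^*\| \le \|\mathcal{B} Q - Q^*\| = \|\BOOperator Q - Q^*\|$, i.e.\ $\distanceAll[\highwayOperator[][]] \le \distanceAll[\BOOperator]$, valid for all $Q$ with no sign restriction. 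I do not anticipate a genuine obstacle; the only point requiring care is the pointwise bound $(\mathcal{B}^{\pi})^{n'-1}\mathcal{B} Q^* \le Q^*$ underpinning claim 1) (so that all three operators stay below $Q^*$ and the absolute values may be dropped), and that monotonicity-and-iteration fact is already available from the earlier arguments in the paper.
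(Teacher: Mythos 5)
Your proposal is correct and follows essentially the same route as the paper's proof: for part 1) you use that the highway gate's inner maximum over $\{1,n\}$ dominates both the $1$-step and the $n$-step summands (so $\highwayOperator[][] Q \ge \BOOperator Q$ and $\highwayOperator[][] Q \ge \multistepBOOperator[][] Q$), combined with monotonicity to keep all three operators below $Q^*$ when $Q \le Q^*$; for part 2) you invoke the non-expansiveness bound of \Cref{theoremHighwayEquation} with $Q' = Q^*$ and the fixed-point property, exactly as the paper does.
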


\begin{restatable}[]{remark}{theoremHighwayStrictlyBetterOnestepMultistepBO} \label{theoremHighwayStrictlyBetterOnestepMultistepBO}
Given a specific $(s,a)$.
Assume $Q\leq Q^*$.
Let us define $\supp_{ } f \triangleq \{ x \in \mathcal{X}|  f(x) \neq 0\}$ for a function $f$. 
Under the following two conditions: (1) 
there exists $\pi' \in \supp \policyDist$ and $n' \in \supp \stepDist$ such that ${\argmax _{n \in  \{1, n' \}  }}$ $ \left( \left( \mathcal{B} ^{\pi'} \right) ^{n-1}\mathcal{B}  Q \right) \left( s, a \right) > 1$;
(2) there exists $\pi' \in \supp \policyDist$ and $n' \in \supp \stepDist$ such that 
$\argmax _{n \in \left\{1, n' \right\} } \left( \left( \mathcal{B} ^{\pi'} \right) ^{n-1}\mathcal{B}  Q \right)\left( s, a \right) = 1$.
If condition (1) is satisfied, then $\distance[\highwayOperator[][]] <  \distance[\BOOperator]$.
If condition (2) is satisfied, then $\distance[\highwayOperator[][]] <  \distance[\multistepBOOperator[][]]$.
If both conditions (1) and (2) are satisfied, we have
$\distance[\highwayOperator[][]] < \min \left\{ \distance[\BOOperator], \distance[\multistepBOOperator[][]] \right\}.$
\end{restatable}

As \Cref{theoremCompareOnestepBOOperator} 1) implies, the \highwayOperatorText/ is generally more efficient than both One/Multi-Step BO Operators when $Q\leq Q^*$.
\Cref{theoremHighwayStrictlyBetterOnestepMultistepBO} specifies the situations in which the Highway Operator is strictly more efficient than the One/Multi-Step BO Operators.
For the case $ Q \nleqslant Q^* $, the \highwayOperatorText/ remains comparable to the One-step BO Operator in the worst case, as \Cref{theoremCompareOnestepBOOperator} 2) implies.
Note that the condition $Q \leq Q^*$ is not strong.
Provided that the initial VF $Q_0 \leq Q^*$, then the resulting VFs $(\highwayOperator[][])^k Q_0  $, obtained by applying $\highwayOperator[][]$ multiple times, will not exceed the optimal VF $Q^*$. Formally, we present the following remark.

\begin{restatable}[]{remark}{theoremAlwaysLeqQopt} \label{theoremAlwaysLeqQopt}
If the {initial} VF $Q_0 \leq Q^*$, then $(\highwayOperator[][])^k Q_0 \leq Q^* $ for any $k>0$.
The same applies to the operators $ \multistepBOOperator[][]$, $\multistepBEOperator[][]$.
\end{restatable}

Note that as the \highwayOptOperatorText/ is superior to the \highwayOperatorText/ (as shown in  \Cref{theoremCompareHighwayOperatorHighwayOptOperator}), the above theorems also hold for the \highwayOptOperatorText/ $\highwayOptOperator[][]$.

\begin{table*}[!t]

        
        \def\YES{{\color{blue} YES}}
        \def\NO{{\color{red} NO}}
        \NewDocumentCommand{\vfTable}{O{}}{ Q^{#1}  }
		\setlength{\tabcolsep}{0pt} 
		\renewcommand\theadfont{\tiny}
		\renewcommand\theadgape{\Gape[0in]}
		\renewcommand\cellgape{\Gape[0in]}
		\tiny
		\centering
			\begin{tabular}{@{}
				+m{1.98in}<{\centering}
                    Ym{2.29in}<{\centering}
				Ym{0.25in}<{\centering}
				Ym{1.1in}<{\centering}
				Ym{0.3in}<{\centering}
				@{}}
				\toprule
                    Operator
                    & Formulation
                    & Fixed Point 
                    & \thead{Guaranteed \\ convergence} 
                    & \thead{NOT \\ IS-Based?}
                     \\
				\midrule
				\multirow{2}{*}{}
                        Bellman Optimality (BO) Operator (\crefnop{eq_oneStepOperator})
                        &$\oneStepOperatorMath$ (see \crefnop{eq_oneStepOperator})
    				& $\color{blue}{ \vfTable[*]}$ 
    				& {\color{blue}{/}} 
        				& \YES
				\\
                        Bellman Expectation (BE) Operator (\crefnop{eq_oneStepExpectationOperator})
                        &$\oneStepOperatorMath[\pi']$(see \crefnop{eq_oneStepExpectationOperator})
    				& $\color{QPiFixedPoint}{ \vfTable[\pi^\prime] }$ 
    				& {\color{blue}{/}} 
        				& /
				\\
				\cdashline{1-5}[4pt/2pt]
                    \noalign{\smallskip}
                    \thead{
    				     \multistepBEOperatorText/ 
    				} 
                        &
                        (see \Cref{sec_advanced_multistep_offpolicy_operator} ) \citep{sutton2018reinforcement}
    				& $\color{QPiFixedPoint}{ \vfTable[\pi']}$  
                        & \color{blue}{\thead{For any $\policySet,\policyDist$} } 
                        & \NO
    				\\
    				\noalign{\smallskip}
				\thead{
				Q($\lambda$) Operator 
				}
                        & (see \Cref{sec_advanced_multistep_offpolicy_operator} ) \citep{harutyunyan2016q}
    				& $\color{QPiFixedPoint}{ \vfTable[{\pi'}] }$ & 
        
                        {
                        \color{red}{
                        \thead{
                        Requring for $\pi \in \supp \policyDist$, \\ $\pi$ is close to $\pi'$
    				}
                        }
    				}  
                        & \YES
				\\
				\noalign{\smallskip}
				    \thead{
				        Retrace($\lambda$) Operator 
				    }
                        & (see \Cref{sec_advanced_multistep_offpolicy_operator} ) \citep{munos2016safe}
    				& $\color{QPiFixedPoint}{ \vfTable[{\pi'}] }$  
                        & \color{blue}{For any $\pi$} 
    				& \NO
				\\
				\cdashline{1-5}[4pt/2pt]
				\noalign{\smallskip}
                        \thead{Multi-Step BO Operator (\crefnop{eq_multistepBOOperator})  
    				}
                        &$\multistepBOOperator Q \triangleq \EPolicyStep
            \left( \left( \mathcal{B} ^{\pi} \right) ^{n-1}\mathcal{B}  Q \right)   $
    				& $\color{red}{ \leq \vfTable[*]}$ 
                        & 
				\color{blue}{
				\thead{For any $\policySet, \stepSet, \policyDist, \stepDist$} 
				}				
                        & \YES
                    \\
                    \noalign{\smallskip}
                        \thead{Variants of \highwayGenOperatorText/ 
                        \\
                        $
                        {\color{black}{(n_1 \neq 1)}}
                        $
                        (\Cref{theoremNoonestepBiased} and \ref{theoremNoonestepBiasedZero})
    				}
                        &$ \scaleMathInside{0.9}{ \wronghighwayOperator Q \triangleq \EPolicyStep
                                {
                                \color{maxStepOneN}
                                \max\limits_{n' \in \{n_1, n\} }
                                }
            \left( \left( \mathcal{B} ^{\pi} \right) ^{n-1}\mathcal{B}  Q \right) }  $ 
    				& $\color{red}{ \neq \vfTable[*]}$ 
                        & 
				\color{blue}{
				\thead{For any $\policySet, \stepSet,$$\policyDist, \stepDist$} 
				}				
                        & \YES
                        \\
                        \\
				\cdashline{1-5}[4pt/2pt]
				\noalign{\smallskip}
				\thead{
				\textbf{\highwayGenOperatorText/} (\crefnop{eq_highwayOperator})
				}
                    & $\highwayOperator Q \triangleq 
                 \EPolicyStep
                                {
                                \color{maxStepOneN}
                                \max\limits_{n' \in \{1, n\} }
                                }
                                        \left( \left( \mathcal{B} ^{\pi} \right) ^{n'-1}\mathcal{B}  Q \right) 
                                    $
				& $\color{blue}{ \vfTable[*] }$  
				&  
				\color{blue}{
				\thead{For any $\policySet, \stepSet,$$\policyDist, \stepDist$} 
				}
                    & \YES
				\\
    				\thead{
        				\textbf{\highwayOptOperatorText/} (\crefnop{eq_highwayOptOperator})
    				}
                        &$\highwayOptOperator Q  \triangleq
                                \MaxPolicyMaxStep
                                \left( \left( \mathcal{B} ^{\pi} \right) ^{n'-1}\mathcal{B}  Q \right)
                            $
    				& $\color{blue}{ \vfTable[*] }$  
    				&  
    				\color{blue}{
    				\thead{For any $\policySet, \stepSet$} 
    				}
                        & \YES
				\\
				\noalign{\smallskip}
				\thead{
				    \textbf{\highwaySoftmaxOperatorText/}  (\crefnop{eq_softmaxHighwayQOperator})
				}
                    &$\highwaySoftmaxOperatorMath Q \triangleq 
                    \SoftmaxPolicySoftmaxStep
                \left(\left( \mathcal{B} ^{\pi} \right) ^{n'-1}\mathcal{B} Q\right) $
				& $\color{blue}{ \vfTable[*] }$  
				&  \color{blue}{For any $\policySet$, $ \stepSet$, $\policyDist$, $\alpha$}
                & \YES
				\\
				\bottomrule
			\end{tabular}
			\caption{
			Properties of the operators. 
                $ \policySet$ denotes the \policiesText/, $\pi'$ denotes the target policy for evaluation and $\pi^*$ denote the optimal policy.
			}\label{table_summary}
\end{table*}

\section{Algorithms }\label{sec_method}

Our \highwayOperatorText/s allow for the natural derivation of several RL algorithms for model-based and model-free settings.

\subsection{\highwayValueIteration/}
We first present our method in the model-based setting.
For convenience, we use the state VF $V$ instead of the state-action VF $Q$.
The state VF $V$ is updated as follows\footnote{
We use the same notation $\highwayOperator$ consistently for both cases of the state VF $V$ and the state-action VF $Q$ when there is no ambiguity. Likewise, this convention is also applied to the Bellman operators.
}:

\begin{equation}\label{eq_highwayOptOperator_V}
   V_{k} 
   \leftarrow \highwayOperator V_{k-1}
   \triangleq 
   { \color{maxPolicyStep} 
   \EPolicyStep
   }
    {
    \color{red}
    \max_{n' \in \{1, n\} }
    }
    \left( \left( \mathcal{B} ^{\pi} \right) ^{n'-1}\mathcal{B}  V_{k-1} \right).
\end{equation}
We incorporate the new greedy policies into the \policySetText/ $\policySet$ during the learning process and implement the set $\policySet$ as a first-in-first-out queue with a maximal capacity.
The resulting algorithm, \emph{\highwayValueIteration/}, is presented in \Cref{ap_alg_GreedyMultistepValueIteration}.

\begin{algorithm}[t]
\definecolor{colorDifference}{named}{black}
\counterwithin{algorithm}{section}
\caption{ \highwayValueIteration/ }%
\begin{algorithmic}\label{ap_alg_GreedyMultistepValueIteration}
\STATE \textbf{Input:} 
The error bound $\epsilon$;
{
\color{colorDifference}{
initial \policySetText/ $\policySet_0$; 
the size of the \policySetText/ $M$;
the selection distribution of the \policySetText/ $\policyDist$;
the \stepSetText/ $\stepSet$ and the corresponding selection distribution $\stepDist$; 
the interval for adding new policy $K$.
}
}
\STATE \textbf{Initialize:} Initialize VF $V\K[0] \in \R^{|\sSpace|} $; %
{\color{colorDifference}{
$\policySet \leftarrow \policySet_0  $. // $\policySet$ is implemented as a first-in-first-out queue with a maximal capacity $M$.
}}
\FOR{$k=1,2,\ldots$}
        {
        \color{colorDifference}{
 	\IF {$(k-1) \:\text{mod}\: K == 0$}
	\STATE {\small $ \pi_{k}(s)=\argmax_{a} [ r(s,a) + \gamma \E_{s'}[ V_{k-1}(s') ] ]$} %
	\STATE $\widehat{\Pi} \leftarrow \widehat{\Pi }_{}\cup \left\{ \pi _{k } \right\}$
	\ENDIF
        }
	\STATE $V\K[k] \leftarrow  \highwayOperator V\K[k-1]$
        }
    \STATE \textbf{if} $ \| V\K[k]-V\K[k-1] \|_\infty \leq \epsilon $ \textbf{then break end if}
\ENDFOR
\end{algorithmic}
\end{algorithm}

\subsection{Highway DQN}

In the following, we extend our {\highwayOperatorText/s} to the function approximation case, where the VF $Q_{\theta}(s,a)$ is represented by a deep neural network parametrized by $\theta$ \citep{mnih2015human,van2015deep}.  
The VF $Q_{\theta}$ is optimized by the following objective function:
$$
	L\left( \theta \right) =
	\sum_{\left( s,a \right) \in \widehat{ \mathcal D} }^{}
	\left[ Q_{\theta}\left( s,a \right) 
	-
        Q^{\rm target}_{\theta'} \left(s,a \right)
\right]^2,
$$
where $ \widehat{\mathcal D} =\{(s,a )\} $ represents the sampled batch data of state-action pairs;
$Q^{\rm target}_{\theta'}(s,a )$ is the computed \emph{target Q value} and $\theta'$ are the parameters of the target Q network $Q_{\theta'}$, which are occasionally copied from $\theta$.

We extend the \highwaySoftmaxOperatorTextMath/ in \cref{eq_softmaxHighwayQOperator}, which is designed to aggregate information from various lookahead policies and lookahead depths and achieve a balance between efficiency and the reducing overestimation.
The target Q value is computed as follows:
\begin{equation}\label{eq_highwayDQN}
Q^{\rm target}_{\theta'}(s,a) 
= 
{\color{softmaxPolicyStep}
\underset{m\in \policySetModelFree_{s,a}  }{\mathop {smax}\nolimits^{\alpha}}
 \underset{n\in \stepSet }{\mathop {smax}\nolimits^{\alpha}}
}
{\color{red}\max_{ n' \in \{1,n\} }}
    \widehat{\mathbb{E} }_{\mathcal{D}_{s,a}^{( m )}}
    \left[ \sum_{\stepIndex=0}^{n'-1}{\gamma ^{\stepIndex}}r_{\stepIndex}+\gamma ^{n'}\max_{a_{n'}' \in \mathcal{A}} 
    Q_{\theta'}\left(
    s_{n'},a_{n'}'\right)
    \right],
\end{equation}
where 
$\dataset[m][s,a]=\{ \traj[][s,a] | \traj[][s,a]\sim \policy[m] \}$ is the $m$-indexed dataset containing the subsequences of trajectory data starting from $(s,a)$ collected by the $m$-indexed behavioral policy $\policy[m]$;
$\widehat{\mathbb{E} }_{\mathcal{D}_{s,a}^{( m )}}\left[  \cdot \right]=\frac{1}{\left|\mathcal{D} _{s,a}^{\left( m \right)}\right|}\sum_{\tau _{s,a}^{n}\in \mathcal{D} _{s,a}^{\left( m \right)}}^{}{\left[ \cdot \right]}$ is the empirical averaged;
$ \policySetModelFree_{s,a} \subseteq \left\{ m \big| | \dataset[m][s,a] | \neq 0 \right\}$ is a sampled batch of indexes of the behavioral policy.
Note that to compute $Q^{\rm target}_{\theta'}(s,a)$ it is sufficient to compute various $n$-step returns and then apply max and softmax operations over them. 
Therefore, the only requirement is to store the trajectory data generated by the behavioral policy $\policy[m]$ in the corresponding $m$-indexed datasets $\dataset[m][s,a]$, without the necessity of accessing or saving the behavioral policy $\policy[m]$. 
The computational complexity is associated with the number of the lookahead depths $|\stepSet|$ and the number of the sampled behavioral policies $|\policySetModelFree|$. It is comparable to existing $\lambda$-return-based methods when $|\policySetModelFree|=1$ \citep{Schulman2016HighDimensional,munos2016safe}. 
The resulting algorithm, \emph{\softHighwayDQN/}, is presented in \Cref{ap_alg_GM_DQN}. 
By setting $\stepSet=\{n\}$, we obtain a specific instance of \softHighwayDQN/, referred to as \emph{\nstepHighwayDQN/}. 
In \Cref{ap_sec_algorithm}, we also derive an algorithm named \highwayQLearning/ for the model-free tabular case.

\newlength{\widthtmpadhaaswdhd}
\setlength{\widthtmpadhaaswdhd}{0.37in}
\def\setalignAUDIJEIHDKJKD{ \quad\quad\quad\quad }
\def\highwayAlgorithmInput{
\StepSetText/ $\stepSet$; \\
\setalignAUDIJEIHDKJKD Number of sampled behavioral policies $M$; \\
\setalignAUDIJEIHDKJKD Epochs of rolling-out each policy $\epochRolloutPolicy$; \\
\setalignAUDIJEIHDKJKD {Epochs of running algorithm $\epochRunAlg$; \\
\setalignAUDIJEIHDKJKD Epochs of updating value function $\epochUpdateValueFunction$;} \\
\setalignAUDIJEIHDKJKD Exploration rate $\epsilon$;
}
\def\highwayAlgorithmWhere
{
\begin{flalign*}
\hspace{1in} \text{where} \quad &
    \widehat{\mathbb{E} }_{\mathcal{D}_{s,a}^{( m )}}\left[  \cdot \right]=\frac{1}{\left|\mathcal{D} _{s,a}^{\left( m \right)}\right|}\sum_{\tau _{s,a}^{n}\in \mathcal{D} _{s,a}^{\left( m \right)}}^{}{\left[ \cdot \right]}
\\
\end{flalign*}
}
\def\epochRunAlg{I_{\rm run}}
\def\epochRolloutPolicy{I_{\rm rollout}}
\def\epochUpdateValueFunction{I_{\rm update}}
\def\batchSize{ b }

\begin{algorithm}[]
\small
\caption{ \softHighwayDQN/ }%
\begin{algorithmic}\label{ap_alg_GM_DQN}
\STATE \textbf{Input:} 
\highwayAlgorithmInput \\
\setalignAUDIJEIHDKJKD Softmax temperature $\alpha$; \\
\setalignAUDIJEIHDKJKD Epochs of updating value function $\epochUpdateValueFunction$; \\
\setalignAUDIJEIHDKJKD Batch size of updating value function $\batchSize$; \\
\STATE \textbf{Initialize:} 
Replay buffer $\dataset[][] = \emptyset$;
Initialize parameter $\theta$ of $Q$ function;
Initialize parameter $\theta'$ of target $Q$ function by $\theta'\leftarrow \theta$.
\FOR{$m=1,\cdots, \epochRunAlg$}
    \STATE Set $\policy[m]$ to be $\epsilon$-greedy policy with $Q_{\theta}$
    \STATE Initialize the dataset for the policy $\policy[m]$ $\dataset[m][s,a] \leftarrow \emptyset$, for all $ (s,a) \in \sSpace \times \aSpace$
    \FOR{$j=1,\cdots,\epochRolloutPolicy$ }
    	\STATE  Starting from initial state $s_0$, collect a trajectory $\tau=( s_{0}, a_{0}, r_{0}, \cdots, s_{t}, a_{t}, r_{t}  , \cdots, s_T)$ with $\policy[m]$
    	\FOR{$t=0,1,\cdots,T-1$ }
    	    \STATE 

    		Add the the subsequence of trajectory $(s_t,a_t, r_t, \cdots, s_{t+\stepIndex}, a_{t+\stepIndex}, r_{t+\stepIndex}, \cdots, s_T)$ to $\dataset[m][s_t,a_t] $
            \STATE
            Add $\left( s_{t}, a_{t}\right)$ to $\dataset[][] $
    	\ENDFOR
    \ENDFOR
	\FOR{$j=1, \cdots, \epochUpdateValueFunction$ }
    \STATE Sample a mini-batch $ {\widehat{ \mathcal D} }^{}= \{(s,a \} $ from $\dataset[][]$ 
    \STATE Sample a mini-batch $\policySetModelFree_{s,a}$ with maximal size $M$ from $\left\{ m \big| | \dataset[m][s,a] | \neq 0 \right\}$ for each sampled $(s,a) \in {\widehat{ \mathcal D} }^{} $ 
	\STATE 
	Update $\theta$ by minimizing the following loss function: 
	\begin{equation}
	L\left( \theta \right) =
	\sum_{\left( s,a \right) \in \widehat{ \mathcal D} }^{}
	\left[ Q_{\theta}\left( s,a \right) 
	-
        Q^{\rm target}_{\theta'} \left(s,a \right)
\right]^2,
	\end{equation}
\begin{flalign*}
\text{where} \quad &
Q^{\rm target}_{\theta'}(s,a) 
= 
{\color{softmaxPolicyStep}
\underset{m\in \policySetModelFree_{s,a}  }{\mathop {smax}\nolimits^{\alpha}}
 \underset{n\in \stepSet }{\mathop {smax}\nolimits^{\alpha}}
}
{\color{red}\max_{ n' \in \{1,n\} }}
    \widehat{\mathbb{E} }_{\mathcal{D}_{s,a}^{( m )}}
    \left[ \sum_{\stepIndex=0}^{n'-1}{\gamma ^{\stepIndex}}r_{\stepIndex}+\gamma ^{n'}\max_{a_{n'}' \in \mathcal{A}} 
    Q_{\theta'}\left(
    s_{n'},a_{n'}'\right)
    \right]
\end{flalign*}
	\ENDFOR
	\STATE $\theta' \leftarrow \theta$ every $C$ timesteps
\ENDFOR
\end{algorithmic}
\end{algorithm}

\section{Related Work}\label{sec_related_work}
Multi-step RL methods have a long history. Examples of such methods include multi-step SARSA \citep{sutton2018reinforcement}, Tree Backup \citep{precup2000eligibility}, Q($\sigma$) \citep{asis2017multi}, and Monte Carlo methods (which can be seen as an example of $\infty$-step learning).
These methods are based on the concept of $\lambda$-return, which involves assigning exponentially decaying weights to future rewards based on a decay factor $\lambda$ \citep{sutton2018reinforcement,Schulman2016HighDimensional,WhiteW16a}.
The $\lambda$-return weights can also be learned directly end-to-end \citep{SharmaRJR17}.
The choice of the \step/, decay factor, and weights represents prior knowledge or biases regarding appropriate credit assignment intervals, typically tuned on a case-by-case basis. In contrast, our method adaptively adjusts the lookahead depth based on the quality of the data and the learned VF. 
The RUDDER method \citep{NEURIPS2019_16105fb9} trains an LSTM to re-assign credits to previous actions. 
It is quite different from our simple but sound principle for credit transportation at minimal cost.
Existing off-policy learning methods often employ additional corrections to handle off-policy data.
Classical importance sampling (IS)-based methods suffer from high variance due to the products of IS ratios \citep{cortes2010learning,metelli2018}. 
To address this issue, various variance reduction techniques have been proposed.
Retrace($\lambda$) \citep{munos2016safe} reduces the variance by clipping the IS ratios, and has achieved great success in practice.
Other work ignores IS:
TB($\lambda$) \citep{precup2000eligibility} corrects off-policy data by computing expectations with respect to the data and estimates a VF using the probabilities of the target policy. 
Q($\lambda$) \citep{harutyunyan2016q} corrects off-policy data by incorporating an off-policy correction term.
Our method offers an alternative approach to off-policy learning that does not rely on IS. 
Similar to Retrace($\lambda$), it can safely use arbitrary off-policy data. Moreover, it is highly efficient and offers faster convergence rates under mild conditions. 
However, further research is needed to understand how the variance of our method compares to that of advanced IS-based variance reduction methods like Retrace($\lambda$). 

Searching over various lookahead depths and policies to improve the convergence of RL systems is an active area of research. 
\citet{wright2013exploiting} were the first to propose searching for maximal bootstrapping values using different lookahead depths.
\citet{she2017learning} suggested searching over various lookahead depths along the trajectory data, incorporating the maximal value as a regularization term rather than directly as the Q target value. Additionally, their bootstrapping values in the maximization do not involve the one-step return, which we have identified as a critical factor for ensuring correct convergence (see \Cref{sec_highwayOperator}).
Recent work on Monte Carlo augmented Actor-Critic \citep{wilcox2022monte} proposed maximizing the one-step return and Monte Carlo returns, which is a special case of our method where the lookahead depth equals the length until the end of the trial. Generalized policy update \citep{barreto2020fast} focuses on searching over various policies while using a fixed lookahead depth until the end of the trial but does not consider searching over multiple policies.

Our work introduces a theoretically justified Bellman operator that searches both policies and lookahead depths, providing greater flexibility and scalability across different settings and applications. We also address the issue of overestimation that can arise from maximization operations in previous methods \citep{barreto2020fast,she2017learning} by introducing a novel softmax operation that alleviates this problem while maintaining convergence to the optimal value function.
Compared to previous methods that search over the product of the original policy space and action space \citep{efroni2018beyond,efroni2019combine, efroni2020online, jiang2018feedback, tomar2020multi}, our approach has a smaller search space, limited to a specific set of policies.

Our method can also be viewed as combining the best of both worlds: (1) direct policy search using policy gradients \citep{Williams:92} or evolutionary computation \citep{barricelli1954esempi, Rechenberg:71, Fogel1966ArtificialIT} 
where the lookahead is equal to the trial length and there is no runtime-based identification of useful subprograms and subgoals, and (2) dynamic programming-based \citep{Bellman:1957} identification of useful subgoals during runtime. This naturally and safely combines best subgoals/sub-policies \citep{Schmidhuber:91icannsubgoals} derived from data collected by arbitrary previously encountered policies. It can be viewed as a soft hierarchical chunking method \citep{Schmidhuber:92ncchunker} for rapid credit assignment, with a sound Bellman-based foundation.

\section{Experiments}\label{sec_experiment}

{We conduct a series of experiments under various scenarios to answer the following questions:}
(1) Do existing IS-free vanilla multi-step methods suffer from the underestimation issue? Does a larger lookahead depth exacerbate this problem?
(2) Can our proposed highway methods alleviate the underestimation issue and safely learn with various lookahead depths?
(3) How do the components of our methods, such as the highway gate and softmax aggregation, contribute to the overall performance of our proposed method?

\begin{figure*}[!b]
       	\def\height{0.25}
       	\centering{
        \includegraphics[height=\height\linewidth]{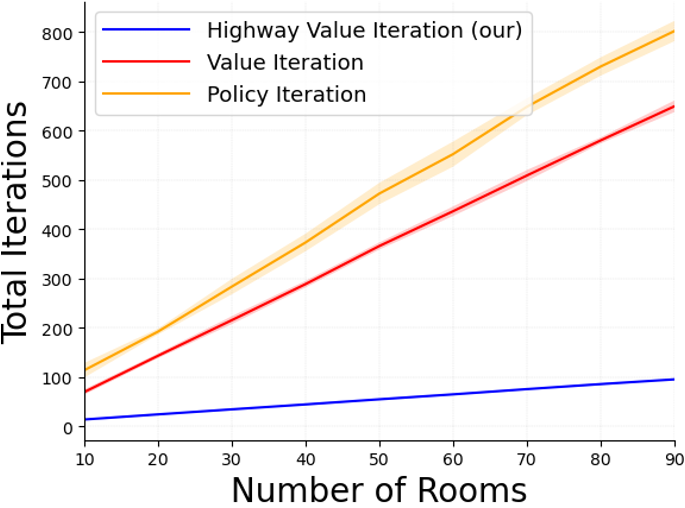}
        \includegraphics[height=\height\linewidth]{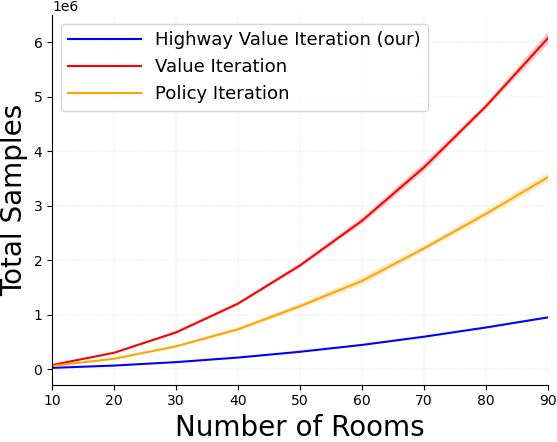}
      		}
    \caption{
        Performance of model-based algorithms in model-based Multi-Room environments. The x-axis is the number of rooms. 
        The y-axes represent the total iteration and total samples.
    }\label{fig_model_based}
\end{figure*}

\begin{figure*}[!b]
       	\def\height{0.25}
        \centering{
            \includegraphics[width=0.7\linewidth]{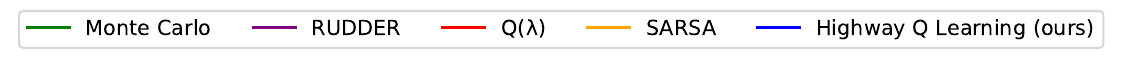}
        }
       	\centering{
                    \includegraphics[height=\height\linewidth]{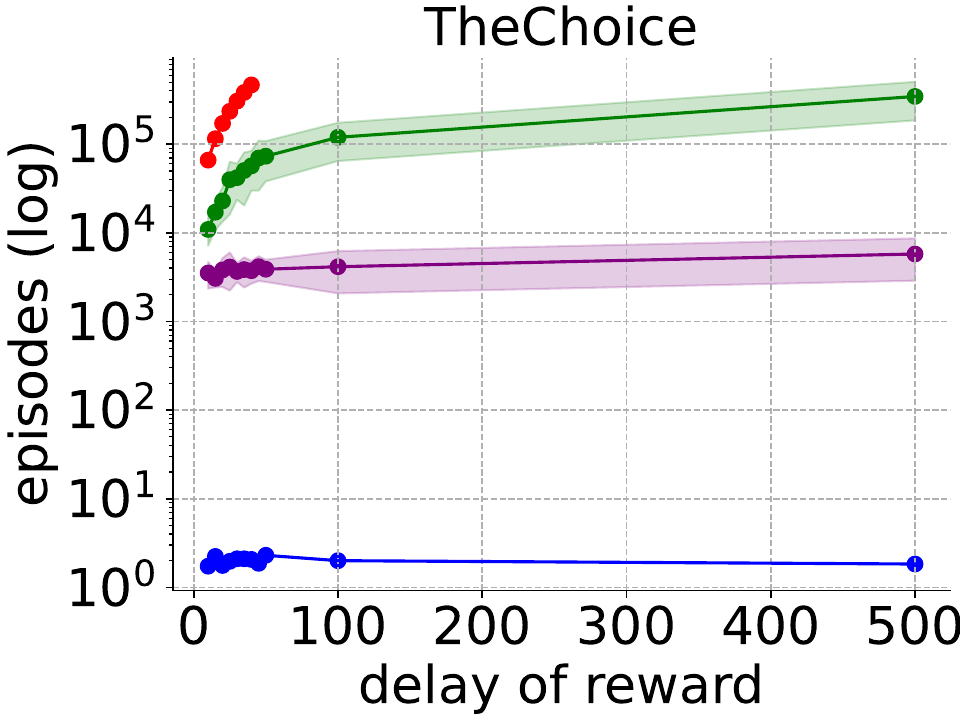}
                    \includegraphics[height=\height\linewidth]{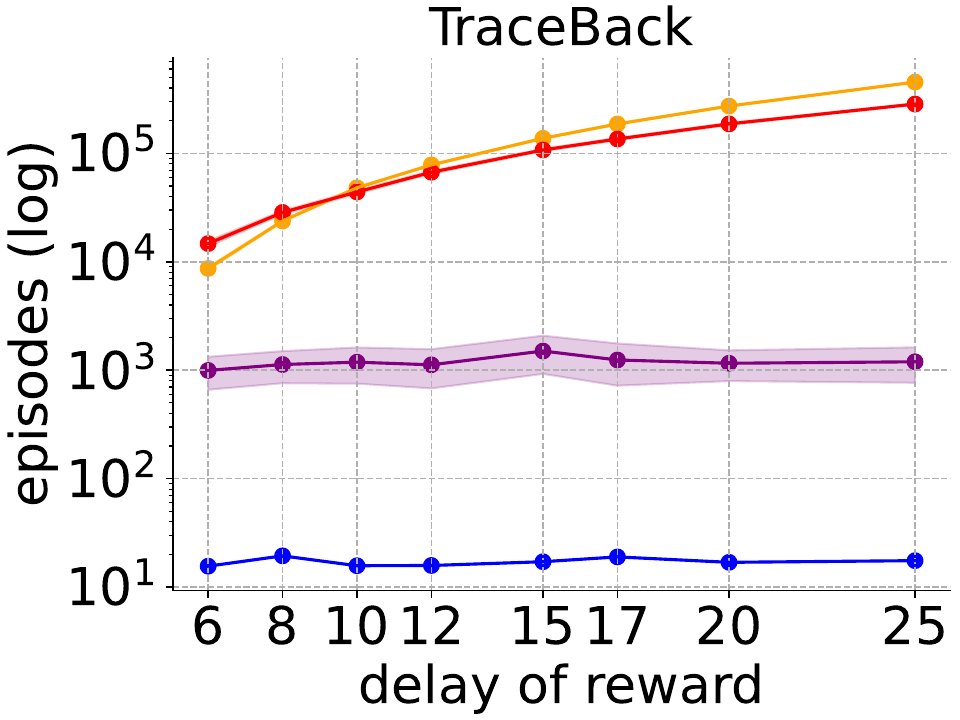}
   		}
    \caption{
        Performance of model-free algorithms in the Choice and Trace Back environments, respectively. The x-axis represents the delay of reward, while the y-axis represents the number of episodes required to solve the task. The values are averaged over 100 seeds, with one standard deviation shown.
    }\label{fig_model_free}
\end{figure*}

\subsection{Model-based Toy Tasks}
We evaluate our \highwayValueIteration/ (Highway VI) on a model-based task: Multi-Room environments with varying numbers of rooms. The agent's objective is to navigate through the rooms and reach the goal to receive a reward.
We compare our method to the classical Value Iteration (VI) and Policy Iteration (PI) algorithms, evaluating them until convergence. 
As depicted in the plots of \Cref{fig_model_based}, our Highway VI algorithm outperforms VI and PI in terms of the number of iterations (updates) required and the total number of samples (representing the total number of queries made to the MDP model).

\subsection{Model-free Toy Tasks}
We then evaluate our \highwayQLearning/ algorithm on two toy tasks where the reward is only provided at the end of the episode \citep{NEURIPS2019_16105fb9}. 
These tasks are designed to test the algorithms' ability to perform long-term credit assignments.
We compare our method to classical multi-step return-based methods, including Q($\lambda$) \citep{wiewiora2003potential}, SARSA($\lambda$) \citep{rummery1994line}, and Monte Carlo methods \citep{sutton2018reinforcement}, as well as the advanced credit assignment method RUDDER \citep{NEURIPS2019_16105fb9}.
As shown in \Cref{fig_model_free}, our method significantly outperforms all competitors on both tasks.
Our method solves the tasks within just 20 episodes for the Choice task and 100 episodes for the Trace Back task, while the second-best algorithm, RUDDER, requires over 2000 and 1000 episodes, respectively.
Notably, the number of episodes required to achieve convergence for \highwayQLearning/ does not visibly increase with the reward delays.
In contrast, other methods, such as Q($\lambda$), require exponentially increasing numbers of trials.

\begin{figure*}[!b]
    \centering
   	
\def\widthScore{0.11}
\def\width{0.11}
\def\height{0.18}
\def\heightb{\height}
\def\widthDelay{0.09}
    \graphicspath{{figsv2/exp_main/}}
    \centerline{
        \includegraphics[width=0.98\linewidth]{legend}
    }
    \centerline{
     \subfloat[MinAtar]{
            \includegraphics[height=\height\linewidth]{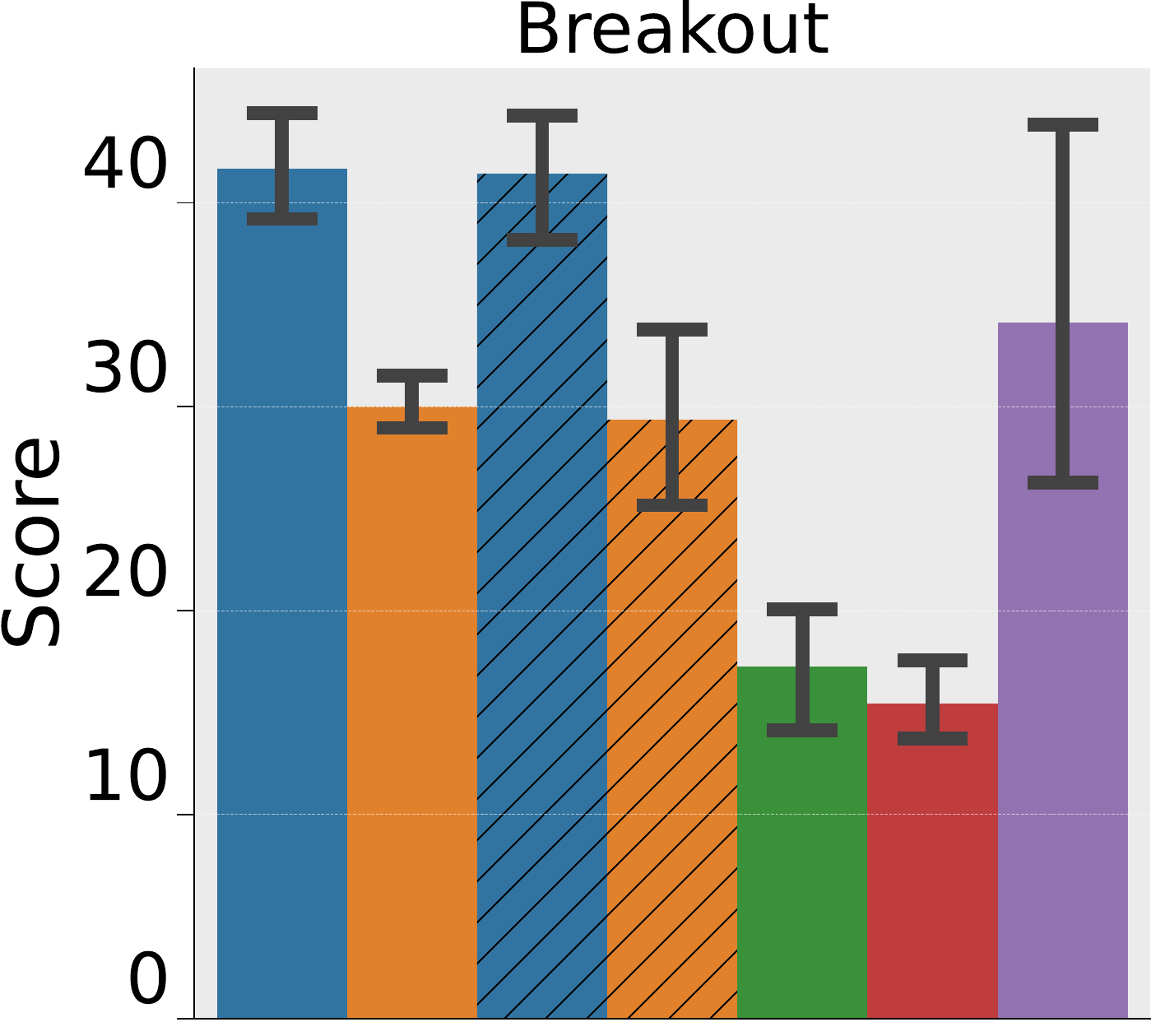}
            \includegraphics[height=\height\linewidth]{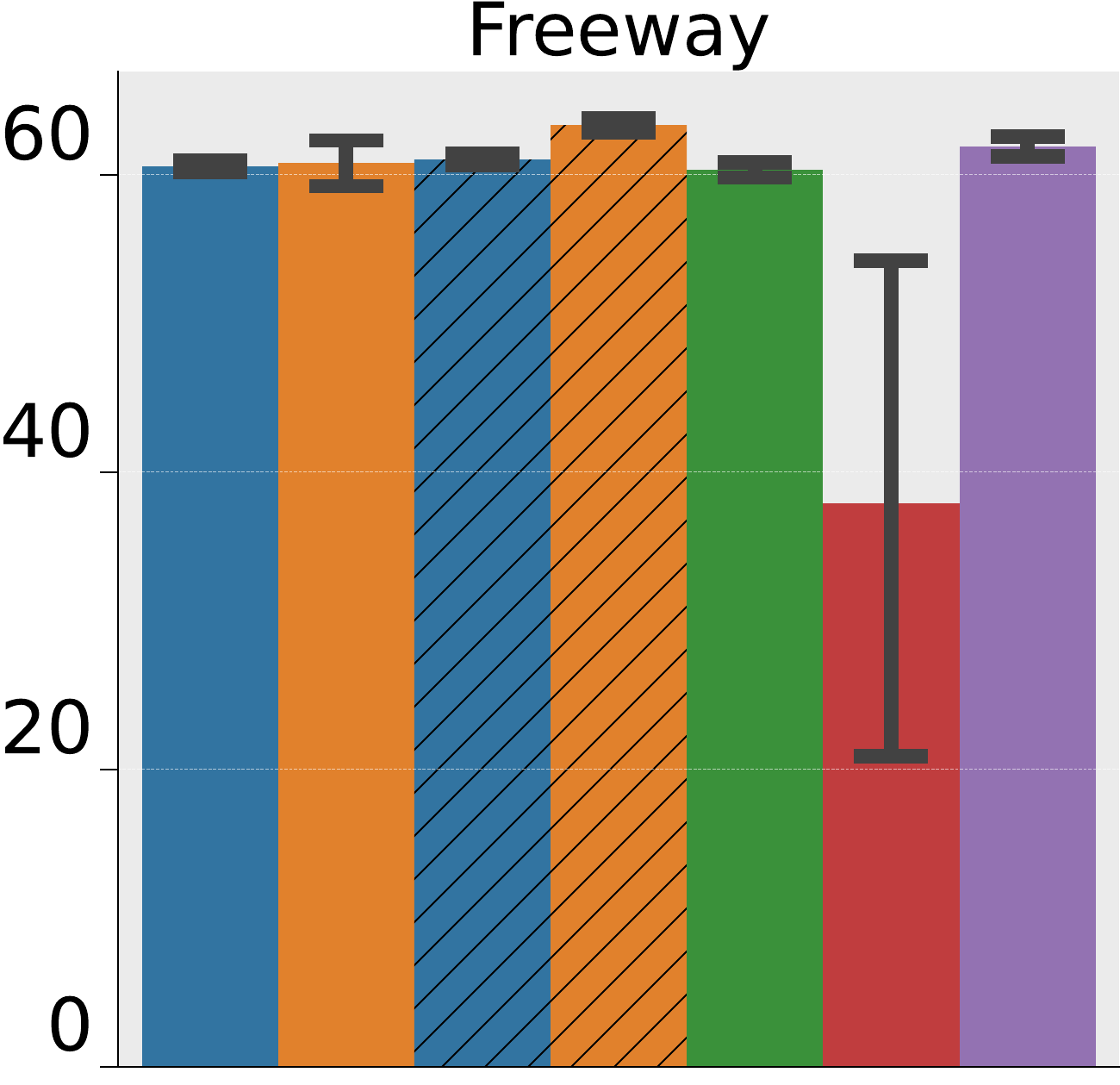}
            \includegraphics[height=\height\linewidth]{SpaceInvaders}
            \includegraphics[height=\height\linewidth]{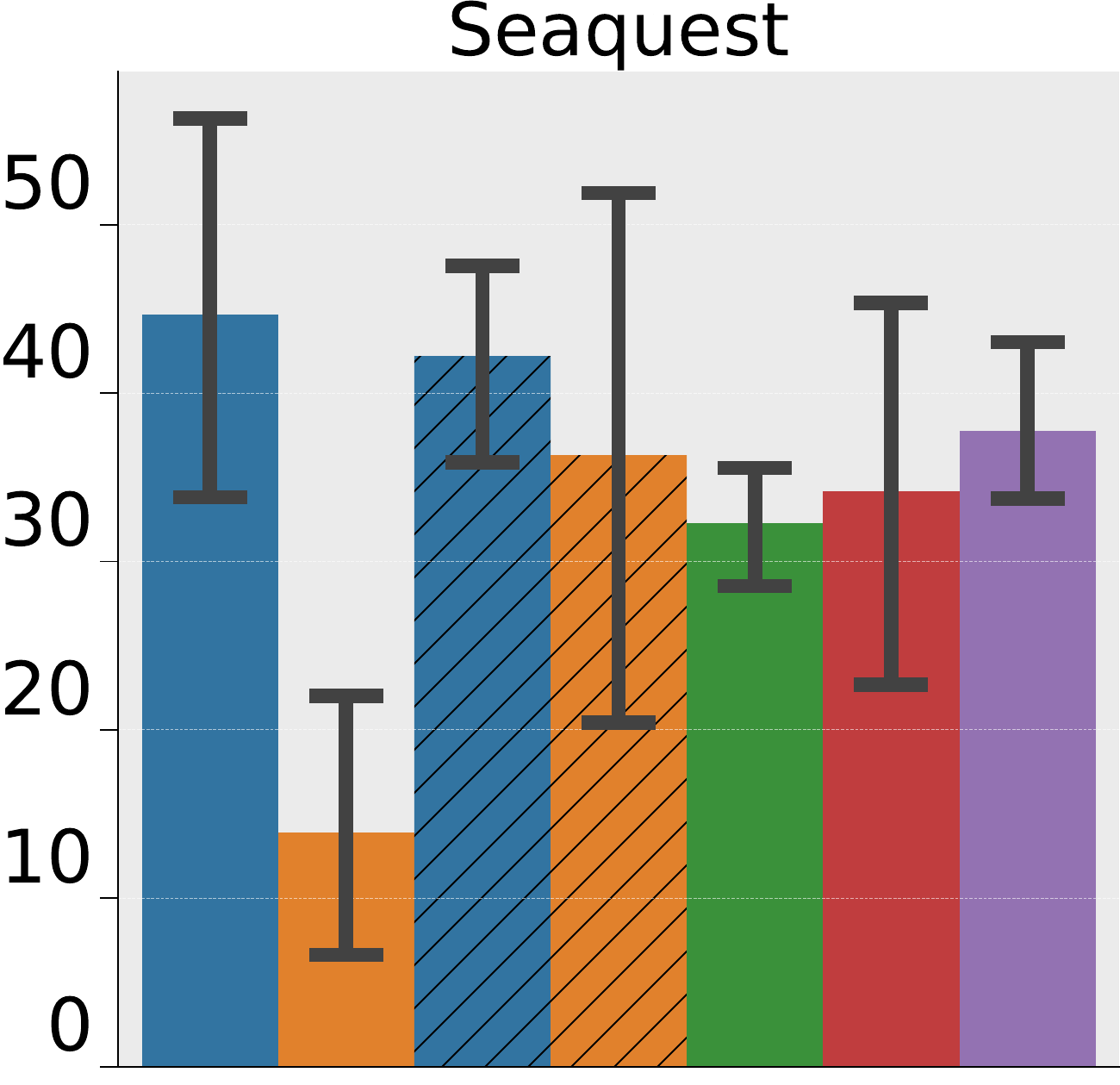}
            \includegraphics[height=\height\linewidth]{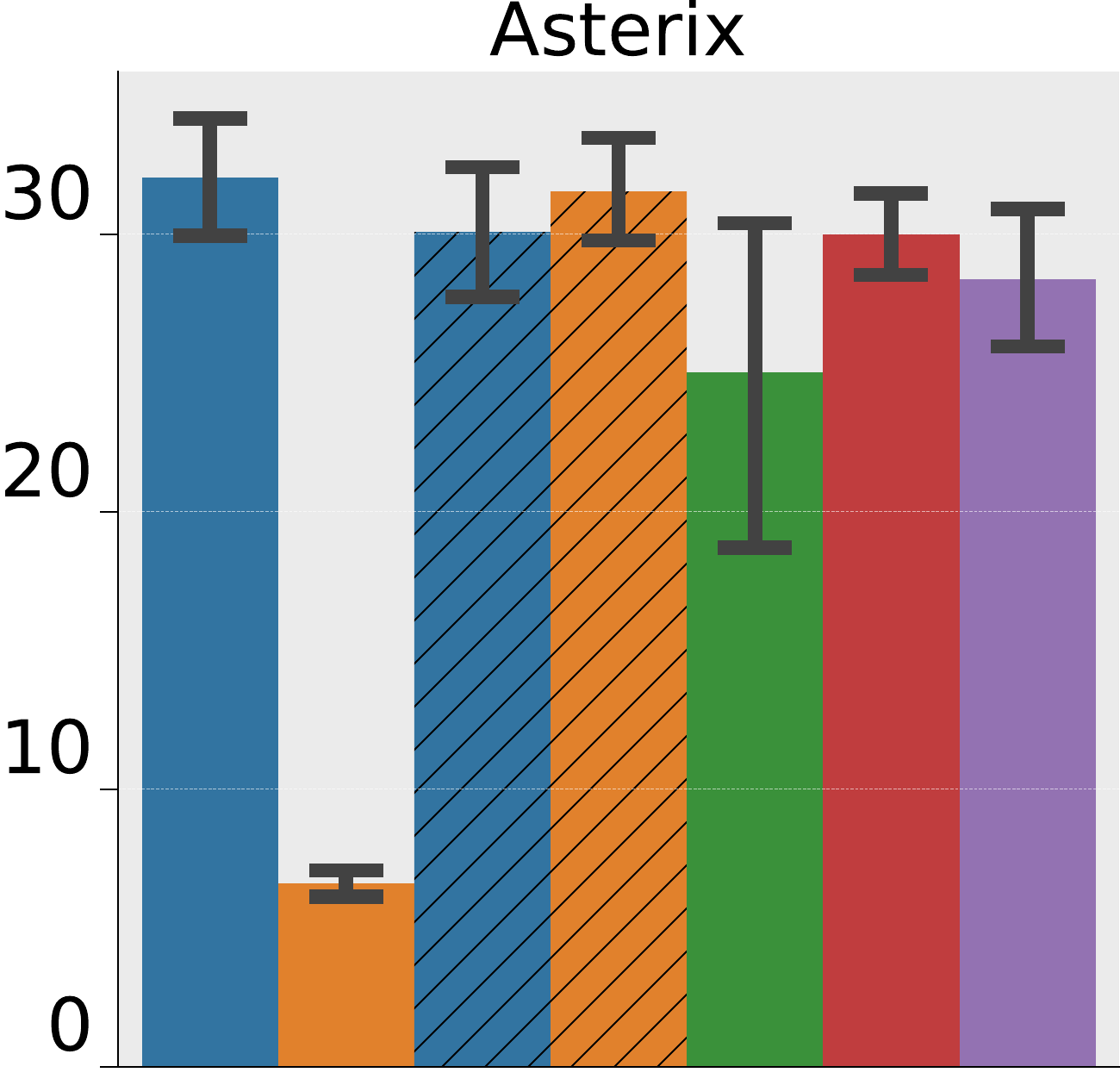}
        }
    }
    \centerline{
    \subfloat[MinAtar-Delay with a score given at the end]{
            \includegraphics[height=\heightb\linewidth]{Breakout-Delay}
            \includegraphics[height=\heightb\linewidth]{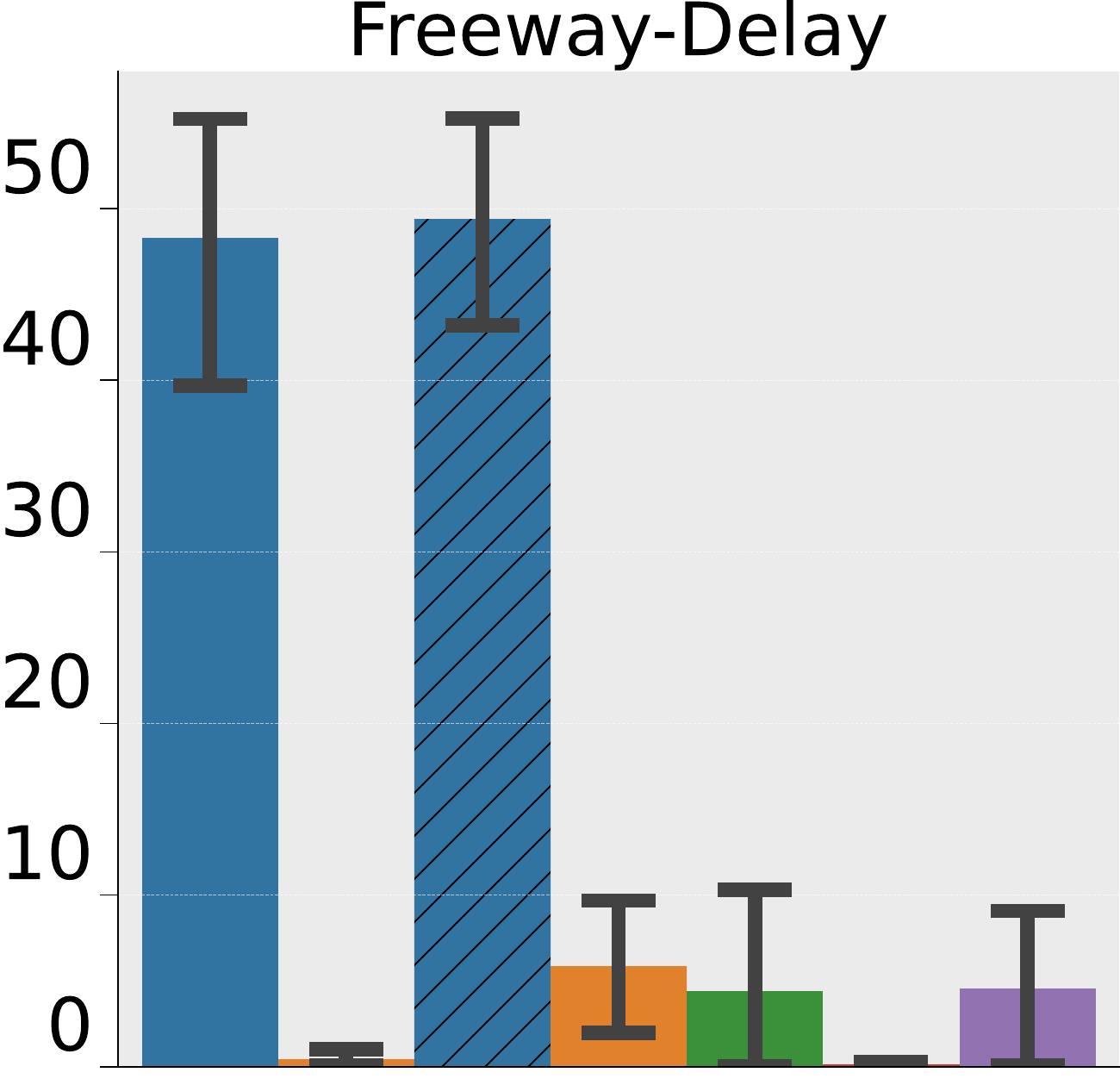}
            \includegraphics[height=\heightb\linewidth]{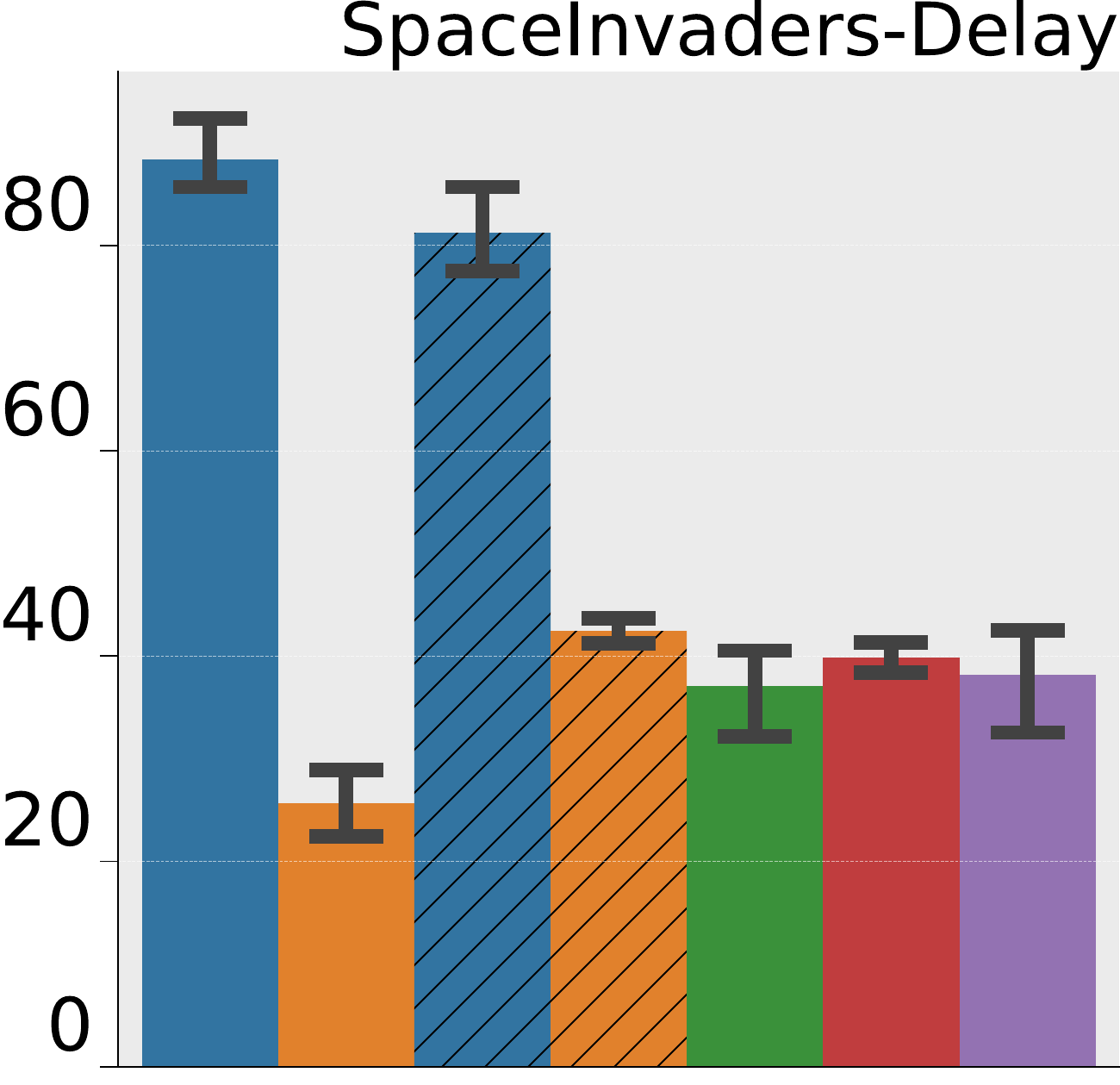}
            \includegraphics[height=\heightb\linewidth]{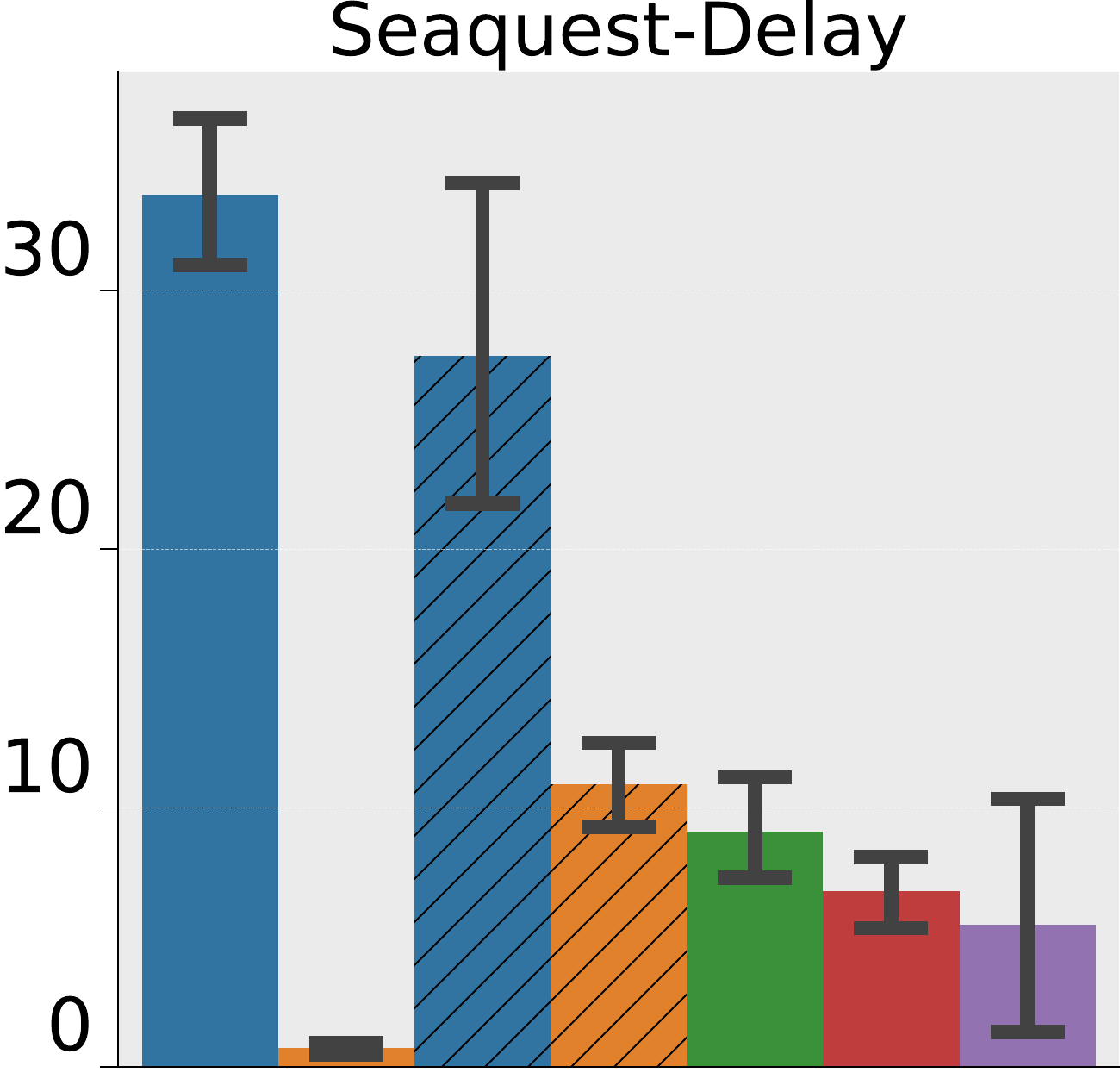}
    	\includegraphics[height=\heightb\linewidth]{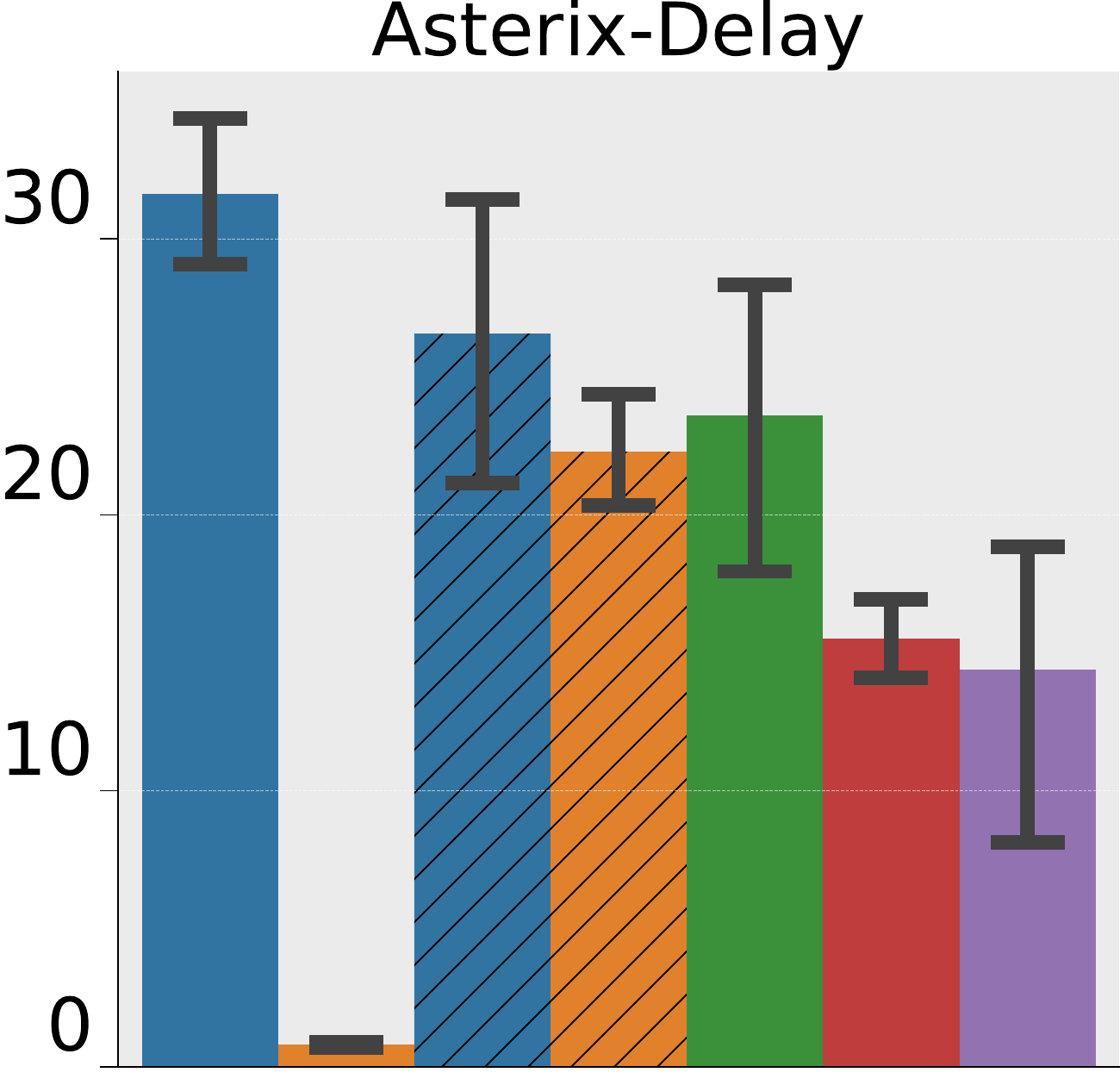}
    }
    }
    \caption{
    Average scores of the trained policies of the algorithms on MinAtar and MinAtar-Delay environments.
    }\label{fig_score_minatar_greedy_step}
\end{figure*}

\subsection{MinAtar Games}
We evaluate our Highway DQN algorithms on MinAtar benchmark tasks \citep{Kyo}.
To increase the difficulty of the reward delay, 
we propose new \emph{MinAtar-Delay} tasks, where only a total score is given at the end of the game, without any intermediate reward during the agent's interaction.
We compare our Highway DQN algorithms to several advanced multi-step off-policy methods, including \emph{Retrace($\lambda$)} \citep{munos2016safe} and \emph{$n$-step DQN} \citep{horgan2018distributed, barth2018distributed}.
For our $n$-step Highway DQN,
we choose the best lookahead depth $n$ from $\{3,\infty\}$, where $\infty$ means the depth equals the number of steps until the end of the trajectory. For $n$-step DQN, we choose the best lookahead depth $n$ from $\{2,3,4,5,8, 16,\infty\}$ to ensure we have searched for the optimal hyperparameter.
For both Soft Highway DQN and Retrace($\lambda$), the set of lookahead depths is $\stepSet=\{1,2,\infty\}$.
Additionally, for our Soft Highway DQN, we set the softmax temperature to $\alpha=0.005$ for all tasks and limit the number of sampled behavioral policies to $|\mathcal{M}|=1$ to ensure a fair comparison.
For Retrace($\lambda$), we set $\lambda=1$, as suggested by the authors \citep{munos2016safe}.
Each algorithm is run with $10$ random seeds. We test the performance of the trained models for $100$ epochs {after training with $5\times 10^6$ samples}.
To avoid overestimation, we implement all the multi-step methods based on {Maxmin DQN} \citep{Lan2020Maxmin}, which selects the smallest Q-value among multiple target Q networks.
To ensure a fair comparison, all the compared methods adopt the same implementations except for the core components.
Please refer to \Cref{ap_sec_experiment_minatar} for detailed information on the implementations and hyperparameters.

\begin{figure*}[!b]
\def\widthproperty{0.19}
\def\widthpropertylegend{0.3}
\def\height{0.26}
    \centering
    \centerline{
    \hspace*{.08in}
    \includegraphics[height=.03\linewidth]{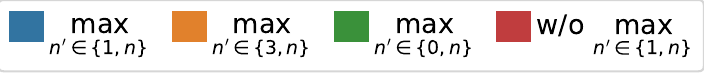}
    \hspace*{.8in}
    \includegraphics[height=.025\linewidth]{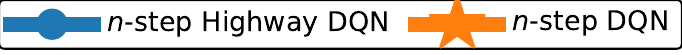}
    }
    \centerline{
        \subfloat[Highway Gate]{
            \label{fig_highway_gate}
            \includegraphics[height=\height\linewidth]{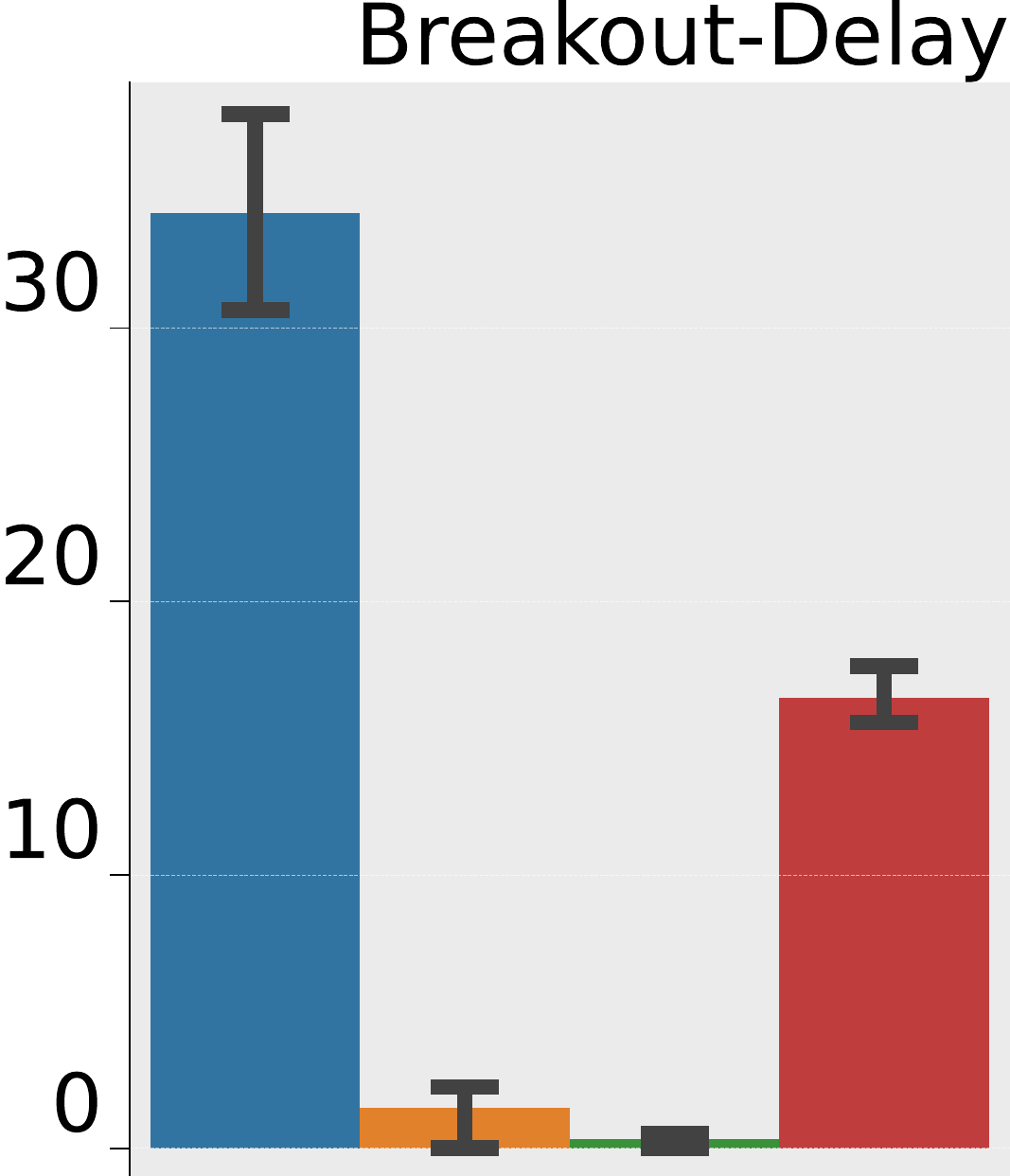}
            }
            \hspace*{0.5in}
         \subfloat[ Lookahead Depth]{
            \label{fig_n_step}
                \includegraphics[height=\height\linewidth]{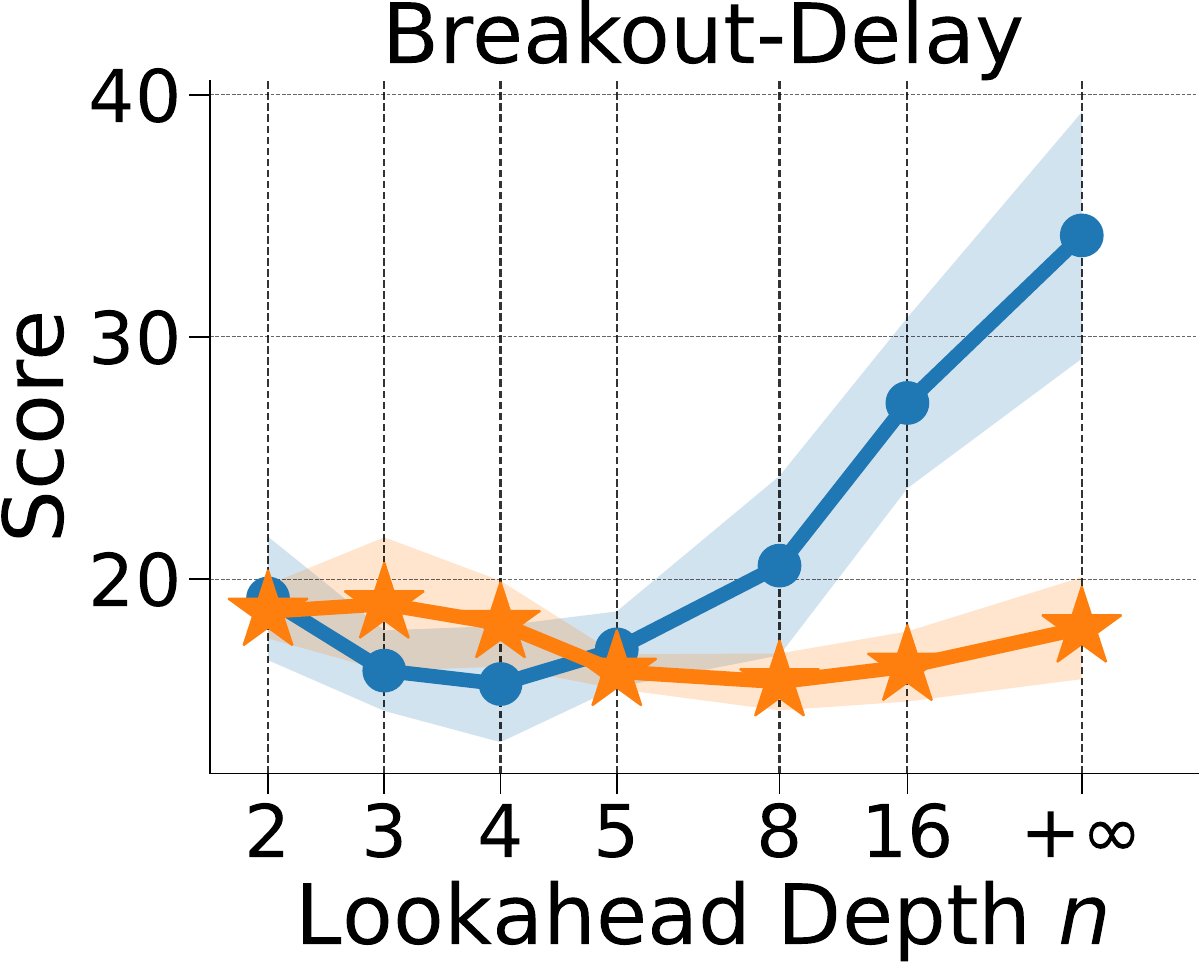}
        }
    }
    \caption{
    Ablation study results of our $n$-step \highwayDQN/.
    \subref{fig_highway_gate} shows the scores of variants of $n$-step \highwayDQN/ using different types of highway gates. 
    \subref{fig_n_step} shows the scores with varying lookahead depths for $n$-step Highway DQN and $n$-step DQN, respectively.
    }\label{Ablation_Study}
\end{figure*}

\begin{figure*}[!b]
\def\widthproperty{0.19}
\def\widthpropertylegend{0.3}
\def\height{0.26}
    \centering
    \hspace*{.08in}\centerline{
        \includegraphics[height=0.025\linewidth]{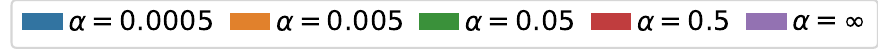}
        \hspace*{1.in}
        \includegraphics[height=0.025\linewidth]{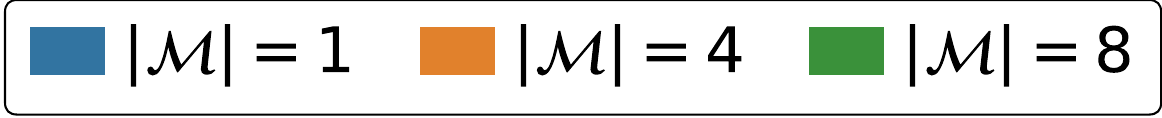}
    }
    \centerline{
          \subfloat[Softmax Temperature]{
                  \label{fig_softmax}
            \includegraphics[height=\height\linewidth]{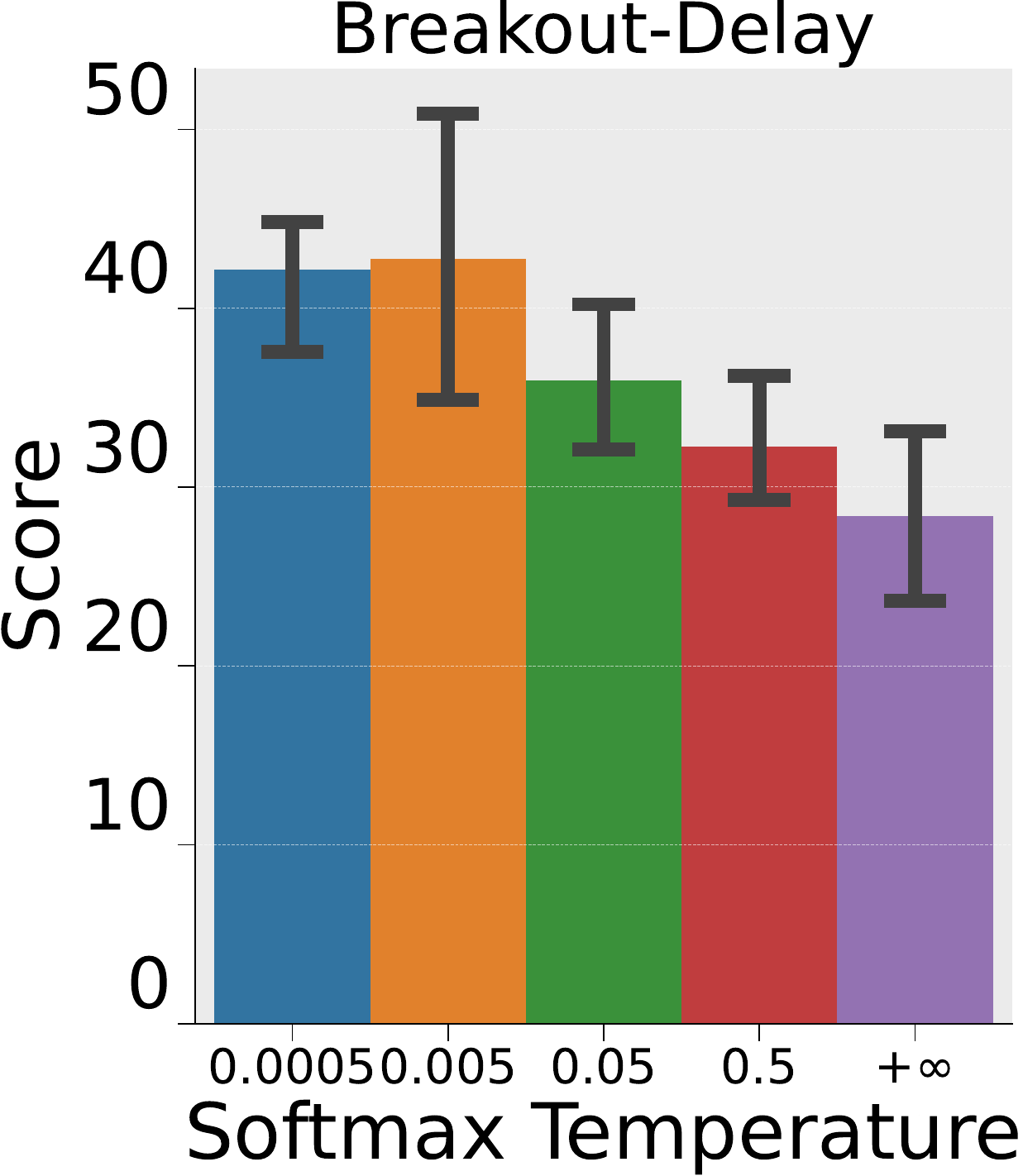}
            \includegraphics[height=\height\linewidth]{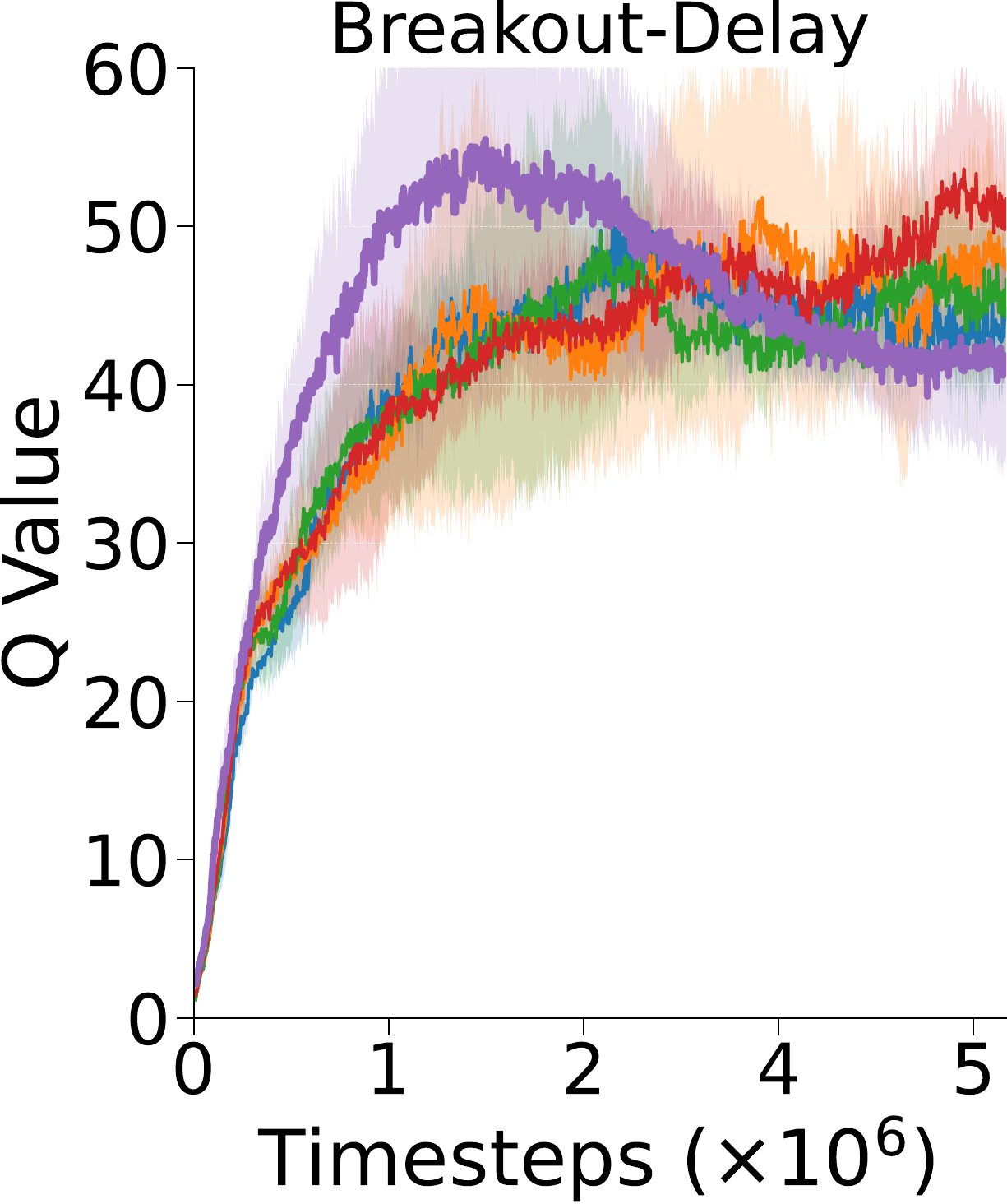}
         }
          \subfloat[Number of Lookahead {Policies}]{
                  \label{fig_n_policies}
            \includegraphics[height=\height\linewidth]{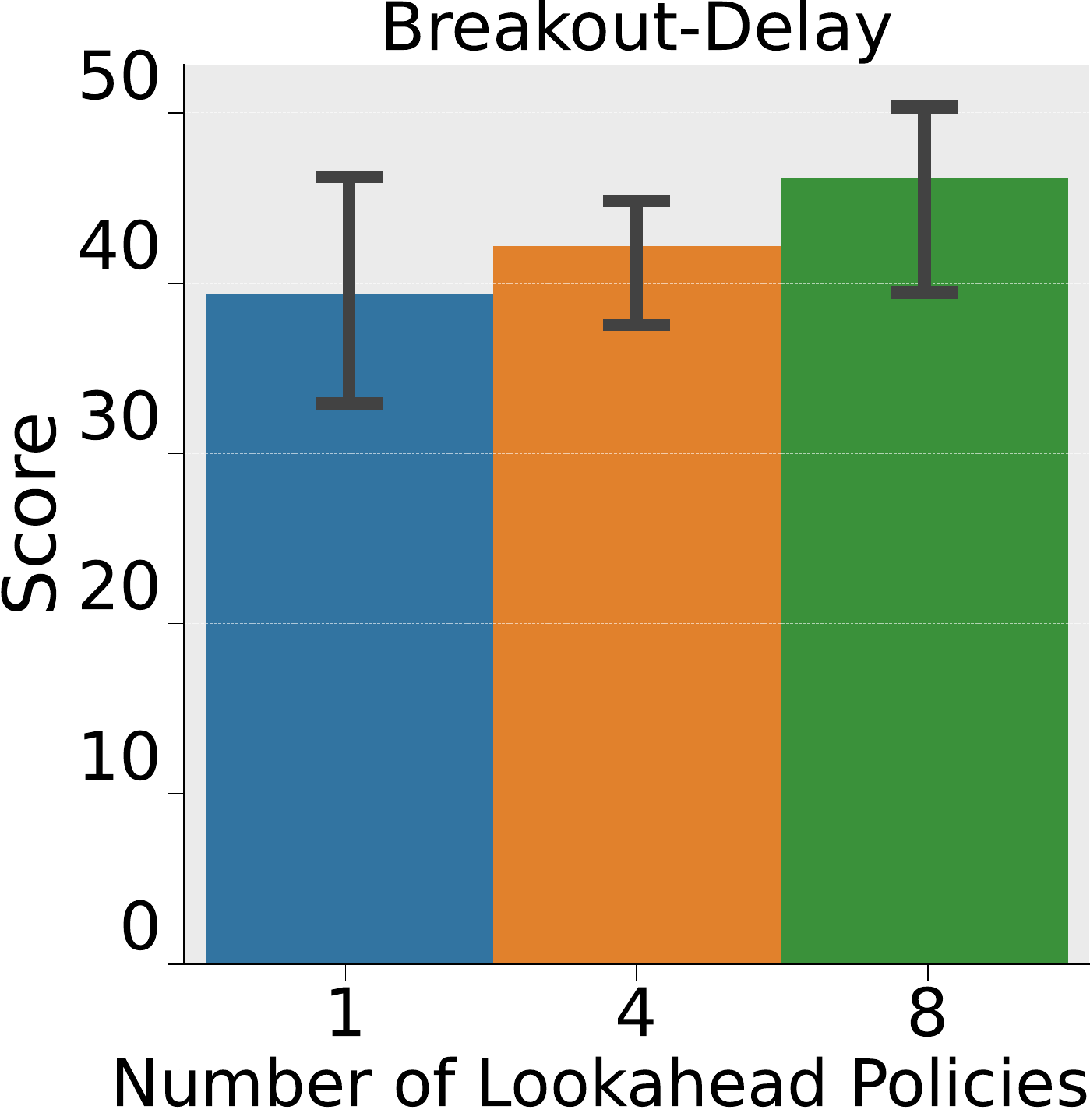}
            \includegraphics[height=\height\linewidth]{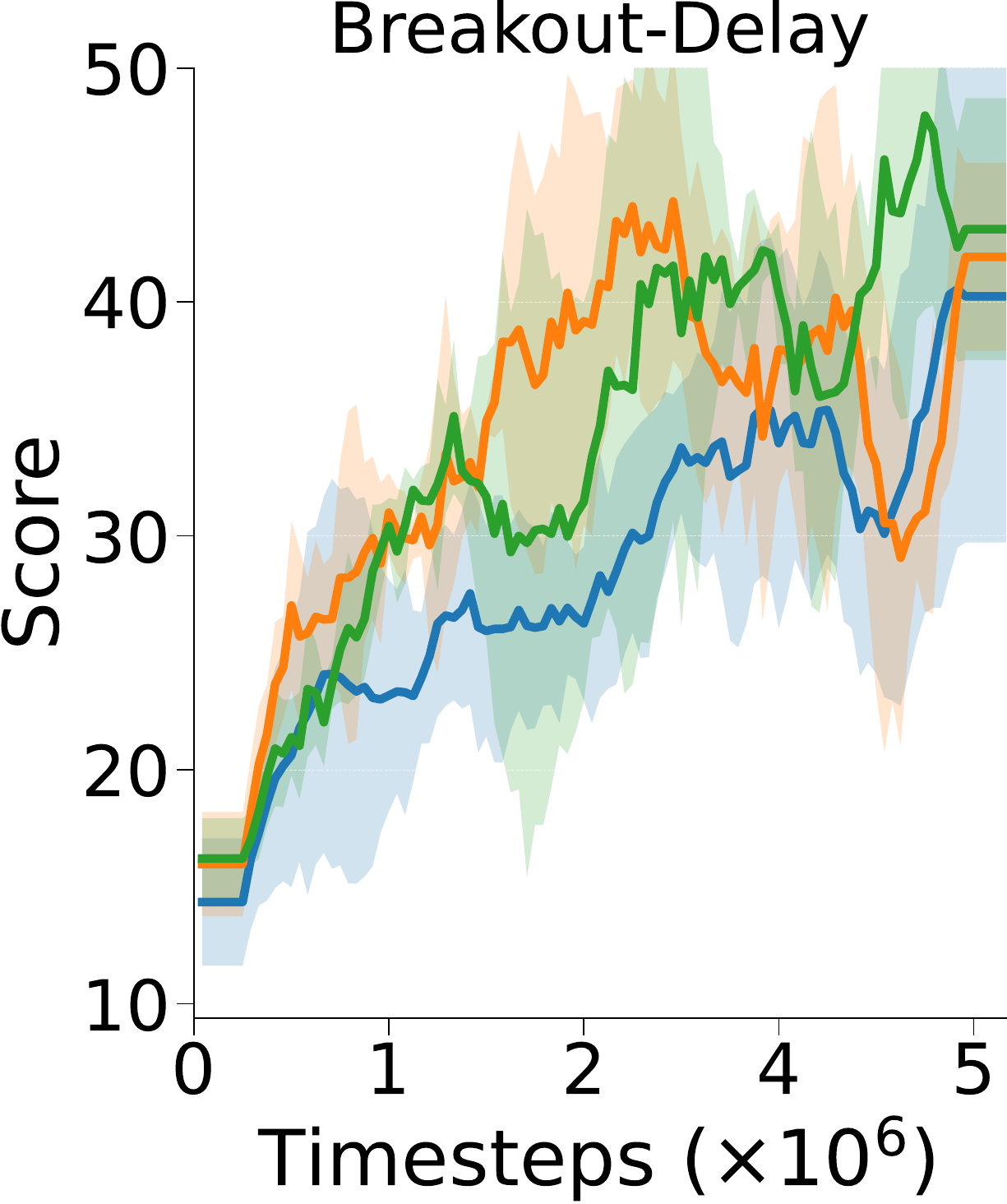}
         }
    }
    \caption{
    Ablation study results of our \softHighwayDQN/.
    \subref{fig_softmax} shows the scores (left) and Q values (right) of \softHighwayDQN/ with varying softmax temperatures during the training steps.
    \subref{fig_n_policies} shows the final scores and the scores during the training process of \softHighwayDQN/ with varying numbers of lookahead policies.
    }\label{Ablation_Study}
\end{figure*}

\Cref{fig_score_minatar_greedy_step} illustrates the performance of the algorithms in MinAtar and MinAtar-Delay tasks.
\highwayDQN/ outperforms all competitors significantly in almost all tasks.
In the delayed reward setting, the performance of the competitors significantly deteriorates, while both our $n$-step Highway DQN and Soft Highway DQN maintain a considerable level of performance.
Regarding the advanced multi-step off-policy method Retrace($\lambda$), although it performs comparably with our \highwayDQN/ in the original setting, it fails to learn a policy in the delayed reward setting.
One possible reason for this is that the off-policy correction term in Retrace($\lambda$), which is the product of the clipped importance sampling ratios, exponentially decreases as the lookahead depth increases (refer to \Cref{ap_sec_experiment_minatar}). {This could lead to the loss of information when the reward is delayed greatly.}
Notably, in tasks like Freeway-Delay, where the delay can reach up to 2000, most competitors fail to achieve a score higher than 5. In contrast, our Soft and $n$-step Highway DQN can achieve scores close to 50.

\paragraph{Ablation Study.} 
We conduct several ablation studies to investigate the importance of the components of our \highwayDQN/ algorithms, including the operations within
$
{\color{softmaxPolicyStep} \underset{m\in \policySetModelFree_{s,a} }{\mathop {smax}\nolimits^{\alpha}}}
{\color{softmaxPolicyStep}\,
 \underset{n\in \mathcal{N} }{\mathop {smax}\nolimits^{\alpha}}
}$
$
{\color{maxStepOneN} \max\limits_{n'\in \{1,n\} }}\left( \cdot \right)$ 
(see \Crefnop{eq_highwayDQN}).
(1) \emph{The effect of the highway gate ${\color{maxStep} \max\limits_{n \in \{1,n\} }} $}. 
Following our discussion of the importance of the {highway gate} in \Cref{sec_highwayOperator}, we evaluate several variants of the $n$-step \highwayDQN/: two variants that replace the highway gate with 
${\color{maxStep} \max\limits_{n \in \{ {\color{BrickRed}3},n\} }} $ and ${\color{maxStep} \max\limits_{n \in \{ {\color{BrickRed}0},n\} }} $ respectively, and one variant that does not employ the highway gate (referred to as \emph{w/o $\max_{n'\in \{1, n\}}$}). 
As shown in \Cref{fig_highway_gate}, all the variants of the highway gates fail.
(2) \emph{The performance with various lookahead depth $n$}. 
We evaluate both our $n$-step \highwayDQN/ and $n$-step DQN with varying lookahead depths $n$.
As shown in \Cref{fig_n_step}, the performance of $n$-step DQN decreases as $n$ increases.
In contrast, our $n$-step \highwayDQN/ performs better with larger $n$.
(3) \emph{The effect of the softmax operation.} 
As shown in \Cref{fig_softmax}, our \highwayDQN/ performs well over a range of softmax temperatures from $0.0005$ to $0.005$. 
We also evaluate a softmax temperature of $\infty$ (which implies that the softmax operator degrades to the max operator), and the performance notably decreases.
(4) \emph{The effect of the number of lookahead policies.} 
As shown in \Cref{fig_n_policies}, incorporating more policies can often improve both the performance of the trained policy and sample efficiency.

\section{Conclusions}
We have introduced a novel multi-step Bellman Optimality Equation for efficient multi-step credit assignment in reinforcement learning (RL).
We proved that the solution of this equation is the optimal VF, and the corresponding operator generally converges faster than the traditional Bellman Optimality operator.
Our {\em Highway RL} methods combine the best of direct policy search---where credit assignment is performed without trying to identify useful environmental states or subgoals during runtime---and standard RL, which finds useful states and subgoals through dynamic programming. Highway RL quickly and safely extracts useful sub-policies derived from data obtained through previously tested policies.
The derived algorithms offer several advantages over existing off-policy algorithms and have been demonstrated to be feasible and effective on various standard benchmark datasets. 
Future work will focus on conducting a theoretical analysis of the algorithm's behavior in the model-free case.

\section*{Acknowledgments}
We would like to acknowledge {Dr.} A. Rupam Mahmood at the University of Alberta for useful feedback on this paper.
We acknowledge Pengcheng He for his assistance in the experiments.
Thanks to Dr. Jose A. Arjona-Medina for providing implementation details and experiment results on model-free toy tasks. This work was supported by the European Research Council (ERC, Advanced Grant Number 742870) and the Swiss National Science Foundation (SNF, Grant Number 200021 192356).

\bibliography{lib_my/bib,lib_my/bib_juergen}
\clearpage
\appendix

\renewcommand{\thetheorem}{\thesection.\arabic{theorem}}
\setcounter{theorem}{0}

\section{Theoretical Analysis}\label{ap_sec_theoy}

\subsection{Proofs of Theorems in \Cref{sec_foundation} of the Paper}

\theoremMultiStepBOOperatorBiased*
\begin{proof}
\def\QFixedPoint{  \widehat{Q}^* }
\def\nExist{ \grave{n} }
\def\piExist{ \grave{\pi} }
In the proof we will denote $r(s,a) := \E_{r\sim\mathcal{R}(\cdot|s,a)}r$.
For convenience we will also abbreviate $\E_{\pi\sim\policyDist}$ and $\E_{n\sim\stepDist}$ by $\E_{\pi}$ and $\E_{n}$ respectively, assuming distributions $\policyDist$, $\stepDist$ implicitely.

First, we claim that $\multistepBOOperator$ is a contraction
on complete metric space $l_{\infty}(\mathcal{S}\times\mathcal{A})$.
For arbitrary 
$Q,Q'\in l_{\infty}(\mathcal{S}\times\mathcal{A})$ it holds:
\begin{align*} 
    \left\|  \multistepBOOperator Q - \multistepBOOperator Q' \right\| 
    &
    \leq \mathbb{E} _{\pi \sim \policyDist,n\sim \stepDist}\left\| \left( \mathcal{B} ^{\pi} \right) ^{n-1}\mathcal{B} Q-\left( \mathcal{B} ^{\pi} \right) ^{n-1}{\mathcal{B} }Q' \right\| 
    \\
    &
    \leq \max_{\pi \in \policySet,n\in \stepSet}\left\| \left( \mathcal{B} ^{\pi} \right) ^{n-1}\mathcal{B} Q-\left( \mathcal{B} ^{\pi} \right) ^{n-1}{\mathcal{B} }Q' \right\| 
    \\
    &
    \le \gamma^n \left\| Q-Q' \right\|, 
\end{align*}
This proves the claim.

1) 
Applying Banach's fixed point theorem to $\multistepBOOperator$ (using the contraction property proved in the claim) we know that $\multistepBOOperator$ has exactly one fixed point.

The equation (\ref{eq:BpiBineq}) implies that:
\begin{equation}\label{app-eq_jksajajisisajaeqwwwsi}
Q^* 
\ge \E_{ \pi,n  } (\oneStepQQOperatorMath[\pi])^{n-1} \oneStepQQOperatorMath Q^*  
=\multistepBOOperator Q^*. 
\end{equation}

Using the monotonicity of $\multistepBOOperator$ ($Q \leq Q'$ implies $\multistepBOOperator Q \leq \multistepBOOperator Q'$), we get the monotonic sequence $Q^* \ge \multistepBOOperator Q^* \ge (\multistepBOOperator)^2 Q^* \ge \cdots$ which converges to the fixed point $\QFixedPoint$ based on the contraction property and Banach fixed point theorem:
\begin{equation}\label{app-eq_jaisjisiqhjiwjhjwiw}
Q^* \ge \multistepBOOperator Q^* \ge (\multistepBOOperator)^2 Q^* \ge \cdots
\ge (\multistepBOOperator)^k Q^* \searrow \QFixedPoint. 
\end{equation}

2)
For the implication
"$\Longleftarrow$"
it suffices to show for all $\pi \in \policySet$, $\policyDist(\pi) >0$
$\mathcal{B}^{\pi} Q^* = Q^*$. Then we get:
$$
\multistepBOOperator Q^*
=
\E_{\pi,n} (\mathcal{B}^{\pi})^{n-1} \mathcal{B} Q^*
=
\E_{\pi,n} Q^* = Q^*
$$
and the implication is proved.
Thus let us fix $\pi \in \policySet$, $\policyDist(\pi) >0$ and $s_0 \in \mathcal{S}, a_0 \in \mathcal{A}$ the
following holds
$
(\mathcal{B}^{\pi}Q^*)(s_0,a_0) =
\E_{s_1 \sim \mathcal{T}(\cdot|s_0,a_0)}
\E_{a_1 \sim \pi(\cdot|s_1)}
[r(s_0,a_0) + \gamma Q^*(s_1,a_1) ]
.
$
Since in the first expectation, we just care about $s_1$ for which
$\mathcal{T}(s_1|s_0,a_0) > 0$, we can assume $s_1 \in U$. As $\pi$ on
$U$ can be replaced by the optimal policy $\pi^*$ from the assumption, we get:
\begin{align*}
(\mathcal{B}^{\pi}Q^*)(s_0,a_0)
&=
\E_{s_1 \sim \mathcal{T}(\cdot|s_0,a_0)}
\E_{a_1 \sim \pi(\cdot|s_1)}
[r(s_0,a_0) + \gamma Q^*(s_1,a_1) ]
\\
&=
\E_{s_1 \sim \mathcal{T}(\cdot|s_0,a_0)}
\E_{a_1 \sim \pi^*(\cdot|s_1)}
[r(s_0,a_0) + \gamma Q^*(s_1,a_1) ] = Q^*(s_0,a_0)
.
\end{align*}

The remaining implication "$\Longrightarrow$" will be proved by
contradiction.
Assume that the conclusion does not hold, i.e., there exist
$\piExist \in \policySet$, $\policyDist(\piExist) >0$ and
$s_1 \in U$ such that
$\policyDist(\piExist) > 0$ and $\piExist(\cdot|s_1)$ is not optimal.
Since $s_1\in U$ there exists $s_0 \in \mathcal{S}, a_0 \in \mathcal{A}$
such that $\mathcal{T}(s_1|s_0,a_0) > 0$.
First, we aim to prove the following inequality:
$$
(\mathcal{B}^{\piExist}Q^*)(s_0,a_0) < Q^*(s_0,a_0) 
.
$$
Since $\piExist(\cdot|s_1)$ assigns positive probability to non-optimal action,
it is easy to obtain (especially for finite $\mathcal{A}$) that:
$$
\E_{a_1 \sim \piExist(\cdot|s_1)} Q^*(s_1,a_1) < V^*(s_1)
.
$$
For other states different from $s_1$ we can still have equality, but the countable sum leaves the inequality strict:
\begin{align*}
(\mathcal{B}^{\piExist} Q^*) (s_0,a_0)
&= 
r(s_0,a_0) + 
\E_{s_1' \sim \mathcal{T}(\cdot|s_0,a_0)}
\E_{a_1 \sim \piExist(\cdot|s_1')} Q^*(s_1',a_1)
\\
&<
r(s_0,a_0) + \E_{s_1' \sim \mathcal{T}(\cdot|s_0,a_0)} V^*(s_1')
= 
Q^* (s_0,a_0)
.
\end{align*}
Now since $(\mathcal{B}^{\piExist})^{\nExist-2} Q^* \leq Q^* $ (here we used the assumption $\nExist >1$) we get:
\begin{equation}
(\mathcal{B}^{\piExist})^{\nExist-1}\mathcal{B}Q^* 
=
(\mathcal{B}^{\piExist})^{\nExist-1}Q^* 
=
\mathcal{B}^{\piExist} (\mathcal{B}^{\piExist})^{\nExist-2}Q^* 
\leq
\mathcal{B}^{\piExist} Q^*
.\label{eq:app-mod1}
\end{equation}
Using the previous result we obtain:
\begin{equation}
\left(\left(\mathcal{B}^{\piExist}\right)^{\nExist-1}\mathcal{B} Q^*\right) (s_0,a_0) \leq \left(\mathcal{B}^{\piExist} Q^*\right) (s_0,a_0)
<
Q^* (s_0,a_0),\label{eq:app-mod2}
\end{equation}
Since $\policyDist(\piExist) >0$ and $\stepDist(\nExist) >0$ we get:
\begin{equation}
\left(\multistepBOOperator Q^*\right) (s_0,a_0) = \left(\E_{\pi,n} \left(\mathcal{B}^{\pi}\right)^{n-1}\mathcal{B} Q^*\right)(s_0,a_0) < Q^* (s_0,a_0)
,\label{eq:app-mod3}
\end{equation}
which can be combined with \cref{app-eq_jaisjisiqhjiwjhjwiw} to show:
$$
\QFixedPoint (s_0,a_0) < Q^* (s_0,a_0)
.
$$
\end{proof}

\theoremMultiStepBOOperatorDiverge*
\begin{proof}

1) 
Based on \Cref{theoremMultiStepBOOperatorBiased}, we have $\widehat{Q}_{N}^{*}, \widehat{Q}_{N^{\prime}}^{*} \leq Q^*$.
Therefore, we just need to prove that $\widehat{Q}_{N}^{*}\le \widehat{Q}_{N^{\prime}}^{*} $.
which  is  equivalent  to  prove that  $\widehat{Q}_{nN^{\prime}}^{*}\le \widehat{Q}_{N^{\prime}}^{*}$  for  any  $n\ge 2$ and  $n\in \mathbb{Z}$. The proof follows by induction.

First,  we  prove  that  $\left( \mathcal{B} ^{\pi} \right) ^{2N^{\prime}-1}\mathcal{B} \widehat{Q}_{N^{\prime}}^{*}\le \widehat{Q}_{N^{\prime}}^{*}$.

Note  that  $\mathcal{B} ^{\pi}\left( \mathcal{B} ^{\pi} \right) ^{N^{\prime}-1}\mathcal{B} \widehat{Q}_{N^{\prime}}^{*}\le \mathcal{B} \left( \mathcal{B} ^{\pi} \right) ^{N^{\prime}-1}\mathcal{B} \widehat{Q}_{N^{\prime}}^{*}$. By  applying  $\left( \mathcal{B} ^{\pi} \right) ^{N^{\prime}-1}$  to  both  side,  we  have
\begin{align*}
&\left( \mathcal{B} ^{\pi} \right) ^{N^{\prime}-1}\mathcal{B} ^{\pi}\left( \mathcal{B} ^{\pi} \right) ^{N^{\prime}-1}\mathcal{B} \widehat{Q}_{N^{\prime}}^{*}
=  \left( \mathcal{B} ^{\pi} \right) ^{2N^{\prime}-1}\mathcal{B} \widehat{Q}_{N^{\prime}}^{*}
\\
\le & \left( \mathcal{B} ^{\pi} \right) ^{N^{\prime}-1}\mathcal{B} \left( \mathcal{B} ^{\pi} \right) ^{N^{\prime}-1}\mathcal{B} \widehat{Q}_{N^{\prime}}^{*}
=\widehat{Q}_{N^{\prime}}^{*}
\end{align*}

Then,  we prove the induction hypothesis. Assume  $\left( \mathcal{B} ^{\pi} \right) ^{nN^{\prime}-1}\mathcal{B} \widehat{Q}_{N^{\prime}}^{*}\le \widehat{Q}_{N^{\prime}}^{*}$  for  a  specific  $n\ge 2$,  then we want to show that  $\left( \mathcal{B} ^{\pi} \right) ^{\left( n+1 \right) N^{\prime}-1}\mathcal{B} \widehat{Q}_{N^{\prime}}^{*} \le \widehat{Q}_{N^{\prime}}^{*}$.

Note  that  $\mathcal{B} ^{\pi}\left( \mathcal{B} ^{\pi} \right) ^{nN^{\prime}-1}\mathcal{B} \widehat{Q}_{N^{\prime}}^{*}\le \mathcal{B} \widehat{Q}_{N^{\prime}}^{*}$, By  applying  $\left( \mathcal{B} ^{\pi} \right) ^{N^{\prime}-1}$  to  both  side,  we  have
\begin{align*}
&\left( \mathcal{B} ^{\pi} \right) ^{N^{\prime}-1}\mathcal{B} ^{\pi}\left( \mathcal{B} ^{\pi} \right) ^{nN^{\prime}-1}\mathcal{B} \widehat{Q}_{N^{\prime}}^{*}=\left( \mathcal{B} ^{\pi} \right) ^{\left( n+1 \right) N^{\prime}-1}\mathcal{B} \widehat{Q}_{N^{\prime}}^{*}
\\
\le & \left( \mathcal{B} ^{\pi} \right) ^{N^{\prime}-1}\mathcal{B} \widehat{Q}_{N^{\prime}}^{*}=\widehat{Q}_{N^{\prime}}^{*}
\end{align*}

Therefore, by induction  we  have  $\left( \mathcal{B} ^{\pi} \right) ^{nN^{\prime}-1}\mathcal{B} \widehat{Q}_{N^{\prime}}^{*}\le \widehat{Q}_{N^{\prime}}^{*}$  for  any  $n\ge 2$.

Finally,  $\widehat{Q}_{N^{\prime}}^{*}\ge \left( \mathcal{B} ^{\pi} \right) ^{nN^{\prime}-1}\mathcal{B} \widehat{Q}_{N^{\prime}}^{*}\ge \left( \left( \mathcal{B} ^{\pi} \right) ^{nN^{\prime}-1}\mathcal{B} \right) ^2\widehat{Q}_{N^{\prime}}^{*}\ge \cdots =\widehat{Q}_{nN^{\prime}}^{*}$.

2) is obvious by the definition of $Q^\pi$ and \multistepBOOperatorText/ $\multistepBOOperator$.

\end{proof}

\theoremHighwayEquation*
\begin{proof}
1) The contraction property can be obtained:
\begin{eqnarray*}
 & &
 \left\|  \highwayOperator Q - \highwayOperator Q' \right\|
 \\
 &
    =& \left\| \E_{ \pi \sim \policyDist[], n  \sim \stepDist[] }
    \max_{n' \in \{1, n\} }
    \left( \left( \mathcal{B} ^{\pi} \right) ^{n'-1}\mathcal{B}  Q \right)   -\E_{ \pi \sim \policyDist[], n  \sim \stepDist[] }
    \max_{n' \in \{1, n\} }
    \left( \left( \mathcal{B} ^{\pi} \right) ^{n'-1}\mathcal{B}  Q' \right) \right\|\\
    &\leq& \max_{\pi \in \widehat{\Pi }} \max_{n \in \{1\}\cup\mathcal{N}}\left\| \left( \mathcal{B} ^{\pi} \right) ^{n-1}\mathcal{B}  Q    - \left( \mathcal{B} ^{\pi} \right) ^{n-1}\mathcal{B}  Q'  \right\|\\
    &\leq& \left\|\mathcal{B}  Q   - \mathcal{B}  Q'\right\|\\
    &\leq& \gamma 	\left\|   Q -  Q' \right\|.
\end{eqnarray*}
2) The fixed point uniqueness of the operator $\highwayOperator$ follows from the Banach fixed point theorem. Therefore it is enough to verify that $Q^*$ is a fixed point of $\highwayOperator$. Using the fact that $\mathcal{B}^{\pi}Q^*\leq\mathcal{B}Q^*=Q^*$ and the monotonicity of $\mathcal{B}^{\pi}$, we obtain:
\begin{equation}
\left(\mathcal{B} ^{\pi} \right)^{n-1}\mathcal{B}  Q^*\leq Q^*,
\label{eq:BpiBineq}
\end{equation}
for any $\pi$ and $n\geq 1$(with equality when $n=1$). Then, we have:
\begin{eqnarray*}
    \E_{ \pi \sim \policyDist[] }
\E_{ n  \sim \stepDist[] }
    \max_{n' \in \{1, n\} }
     \left( \mathcal{B} ^{\pi} \right) ^{n'-1}\mathcal{B}  Q^*  =Q^*.
\end{eqnarray*}
\end{proof}

\theoremNoonestepBiased*
\begin{proof}
\def\nExist{ \grave{n} }
Most of the proof for this remark was omitted for brevity because it is the same as the proof for \Cref{theoremMultiStepBOOperatorBiased} up to minor, trivial modifications
stemming from extra maximization over the set $\{n,n_1\}$.
We comment just on the end of point 2) where (\ref{eq:app-mod1}) and (\ref{eq:app-mod2}) have to be repeated also for $n_1 > 1$ to get the following:
$$
\left(\max_{n'\in \{n_1,\nExist\}} (\mathcal{B}^{\pi})^{n'-1}\mathcal{B} Q^*\right)(s_0,a_0) < Q^*(s_0,a_0)
$$
thus drawing an analogy to (\ref{eq:app-mod3}) for $\wronghighwayOperator Q^*$.
\end{proof}

\theoremCompareHighwayOperatorHighwayOptOperator*
\begin{proof}
    Firstly, if $Q\leq Q^*$, then the monotonicity of $\highwayOperator$ gives us    
    \begin{equation}
    (\forall \policyDist, \forall \stepDist): \quad
        Q^*\geq \max_{\policyDist,\stepDist}\highwayOperator Q\geq \highwayOperator Q.\label{Gppleqopt}
    \end{equation}
    Then, for any $(s,a)$ we have:
    \begin{multline*}
        \min_{\policyDist,\stepDist}\left|(\highwayOperator Q)(s,a)-Q^*(s,a) \right|
        = Q^*(s,a) - \max_{\policyDist,\stepDist} (\highwayOperator Q)(s,a) \\
        \\
    \begin{aligned}                
         &= Q^*(s,a) - \max_{\policyDist,\stepDist}\E_{ \pi \sim \policyDist[] }
\E_{ n  \sim \stepDist[] }
    \max_{n' \in \{1, n\} }
     \left(\left( \mathcal{B} ^{\pi} \right) ^{n'-1}\mathcal{B}  Q\right)(s,a) \\
     &= Q^*(s,a) - \max_{\pi\in\policySet}\max_{n\in\stepSet}
    \max_{n' \in \{1, n\} }
     \left(\left( \mathcal{B} ^{\pi} \right) ^{n'-1}\mathcal{B}  Q\right)(s,a) \\
     &= Q^*(s,a) - (\highwayOptOperator Q) (s,a) \\
     &= |Q^* - (\highwayOptOperator Q)| (s,a),
    \end{aligned}        
    \end{multline*}
    where in the last step we again used assumption $Q\leq Q^*$ and monotonicity of $\highwayOptOperator$ implying
    $\highwayOptOperator \leq Q^*$.

\end{proof}

\theoremCompareOnestepBOOperator*

\begin{proof}

1) Note that due to maximization over the set $\{1,n\}$ in the $\highwayOperator$ definition $\highwayOperator Q \geq \mathcal B Q$ for all $Q \in l_{\infty}(\mathcal{S}\times\mathcal{A})$.
Further, if $Q \leq Q^*$ then $\highwayOperator Q \leq Q^*$ and $\mathcal{B} Q \leq Q^*$ (since both operators are monotonic).
Putting all together we obtain:
$$
0 \leq Q^* - \highwayOperator Q \leq Q^* - \mathcal{B} Q,
$$
from which the $\distance[\highwayOperator[][]] \leq \distance[\BOOperator] $ follows.

Note that for all $\stepDist,\policyDist$ we have
    \begin{eqnarray}
        (\multistepBOOperator Q)(s,a)&=&\E_{ \pi \sim \policyDist }
\E_{ n  \sim \stepDist }
    \left( \left( \mathcal{B} ^{\pi'} \right) ^{n-1}\mathcal{B} Q\right)(s,a)\notag\\
    &\leq&\E_{ \pi \sim \policyDist }
\E_{ n  \sim \stepDist } \max_{ n' \in \left\{1,n\right\} }\left( \left( \mathcal{B} ^{\pi'} \right) ^{n'-1}\mathcal{B} Q\right)(s,a)\notag\\
    &\leq&(\highwayOperator Q)(s,a).\label{GmmgeqBppip}
    \end{eqnarray}
    Combining this with $Q^*\geq \highwayOperator Q$ (monotonicity of $\highwayOperator$) we have:
    \begin{align*}
         \left|(\multistepBOOperator Q)(s,a) - Q^*(s,a)\right| 
        = & Q^*(s,a) - (\multistepBOOperator Q)(s,a) 
        \\
        \geq & Q^* (s,a)- (\highwayOperator Q)(s,a) 
        \\
        = & \left|(\highwayOperator Q)(s,a) - Q^*(s,a) \right|,
    \end{align*}
    for all $(s,a)\in\sSpace\times\aSpace$, which directly implies $\distance[\highwayOperator[][]] \leq \distance[\multistepBOOperator[][]] $.

2) Using \Cref{theoremHighwayEquation} point 1) with $Q':= Q^*$ and together with the fact that 
$\highwayOperator Q^* = Q^*$ (\Cref{theoremHighwayEquation} point 2)) we have:
$$
(\forall Q \in l_{\infty}):\quad
\distanceAll[\highwayOperator](Q) = \| \highwayOperator Q - Q^* \| \leq \| \mathcal{B} Q - Q^* \| = \distanceAll[\mathcal{B}](Q)
.
$$

\end{proof}

\theoremHighwayStrictlyBetterOnestepMultistepBO*
\begin{proof}
Suppose condition 1) is satisfied. Note that ${\argmax _{n \in  \{1, n' \}  } \left( \left( \mathcal{B} ^{\pi'} \right) ^{n-1}\mathcal{B}  Q \right) \left( s, a \right) > 1}$ means that
\begin{eqnarray*}
    \max_{n\in\{1,n'\}}\left( \left( \mathcal{B} ^{\pi'} \right) ^{n-1}\mathcal{B}  Q \right) \left( s, a \right) >\left(\mathcal{B}  Q \right) \left( s, a \right).
\end{eqnarray*}
Given that $\highwayOperator Q \geq \mathcal B Q$ and that $\pi' \in \supp \policyDist$ and $n' \in \supp \stepDist$, taking expectation on both sides leads to
$\highwayOperator Q >\mathcal B Q.$
Combining this with $\highwayOperator Q \leq Q^*$ and $\mathcal{B} Q \leq Q^*$,gives
    $$
0 \leq Q^*(s,a) - \highwayOperator Q(s,a) < Q^*(s,a) - \mathcal{B} Q(s,a),
$$
which means $$\distance[\highwayOperator[][]] <  \distance[\BOOperator].$$
Suppose condition 2) is satisfied. Note that ${\argmax _{n \in  \{1, n' \}  } \left( \left( \mathcal{B} ^{\pi'} \right) ^{n-1}\mathcal{B}  Q \right) \left( s, a \right) = 1}$ means that
\begin{eqnarray*}
    \max_{n\in\{1,n'\}}\left( \left( \mathcal{B} ^{\pi'} \right) ^{n-1}\mathcal{B}  Q \right) \left( s, a \right) >\left( \left( \mathcal{B} ^{\pi'} \right) ^{n'-1}\mathcal{B}  Q \right) \left( s, a \right).
\end{eqnarray*}
Given (\ref{GmmgeqBppip}) and that $\pi' \in \supp \policyDist$ and $n' \in \supp \stepDist$, taking expectation on both sides leads to
$\highwayOperator Q >\multistepBOOperator Q.$
Combining this with $\highwayOperator Q \leq Q^*$ and $\multistepBOOperator Q \leq Q^*$, gives
\begin{eqnarray*}
        0\leq   Q^* (s,a)- \left(\highwayOperator Q\right)(s,a) \leq Q^*(s,a) - \left(\multistepBOOperator Q\right)(s,a),
    \end{eqnarray*}
which means $\distance[\highwayOperator[][]] <\distance[\multistepBOOperator[][]].$
If conditions 1) and 2) are satisfied for some specific $(s,a)$ and $(s',a')$ respectively, then by combining the above results we have
$$\distance[\highwayOperator[][]] < \min \left\{ \distance[\BOOperator], \distance[\multistepBOOperator[][]] \right\}.$$
\end{proof}

\theoremAlwaysLeqQopt*
\begin{proof}
    Assume $Q \leq Q^*$.
From the inequality (\ref{eq:BpiBineq}) and monotonicity of $\left(\mathcal{B} ^{\pi} \right)^{n-1}\mathcal{B}$ follows $\left(\mathcal{B} ^{\pi} \right)^{n-1}\mathcal{B} Q \leq Q^*$ which further implies  $\multistepBOOperator Q \leq Q^*$ and $Q^*\geq \highwayOperator Q$ for all $\policyDist$ and $\stepDist$. Additonally from (\ref{GmmgeqBppip}) and $\highwayOptOperator Q\leq Q^*$ we have $\multistepBEOperator Q\leq Q^*$. Now, note that the statement of the \Cref{theoremAlwaysLeqQopt} follows from the induction on $k$.
\end{proof}

\subsection{Other Theoretical Properties of \highwayOptOperatorTextMath/} \label{ap_sec_other_properties}

In this section, we provide some theoretical properties that are not discussed in the {main paper}.
We provide theoretical analysis for the case where \policySetTextMath/ changes over each iteration, as used in our \Cref{ap_alg_GreedyMultistepValueIteration}.
\begin{theorem}
For any $Q_0 \in l_{\infty}( \sSpace \times \aSpace )$ and any sequence of policy sets $( \policySet_k)$,$k\in \mathbb{N}$, the sequence
$(\highwayOptOperator[\policySet_k][\stepSet]
\circ\highwayOptOperator[\policySet_{k-1}][\stepSet]
\circ\ldots
\circ\highwayOptOperator[\policySet_1][\stepSet])[Q_0]$,$k \in \mathbb{N}$ converges R-linearly to $Q^*$ with convergence rate $\gamma$.
\end{theorem}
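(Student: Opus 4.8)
The plan is to recognize the time-varying composition as a chain of $\gamma$-contractions that all share the single fixed point $Q^*$, so that the error contracts by a factor $\gamma$ at every step. Concretely, I would first record two facts. \emph{(i) Uniform contraction:} for every nonempty $\policySet_k\subseteq\Pi$, the operator $\highwayOptOperator[\policySet_k][\stepSet]$ is a $\gamma$-contraction on $l_\infty(\sSpace\times\aSpace)$. \emph{(ii) Common fixed point:} $Q^*$ is a fixed point of $\highwayOptOperator[\policySet_k][\stepSet]$ for \emph{every} choice of $\policySet_k$. Granting (i) and (ii), I would write $Q_k:=\highwayOptOperator[\policySet_k][\stepSet]Q_{k-1}$ for the $k$-th iterate of the composition and estimate $\|Q_k-Q^*\|=\|\highwayOptOperator[\policySet_k][\stepSet]Q_{k-1}-\highwayOptOperator[\policySet_k][\stepSet]Q^*\|\le\gamma\|Q_{k-1}-Q^*\|$ by (i) applied relative to its own fixed point supplied by (ii); a one-line induction then gives $\|Q_k-Q^*\|\le\gamma^k\|Q_0-Q^*\|$. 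Since $\gamma\in[0,1)$, the sequence $\gamma^k\|Q_0-Q^*\|$ decreases to $0$ Q-linearly with rate $\gamma$ and dominates $\|Q_k-Q^*\|$; hence $Q_k\to Q^*$ R-linearly with rate $\gamma$, which is the assertion.

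For fact (i), I would observe that $\highwayOptOperator[\policySet_k][\stepSet]$ is the special case of the Highway Generalized Operator $\highwayOperator[][]$ in which the selection distributions $\policyDist$ and $\stepDist$ place all their mass on a maximizing behavioral policy and lookahead depth. The contraction computation in \Cref{theoremHighwayEquation}(1) then transfers essentially verbatim: replacing $\E_{\pi\sim\policyDist}\E_{n\sim\stepDist}$ by $\max_{\pi\in\policySet_k}\max_{n\in\stepSet}$ and invoking $|\max_i a_i-\max_i b_i|\le\max_i|a_i-b_i|$ in place of $|\E_P X-\E_P Y|\le\E_P|X-Y|$, one is left to bound $\|(\mathcal{B}^{\pi})^{n'-1}\mathcal{B}Q-(\mathcal{B}^{\pi})^{n'-1}\mathcal{B}Q'\|\le\gamma\|Q-Q'\|$, which holds because $\mathcal{B}$ is a $\gamma$-contraction and each $\mathcal{B}^{\pi}$ is a $\gamma$-contraction as well, so the composite contracts by $\gamma^{n'}\le\gamma$.

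For fact (ii), I would use the inequality $(\mathcal{B}^{\pi})^{n'-1}\mathcal{B}Q^*\le Q^*$, valid for all policies $\pi$ and all $n'\ge1$ with equality at $n'=1$ (this is exactly \cref{eq:BpiBineq} from the proof of \Cref{theoremHighwayEquation}), which gives
\[
\highwayOptOperator[\policySet_k][\stepSet]Q^*=\max_{\pi\in\policySet_k}\max_{n\in\stepSet}\max_{n'\in\{1,n\}}(\mathcal{B}^{\pi})^{n'-1}\mathcal{B}Q^*=\max_{\pi\in\policySet_k}\max_{n\in\stepSet}\mathcal{B}Q^*=\mathcal{B}Q^*=Q^*,
\]
since each inner maximum over $n'\in\{1,n\}$ is attained at $n'=1$. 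The point I would emphasize is that this argument uses only the \emph{policy-independent} bound \cref{eq:BpiBineq}, so $Q^*$ is a fixed point of $\highwayOptOperator[\policySet_k][\stepSet]$ for every admissible policy set, and the common target does not move as $k$ changes.

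The main (and essentially only) non-mechanical point is fact (ii): the observation that, although the operators $\highwayOptOperator[\policySet_k][\stepSet]$ genuinely differ from step to step, they all fix the same $Q^*$, so their composition still contracts toward a single point. Everything else — the uniform contraction constant and the telescoping induction — is routine and follows directly from the results already established for the Highway operators; the only bookkeeping care needed is to apply, at each step $k$, the contraction of $\highwayOptOperator[\policySet_k][\stepSet]$ relative to \emph{its own} fixed point, which is licensed precisely by (ii).
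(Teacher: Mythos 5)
Your proposal is correct and follows essentially the same route as the paper: establish that each $\highwayOptOperator[\policySet_k][\stepSet]$ is a $\gamma$-contraction with the common fixed point $Q^*$ (the paper cites the analogy with \Cref{theoremHighwayEquation} for this claim, exactly as you do), then telescope $\|Q_k-Q^*\|=\|\highwayOptOperator[\policySet_k][\stepSet]Q_{k-1}-\highwayOptOperator[\policySet_k][\stepSet]Q^*\|\le\gamma\|Q_{k-1}-Q^*\|$ to get $\gamma^k\|Q_0-Q^*\|$. Your write-up is in fact slightly more explicit than the paper's, since you spell out the max-versus-expectation substitution and the fixed-point computation via \cref{eq:BpiBineq} rather than deferring them.
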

\begin{proof}
We claim that $\highwayOptOperator$ is a contraction on complete metric space
$l_{\infty}(\mathcal{S}\times\mathcal{A})$,i.e.,
$$
(\forall Q,Q' \in l_{\infty}(\mathcal{S}\times\mathcal{A})):
\quad
\|\highwayOptOperator Q - \highwayOptOperator Q'\| \leq \gamma \| Q-Q'\|,
$$
with $Q^*$ being its unique fixed point. The proof of the claim is
analogous to that of \Cref{theoremHighwayEquation} and is thus omitted for brevity.
From the claim, the following fact follows:
$$
(\forall Q \in l_{\infty}(\mathcal{S}\times\mathcal{A})):
\quad
\|\highwayOptOperator Q - Q^*\|  \leq \gamma \| Q-Q^*\|
,
$$
which implies that
$\|\highwayOptOperator[\policySet_k][\stepSet]
\highwayOptOperator[\policySet_{k-1}][\stepSet]
\ldots \highwayOptOperator[\policySet_1][\stepSet] Q_0 - Q^* \|
=
\|\highwayOptOperator[\policySet_{k}][\stepSet] (
\highwayOptOperator[\policySet_{k-1}][\stepSet]
\ldots \highwayOptOperator[\policySet_{1}][\stepSet] Q_0 )
-
\highwayOptOperator[\policySet_{k}][\stepSet] Q^* \|
\leq
\gamma \|\highwayOptOperator[\policySet_{k-1}][\stepSet]
\ldots \highwayOptOperator[\policySet_{1}][\stepSet] Q_0
- Q^*\|$.
By repeating the same argument, we end up with
$\|\highwayOptOperator[\policySet_{k}][\stepSet]
\highwayOptOperator[\policySet_{k-1}][\stepSet]
 \ldots \highwayOptOperator[\policySet_{1}][\stepSet]Q_0 - Q^* \| 
\leq \gamma^k \|Q_0 - Q^*\|$
from which the statement follows.

\end{proof}

\subsection{Theoretical Properties of  \highwaySoftmaxOperatorTextMath/ }

Finally, we show the convergence properties of \highwaySoftmaxOperatorText/ 
$\highwaySoftmaxOperatorMath$.
\def\theoremsofmaxHighwayOperator/{
    For any $\alpha$, any $\policySet$, and any $\stepSet$, we have
    $(\forall Q\in l_{\infty}(\sSpace\times\aSpace)):\:\| \highwaySoftmaxOperatorMath Q - Q^* \| 
    \leq
    \gamma \| Q - Q^* \| $
    and
    $\highwaySoftmaxOperatorMath Q^* = Q^* $.
}
\begin{theorem}\label{theorem_sofmaxHighwayOperator}
\theoremsofmaxHighwayOperator/
\end{theorem}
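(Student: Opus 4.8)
The plan is to follow the two-part template of \Cref{theoremHighwayEquation}: first check that $Q^*$ is a fixed point of $\highwaySoftmaxOperatorMath$, then prove the one-step contraction toward $Q^*$. The one structural fact I would use throughout is that for any finite set $\mathcal{X}$ and bounded $f$, the softmax $\underset{x\in\mathcal{X}}{\softmax}f(x)=\sum_{x}w_x(f)f(x)$ with $w_x(f)=\exp(\alpha f(x))/\sum_{x'}\exp(\alpha f(x'))$ is a genuine convex combination of $\{f(x)\}$ (nonnegative weights summing to one), so it always lies between $\min_x f(x)$ and $\max_x f(x)$; moreover a nested softmax is again a convex combination, since the composite weights $w_\pi(s,a)\cdot v_{\pi,n}(s,a)$ are nonnegative and sum to one. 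Evaluating everything pointwise at a fixed $(s,a)$, this means $\big(\highwaySoftmaxOperatorMath Q\big)(s,a)=\sum_{\pi\in\policySet,\,n\in\stepSet}w_{\pi,n}(s,a)\,g_\pi^n(s,a)$, where $g_\pi^n(s,a):=\max_{n'\in\{1,n\}}\big((\mathcal{B}^\pi)^{n'-1}\mathcal{B}Q\big)(s,a)$ and the $w_{\pi,n}(s,a)$ are nonnegative with $\sum_{\pi,n}w_{\pi,n}(s,a)=1$.

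For the fixed-point claim I would first recall $(\mathcal{B}^\pi)^{n'-1}\mathcal{B}Q^*\le Q^*$ for all $\pi$ and $n'\ge1$, with equality at $n'=1$; this is exactly $(\ref{eq:BpiBineq})$ from the proof of \Cref{theoremHighwayEquation}. Hence $\max_{n'\in\{1,n\}}\big((\mathcal{B}^\pi)^{n'-1}\mathcal{B}Q^*\big)=\mathcal{B}Q^*=Q^*$ for every $\pi$ and $n$, i.e. $g_\pi^n=Q^*$ for all $\pi,n$ when $Q=Q^*$. Since a convex combination of copies of the same value $Q^*(s,a)$ equals $Q^*(s,a)$, we get $\highwaySoftmaxOperatorMath Q^*=Q^*$.

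For the contraction, fix $Q\in l_\infty(\sSpace\times\aSpace)$ and an arbitrary $(s,a)$. Because $Q^*(s,a)=\sum_{\pi,n}w_{\pi,n}(s,a)Q^*(s,a)$ is the \emph{same} convex combination applied to the constant value $Q^*(s,a)$, the triangle inequality gives
$$\big|\big(\highwaySoftmaxOperatorMath Q\big)(s,a)-Q^*(s,a)\big|\;\le\;\max_{\pi\in\policySet,\,n\in\stepSet}\big|g_\pi^n(s,a)-Q^*(s,a)\big|.$$
For each fixed $\pi,n$ I would then use the identity $Q^*=\max_{n'\in\{1,n\}}(\mathcal{B}^\pi)^{n'-1}\mathcal{B}Q^*$ established above together with the elementary bound $|\max_j a_j-\max_j b_j|\le\max_j|a_j-b_j|$ to get $\big|g_\pi^n(s,a)-Q^*(s,a)\big|\le\max_{n'\in\{1,n\}}\big|\big((\mathcal{B}^\pi)^{n'-1}\mathcal{B}Q\big)(s,a)-\big((\mathcal{B}^\pi)^{n'-1}\mathcal{B}Q^*\big)(s,a)\big|$; and since $(\mathcal{B}^\pi)^{n'-1}\mathcal{B}$ is a $\gamma^{n'}$-contraction on $l_\infty(\sSpace\times\aSpace)$ with $n'\ge1$, this is at most $\gamma^{n'}\|Q-Q^*\|\le\gamma\|Q-Q^*\|$. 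Taking the supremum over $(s,a)$ yields $\|\highwaySoftmaxOperatorMath Q-Q^*\|\le\gamma\|Q-Q^*\|$.

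The main point that deserves care --- and the reason one cannot simply cite \Cref{theoremHighwayEquation} by viewing $\highwaySoftmaxOperatorMath$ as $\highwayOperator$ for a fixed pair $(\policyDist,\stepDist)$ --- is that the softmax weights $w_{\pi,n}(s,a)$ depend on $Q$, so the effective ``selection distributions'' move with the argument. The convex-combination estimate above is precisely what circumvents this: it uses only that the weights are nonnegative and sum to one, never their precise values, and so it is insensitive to their $Q$-dependence. The remaining items --- that the two nested softmaxes collapse to a single convex combination over $\policySet\times\stepSet$, and that compositions of Bellman operators contract at rate $\gamma^{n'}$ --- are routine, the latter being already used in the proof of \Cref{theoremMultiStepBOOperatorBiased}.
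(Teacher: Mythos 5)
Your proposal is correct and follows essentially the same route as the paper's proof: the paper likewise verifies $Q^*$ is a fixed point via $(\mathcal{B}^\pi)^{n'-1}\mathcal{B}Q^*\le Q^*$ (with equality at $n'=1$), and for the contraction it rewrites the nested softmax of $Q$ as an expectation under $Q$-dependent selection distributions $\mathcal{P}^{s,a}_{\policySet},\mathcal{P}^{s,a}_{\stepSet}$, applies the same distributions to the (constant) returns of $Q^*$, and bounds the difference by $\max_{\pi\in\policySet}\max_{n\in\stepSet}\|(\mathcal{B}^\pi)^{n-1}\mathcal{B}Q-(\mathcal{B}^\pi)^{n-1}\mathcal{B}Q^*\|\le\gamma\|Q-Q^*\|$, which is exactly your convex-combination estimate. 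Your explicit remark that the argument must only use nonnegativity and normalization of the weights (because they move with $Q$, so one cannot simply invoke \Cref{theoremHighwayEquation} for a fixed pair $(\policyDist,\stepDist)$) is precisely the point the paper's separate proof is implicitly addressing.
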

\begin{proof}
Recall that the \highwaySoftmaxOperatorText/ is defined as:
$$
    \highwaySoftmaxOperatorMath Q \triangleq 
{ \underset{\pi \in \widehat{\Pi }}{\mathop {smax}\nolimits^{\alpha}} } 
{ \underset{n'\in \mathcal{N}} {\mathop {smax}\nolimits^{\alpha} }}
\max_{n \in \{1, n'\} }
    \left( \mathcal{B} ^{\pi} \right) ^{n-1}\mathcal{B} Q.
$$

First, following the proof in \Cref{theoremHighwayEquation} equation (\ref{eq:BpiBineq}), we have
$
\left( \mathcal{B} ^{\pi} \right) ^{n-1}\mathcal{B} Q^* \le Q^*, 
$ 
for any $\pi$ and $n \geq 1$ (with equality when $n=1$).
Then, we have: 
$$
{ \underset{\pi \in \widehat{\Pi }}{\mathop {smax}\nolimits^{\alpha}} } 
{ \underset{n'\in \mathcal{N}} {\mathop {smax}\nolimits^{\alpha} }}
\max_{n \in \{1, n'\} }
    \left( \mathcal{B} ^{\pi} \right) ^{n-1}\mathcal{B} Q^*=Q^*.
$$

Given an action VF $Q$, 
let us define two distributions over $\widehat{\Pi }$ and $\mathcal{N}$, denoted by $\mathcal{P} _{\widehat{\Pi }}^{s,a}$ and $\mathcal{P} _{\mathcal{N}}^{s,a}$ such that:
$$
\mathbb{E} _{\pi \sim \mathcal{P} _{\widehat{\Pi }}^{s,a},n^{'}\sim \mathcal{P} _{\mathcal{N}}^{s,a}}\max_{n\in \{1,n'\}} \left(\left( \mathcal{B} ^{\pi} \right) ^{n-1}\mathcal{B} Q \right) \left( s,a \right) =\underset{\pi \in \widehat{\Pi }}{\mathop {smax}\nolimits^{\alpha}}\underset{n'\in \mathcal{N}}{\mathop{smax}\nolimits^{\alpha}}\max_{n\in \{1,n'\}} \left(\left( \mathcal{B} ^{\pi} \right) ^{n-1}\mathcal{B} Q \right)\left( s ,a \right). 
$$
It follows:
\begin{align*}
\left\| \highwaySoftmaxOperatorMath Q-  Q^* \right\| 
&
=\left\| \underset{\pi \in \widehat{\Pi }}{\mathop {smax}\nolimits^{\alpha}}\underset{n'\in \mathcal{N}}{\mathop {smax}\nolimits^{\alpha}}\max_{n\in \{1,n'\}} \left( \mathcal{B} ^{\pi} \right) ^{n-1}\mathcal{B} Q-Q^* \right\| 
\\ &
=\left\| \mathbb{E} _{\pi \sim \mathcal{P} _{\widehat{\Pi }},n^{'}\sim \mathcal{P} _{\mathcal{N}}}\max_{n\in \{1,n'\}} \left( \mathcal{B} ^{\pi} \right) ^{n-1}\mathcal{B} Q-\mathbb{E} _{\pi \sim \mathcal{P} _{\widehat{\Pi }},n^{'}\sim \mathcal{P} _{\mathcal{N}}}\max_{n\in \{1,n'\}} \left( \mathcal{B} ^{\pi} \right) ^{n-1}\mathcal{B} Q^* \right\| 
\\ &
\le \max_{\pi \in \widehat{\Pi }} \max_{n\in \mathcal{N}} \left\| \left( \mathcal{B} ^{\pi} \right) ^{n-1}\mathcal{B} Q-\left( \mathcal{B} ^{\pi} \right) ^{n-1}\mathcal{B} Q^* \right\| 
\\ &
\le \gamma \left\| Q-Q^* \right\|
.
\end{align*}

\end{proof}

\subsection{Advanced Multi-Step Off-Policy Operators}\label{sec_advanced_multistep_offpolicy_operator}

In this section, we describe several multi-step off-policy operators,   
{which aim to evaluate the value function of a policy $\pi'$ (called \emph{target policy}) using the data collected by a different policy $\pi$ (called \emph{behavior policy}).}
The classical methods are based on importance sampling techniques \citep{sutton2018reinforcement,cortes2010learning}, where the underlying operator can be defined as follows:
\begin{equation}\label{eq_multistepBEOperator}
\begin{aligned}
    (\multistepBEOperator Q)(s,a) 
     \triangleq 
\EPolicyStep
    \EE_{\trajnew[\pi][s,a][n]  } 
     \Bigg[
\sum_{\stepIndex=0}^{n-1} {\gamma^\stepIndex}\zeta _{\pi',\pi}^{1:\stepIndex}r_{\stepIndex}+\gamma ^n\zeta_{\pi',\pi}^{1:n}Q(s_n, a_{n})
    \Bigg].
\end{aligned}
\end{equation}
where $\pi'$ is the target policy, $\zeta _{\pi',\pi}^{1:\stepIndex}\triangleq \prod_{\stepIndexTwo=1}^{\stepIndex}{\frac{\pi' (a_{\stepIndexTwo}|s_{\stepIndexTwo})}{\pi(a_{\stepIndexTwo}|s_{\stepIndexTwo})}}$
is the product of {IS ratios}.
This operator is unbiased as we have $\multistepBEOperator Q^{\pi'} = Q^{\pi'}.$ However, its variance is high because it depends on the exponentiated R\'enyi divergence between the behavioral and target policies \citep{cortes2010learning}.

\citeauthor{munos2016safe} propose the following general operator for a return-based off-policy algorithm:
\begin{equation}\label{eq_retrace}
\mathcal{R} Q(s,a)\triangleq Q(s,a)+\mathbb{E} _{ \pi \sim \policyDist, \tau _{s,a}^{N}\sim \pi}\left[ \sum_{t=0}^{N-1}{\gamma ^t}\left( \prod_{t^{'}=1}^t{\zeta _{t^{'}}} \right) \left( r_t+\gamma \mathbb{E} _{\pi ^{'}}Q\left( s_{t+1},\cdot \right) -Q\left( s_t,a_t \right) \right) \right], 
\end{equation}
where $\pi$ is the behavioral policy, $\pi'$ is the target policy for policy evaluation, $(\zeta_{t'})$ are called \emph{traces} in the Retrace($\lambda$) paper \citep{munos2016safe}.
For Retrace($\lambda$) \citep{munos2016safe}, $\zeta_{t'}=\lambda \min \left( 1, \frac{  \pi'(a_{t'}|s_{t'}) }{ \pi(a_{t'}|s_{t'}) } \right)$.
For Q($\lambda$) \citep{harutyunyan2016q}, $\zeta_{t'}=\lambda $.

\section{Method}\label{ap_sec_algorithm}

\subsection{\highwayQLearningFull/}

We now present our \highwayQLearningFull/ for model-free RL with tabular action-VF $Q$.
The $k$-th VF $Q_k$ is updated in the following way: 

\begin{equation}
Q_{k+1}\left( s,a \right) =   
{\color{maxPolicyStep} 
 \underset{m\in \mathcal{M}_{s,a} }{\max}
 \underset{n\in \mathcal{N}_{s,a}}{\max}
}
{\color{maxStepOneN} 
    \max_{n'\in \{1,n\} }
}
\widehat{\mathbb{E} }_{\mathcal{D}_{s,a}^{( m )}}
\left[
\nstepReturn[Q_k][] ( \traj[n'][s,a] )
\right]
\end{equation}
where $\dataset[m][s,a]=\{ \traj[n][s,a] | \traj[n][s,a]\sim \policy[m] \}$ denotes the trajectory data collected by the policy $\policy[m]$,
$ \policySetModelFree_{s,a} \subseteq \left\{ m \big| | \dataset[m][s,a] | \neq 0 \right\}$ is a subset of indexes of the dataset that are not empty under $(s,a)$;
$\widehat{\mathbb{E} }_{\mathcal{D}_{s,a}^{( m )}}\left[  \cdot \right]=\frac{1}{\left|\mathcal{D} _{s,a}^{\left( m \right)}\right|}\sum_{\tau _{s,a}^{n}\in \mathcal{D} _{s,a}^{\left( m \right)}}^{}{\left[ \cdot \right]}$ is the empirical averaged value.
The \highwayQLearningFull/ algorithm is presented as \Cref{ap_alg_GM_QLearning}.

\begin{algorithm}[t]
\small
	\caption{\highwayQLearningFull/}
\begin{algorithmic}[1]\label{ap_alg_GM_QLearning}
\STATE \textbf{Input:} 
\highwayAlgorithmInput
\STATE \textbf{Initialize:} 
$k=0$;
State-action replay buffer $\dataset[][] = \emptyset$;
Initialize value function $Q\K[0] \in \R ^{|\sSpace| \times |\aSpace|} $;
.
\FOR{$m=1,\cdots,\epochRunAlg $}
    \STATE Set $\pi_m$ to be $\epsilon$-greedy policy with $Q\K$
    \STATE $\dataset[m][s,a] \leftarrow \emptyset$, for all $ (s,a) \in \sSpace \times \aSpace$
    \FOR{$j=1,\cdots,\epochRolloutPolicy $ }
    	\STATE Starting from initial state $s_0$, collect a trajectory $\tau=( s_{0}, a_{0}, r_{0}, s_{1}, a_{1}, r_1  , \cdots, s_T)$ with $\policy[m]$. 
    	\FOR{$t=0,1,\cdots,T-1$ }
    	    \STATE 
    		Add $( s_{t}, a_{t}, r_{t}, s_{t+1}, a_{t+1}, r_{t+1}  , \cdots, s_T)$ to $\dataset[m][s_t, a_t] $
    		\STATE
    		Add $( s_{t}, a_{t})$ to $\dataset[][] $
    	\ENDFOR
    \ENDFOR
	\FOR{$j=1, \cdots, \epochUpdateValueFunction$ }
	\STATE Sample a $(s, a)$ from $\dataset[][]$ and then sample a mini-batch $\policySetModelFree_{s,a}$ with size $M$ from $\left\{ m \big| | \dataset[m][s,a] | \neq 0 \right\}$.
	\STATE 
	Update the value function by the following rules
\begin{small}
	\begin{equation*}
    Q_{k+1}\left( s,a \right) = 
{\color{black} 
 \underset{m\in \mathcal{M}_{s,a} }{\max}
 \underset{n\in \mathcal{N}_{s,a}}{\max}
}
{\color{black} 
    \max_{n'\in \{1,n\} }
}
\widehat{\mathbb{E} }_{\mathcal{D}_{s,a}^{( m )}}
\left[
\nstepReturn[Q_k][] ( \traj[n'][s,a] )
\right]
    \end{equation*}
\end{small}
\hspace{-10.in}  
\highwayAlgorithmWhere
    \STATE $k=k+1$
	\ENDFOR
\ENDFOR
\end{algorithmic}
\end{algorithm}

\section{Experimental Results}\label{ap_sec_experiment}

\subsection{Experiments with Model-Based Algorithms}\label{ap_sec_model_based}

\subsubsection{Details of the Environments}

\begin{figure}
    \centering
    \includegraphics[width=0.6\linewidth]{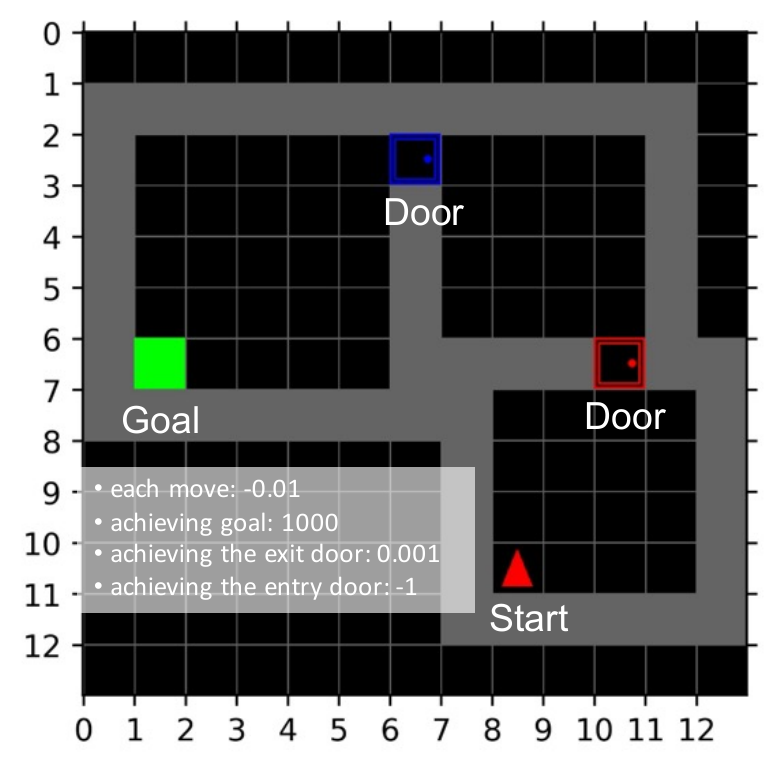}
    \caption{Illustration of the Minimalistic Gridworld Environment}
    \label{ap_fig_multi_room}
\end{figure}

\noindent
\textbf{Multi-Room}
is a grid world environment with multiple rooms connected by doors.
The agent's goal is to reach a goal square in the opposite corner and receive a reward ($r=1000$). 
In addition, the agent will receive a small reward of $r=0.001$ when it finds the exit door of the room.
We use implementations based on gym-minigrid \citep{gym_minigrid}.

\subsubsection{Implementation Details of the Algorithms}
We compare our \highwayValueIteration/ to Policy Iteration and Value Iteration.
For Policy Iteration, the lookahead depth is set to $N=10$.
For our \highwayValueIteration/ method, the \stepSetText/ is set by $\stepSet=\rangeInt[1][10]$, where $\rangeInt[l][u] \triangleq  \{ l, l+1, l+2,\cdots,u-1,u \}$ denote the set of all integers between $l$ and $u$; the size of lookahead policies is set by $|{\widehat \Pi}|=5$; the current policy is added to the set of lookahead policies every $K=7$ iteration.
The error bound ($ \| V\K[k]-V\K[k-1] \|_\infty \leq \epsilon $) for all algorithms is set to $\epsilon=10^{-10}$.

\subsection{Experiments with Model-Free Algorithms in Toy Tasks}

Here we present the details of our model-free algorithms.
We adopt the experimental setting from the RUDDER paper \citep{NEURIPS2019_16105fb9}.

\subsubsection{Details of the Environments}
We evaluate the model-free algorithms on two toy tasks involving delayed rewards \citep{NEURIPS2019_16105fb9}, where a reward is only provided at the end of each trial and is associated with the previous actions.
For example, in the task ``Trace Back,'' the final reward depends on the first two actions.
Each task is run with 100 random seeds.
In the task ``Choice,''  the stochastic reward depends on the first action at the beginning, and
the final reward depends on the first two actions.
Please refer to the RUDDER paper \citep{NEURIPS2019_16105fb9} for more details.

\subsubsection{Implementation Details of the Algorithms}

The following methods are compared:
\begin{itemize}
    \item RUDDER with reward redistribution for $Q$-value estimation, and RUDDER applied on top of $Q$-learning.
    \item $Q$-learning with eligibility traces according to Watkins ($Q(\lambda)$).
    \item  SARSA with eligibility traces (SARSA($\lambda$)).
    \item Monte Carlo.
\end{itemize}

The algorithms are evaluated until the task is solved. In all experiments, an $\epsilon$-greedy policy with $\epsilon=0.2$ is adopted. 
For our \highwayQLearningFull/,
the number of lookahead policies is set to $3$.
the \stepSetText/ $\stepSet$ is set to $\rangeInt[1][N]$, where $N=T-t$ represents the length until the end ($T$ is the length of the trajectory, and $t$ the timestep of the current state). The number of epochs for rolling out the policy is set to $\epochRolloutPolicy=1$. The number of epochs for updating the value function is set to $\epochUpdateValueFunction=T$.

For RUDDER, we use the default setting of \citep{NEURIPS2019_16105fb9}.
For Q($\lambda$) and SARSA($\lambda$), the hyperparameter of eligibility traces is $\lambda=0.9$. For Q($\lambda$), we use Watkins' implementation.

For Monte Carlo (MC), the Q-values are calculated as the exponential moving average of the episode return.

\subsection{Experiments with Model-Free Algorithms in MinAtar Tasks}\label{ap_sec_experiment_minatar}

\subsubsection{Details of the Environments}
This paper uses five games in MinAtar:
Asterix, Breakout, Freeway, Seaquest, and Space Invaders \citep{Kyo}.
To increase the challenge of reward delay, we introduce new \emph{MinAtar-Delay} tasks, where only the total score is provided at the end of the game, with no intermediate rewards during the agent's interaction.
To ensure that the reward function satisfies the Markov property, the delayed count, representing the number of nonzero original rewards that are delayed, is encoded within the state representation using one hot encoding.

\subsubsection{Implementation Details of the Algorithms}

\begin{table*}[t]
\centering
\begin{tabular}{cc}
\toprule
\textbf{Hyperparameter}                  & \textbf{Value}                                                                                                                                                                                                                       \\ \midrule
Optimizer                       & RMSprop                                                                                                                                                                                                                     \\ \midrule
Batch size                      & 32                                                                                                                                                                                                                          \\ \midrule
Gradient Clipping               & \begin{tabular}[c]{@{}c@{}} 1\end{tabular}                                                                                                                                                              \\ \midrule
Target network update frequency & \begin{tabular}[c]{@{}c@{}} 1000\end{tabular}                                                                                                                                                         \\ \midrule
Hidden layers of Q network       & \multicolumn{1}{l}{\begin{tabular}[c]{@{}l@{}}
 \hspace*{0.2in}      Conv. Layer(out channels=16, kernel size=3, stride=1) \\
 \hspace*{0.2in}      Linear Layer(out dimension=128)
\end{tabular}} 
\\ \midrule
Activation function             & ReLU                                                                                                                                                                                                                        \\ \midrule
Buffer size                     & \begin{tabular}[c]{@{}c@{}} 10$^5$\end{tabular}                                                                                                                                                    \\ \midrule
Discount factor $\gamma$        & 
\multicolumn{1}{l}{\begin{tabular}[c]{@{}l@{}}
\hspace*{0.2in}      0.996 (for MinAtar-Delay) \\
 \hspace*{0.2in}     0.99 (for MinAtar)
\end{tabular}} 
\\ \midrule
Exploration rate $\epsilon$     & Linearly decay starting from $1.0$ to $0.1$                      
\\ \midrule
Learning rate     & $2.5 \times 10 ^{-4}$                      \\ 
\bottomrule
\hspace*{0.2in}
\end{tabular}

\caption{
Hyperparameters of the implemented algorithms.
The hyperparameters and settings of the neural networks are those in the Maxmin DQN paper \citep{Lan2020Maxmin}. 
}
\label{ap_tab_hyperparameters_minatar}
\end{table*}

We compare our \highwayDQN/ to several advanced multi-step off-policy methods, including \emph{Retrace($\lambda$)} \citep{munos2016safe} and \emph{Multi-step DQN}.
All multistep methods, including our \highwayDQN/, are implemented on top of the open-source code of \emph{Maxmin DQN} \citep{Lan2020Maxmin, Explorer}.
The optimal number of target networks, which is a hyperparameter, is chosen from the set $\{1, 2,4\}$.
To further alleviate the overestimation issue, we replace the maximization with the expectation w.r.t. the $\epsilon$-greedy distribution $d$ when taking the values from the $Q$ function for computing the Q target, i.e., $\E_{ a_n' \sim d }[ Q_{\theta'}(s_{n'}, a_n') ]$.
We reuse the hyperparameters and settings of the neural networks from the Maxmin DQN paper \citep{Lan2020Maxmin}, which are also detailed in \Cref{ap_tab_hyperparameters_minatar}.

\subsubsection{Additional Results of Retrace($\lambda$)}
\Cref{fig_retrace_IS} shows the weights of the product of the clipped importance sampling ratios of Retrace($\lambda$).
As shown in the figure, these ratios exponentially decrease as the lookahead depth increases.
This can result in information loss when the reward is significantly delayed.

\begin{figure*}[b]
    \centering
   	\def\widthproperty{0.185}
   	\centerline{
	   	\includegraphics[width=0.4\linewidth]{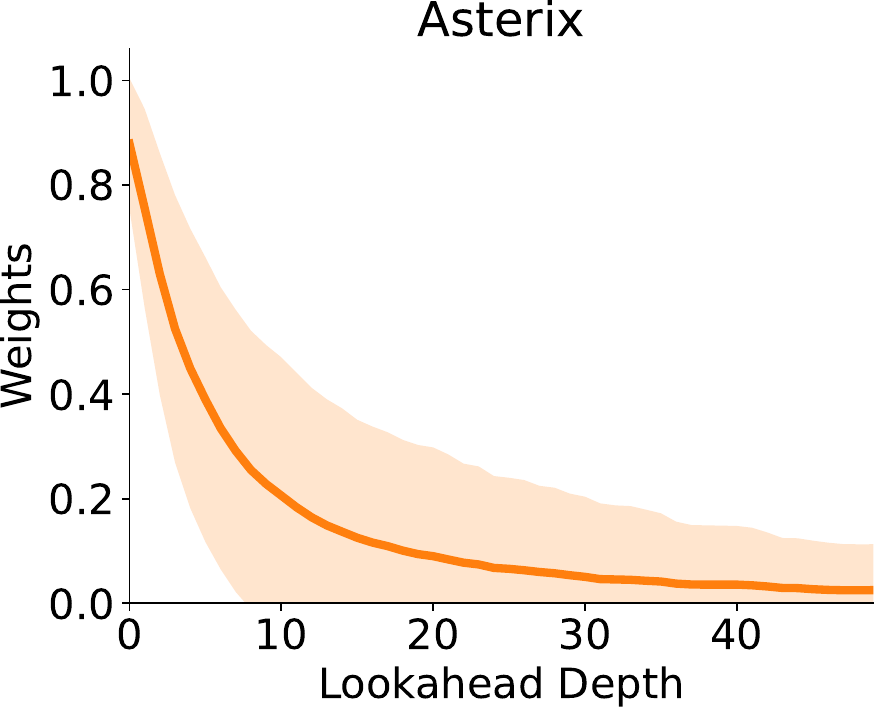}
   	}
    \caption{
    The weights of the product of the clipped importance sampling ratios of Retrace($\lambda$) as a function of the lookahead depth (bootstrapping step).
    }\label{fig_retrace_IS}
\end{figure*}

\subsubsection{Run Time}

In practice, we employ several measures to accelerate the computation of Highway DQN using GPU parallel computing. 
Firstly, we pre-compute and cache the $n$-step accumulated rewards $\sum_{n=0}^{N-1} \gamma^n r_{t+n}$ for each reward sequence $(r_t, r_{t+1}, \cdots, r_{t+N-1})$, which is accomplished through GPU parallel computing.
Secondly, the $n$-step returns and the softmax/max operation are implemented using matrix computation techniques, which are known to be fast on GPUs.

The running wall-clock times for the algorithms are as follows:
\emph{\highwayDQN/}: 8.9 h;
\emph{Retrace($\lambda$)}: 29.7 h;
\emph{Multi-step DQN ($N=4$)}: 5.6h;
\emph{DQN}: 4.3h.
Retrace($\lambda$) takes the most running time because it requires computing the product of importance sampling per timestep.
All results are obtained using the same experimental setup. The testing platform utilized is a workstation with 32GB memory and an NVIDIA RTX 2080 TI GPU.

\section{Limitations}
The theoretical analysis for our proposed operator focuses on the model-based case.
More work is required for a theoretical analysis of the algorithm's behavior in the model-free case, especially when the value function is approximated by a non-linear neural network.
Besides, we have empirically shown that our method is more efficient than the advanced Importance Sampling-based method Retrace($\lambda$) in environments with {greatly} delayed rewards.
More research is needed, however, to understand how the variance of our method compares to that of methods like Retrace($\lambda$). 

In practice, our algorithm balances the trade-off between accuracy and sample efficiency by deciding the number of trials per policy, the size of the search space (of behavioral policies and lookahead depths), and the softmax temperature.
While more trials per policy may improve the estimation accuracy, they may also cost more samples and reduce sample efficiency. 
On the other hand, while a larger search space may increase efficiency, it  might incur overestimation issues when the estimate is biased, leading to high variance.

\end{document}